\DeclareMathOperator*{\argmin}{arg\,min}
\newcommand{\sqbr}[1]{\left[ #1 \right]} 
\newcommand{\crbr}[1]{\left\{#1\right\}} 
\newcommand{\nrbr}[1]{\left( #1 \right)} 
\newcommand{\norm}[1]{\left\lVert #1 \right\rVert}
\newcommand{\lmod}{\left\lvert \left\lvert}
\newcommand{\indinf}[1]{\left\lvert \left\lvert \left\lvert #1 \right\rvert \right\rvert \right\rvert_{\infty}}
\newcommand{\rmod}{\right\rvert \right\rvert}
\def\P{{\mathbb{P}}} 
\def\E{{\mathbb{E}}}
\theoremstyle{plain}
\newtheorem{theorem}{Theorem}[section]
\newtheorem{lemma}[theorem]{Lemma}
\theoremstyle{definition}
\newtheorem{assumption}[theorem]{Assumption}
\theoremstyle{remark}
\title{A Theoretical Study of The Effects of Adversarial Attacks on Sparse Regression}
\author{
	Deepak Maurya \\ 
	Department of Computer Science\\
	Purdue University \\ 
	West Lafayette, Indiana, USA \\ 
	\texttt{dmaurya@purdue.edu} \\
	\and
	Jean Honorio \\ 
	Department of Computer Science\\
	Purdue University \\ 
	West Lafayette, Indiana, USA \\ 
	\texttt{jhonorio@purdue.edu} \\
}
\date{}
\begin{document}
\maketitle

\begin{abstract}
This paper analyzes $\ell_1$ regularized linear regression under the challenging scenario of having only adversarially corrupted data for training. We use the primal-dual witness paradigm to provide provable performance guarantees for the support of the estimated regression parameter vector to match the actual parameter. Our theoretical analysis shows the counter-intuitive result that an adversary can influence sample complexity by corrupting the irrelevant features, i.e., those corresponding to zero coefficients of the regression parameter vector, which, consequently, do not affect the dependent variable. As any adversarially robust algorithm has its limitations, our theoretical analysis identifies the regimes under which the learning algorithm and adversary can dominate over each other. It helps us to analyze these fundamental limits and address critical scientific questions of which parameters (like mutual incoherence, the maximum and minimum eigenvalue of the covariance matrix, and the budget of adversarial perturbation) play a role in the high or low probability of success of the LASSO algorithm. Also, the derived sample complexity is logarithmic with respect to the size of the regression parameter vector, and our theoretical claims are validated by empirical analysis on synthetic and real-world datasets.
\end{abstract}

\section{Introduction}
\label{sec:intro}
A well-known instance of the failure of machine learning (ML) models is when they are confronted with adversarial attacks. The vulnerability of ML models to possibly small perturbations  imperceptible to the human eye in input features such as one-pixel attacks \cite{su2019one}  may produce inaccurate predictions with high confidence \cite{szegedy2013intriguing, goodfellow2014explaining, madry2017towards}. This challenges the practical utility of ML models for critical applications demanding precisely correct predictions such as medical diagnosis \cite{lanfredi2019adversarial}, biometric verification systems \cite{fredrikson2015model}, object detection in autonomous driving \cite{eykholt2018robust}. Hence, demystifying empirical failure with theoretical analysis to design certifiable learning algorithms has been an active area of research recently. 

Most of the well-analyzed theoretical defense mechanisms \cite{zhai2019adversarially, yin2019rademacher, cohen2019certified} and empirical approaches \cite{carlini2017towards, wong2018provable, raghunathan2018semidefinite, kurakin2018adversarial} are focused on analyzing the generalization of adversarial loss functions assuming uncorrupted data is available for training. These
approaches employ the adversarial training method to improve the predictions with perturbed inputs, which may cause a diminished performance for unperturbed inputs \cite{carmon2019unlabeled, javanmard2020precise}. 
Taking the adversarial training paradigm forward, 
we consider a more challenging and interesting problem: learning from adversarially corrupted
data with \emph{no access} to noise-free measurements of regressors and dependent variable in a sparse linear regression model. 

Some of the existing works in the literature analyze adversarial loss functions in an epsilon radius ball centered around noise-free measurements \cite{xing2021adversarially, yin2019rademacher, awasthi2020adversarial,qin2021random}. Carrying this idea forward, we study the effect of such epsilon ball perturbation \cite{tong2018adversarial, balda2019perturbation, gilmer2019adversarial} on sample complexity. 
To elaborate on this, consider a sparse linear regression model which has a few non-zero coefficients in the parameter vector. 
It may seem obvious for an adversary to spend its limited epsilon budget per sample on influencing the regressors corresponding to non-zero coefficients in the regression parameter vector, as only those perturbations will affect the dependent variable. A few recent papers in the literature \cite{raghunathan2018certified,goodfellow2014explaining} have also recommended this, which design adversarial perturbation by focusing on maximizing the loss function only and ignoring the underlying sparsity in the model. But our analysis (Lemma \ref{lem:m1_bound}) shows the counter-intuitive result that an adversary can influence the sample complexity by affecting the irrelevant features (i.e., those corresponding to zero coefficients of the regression parameter vector) as well. 

Our work also examines the effect of various adversarial parameters on sample complexity under minimal assumptions on adversarial perturbation. We assume that the adversary can design perturbations (e.g., a random sub-Gaussian vector) whose covariance matrix may not necessarily be diagonal and may be dependent on the data. Our theoretical analysis shows the dependence of sample complexity on both the minimum and maximum eigenvalue of the covariance matrix of adversarial perturbations. In fact, our study shows that the adversary may increase the number of required samples for successful support recovery by decreasing the minimum eigenvalue (Lemma \ref{lem:sampleHess_psd} or Eq. \eqref{eq:mineig_Ex}) and/or increasing the maximum eigenvalue of the covariance matrix of adversarial perturbation (Theorem \ref{thm:main} or Eq. \eqref{eq:lambda_f1}). More interestingly, the adversary can also increase the sample complexity by designing the covariance matrix of adversarial perturbations such that its eigenvector corresponding to the maximum eigenvalue is parallel to the regression parameter vector (discussed after Eq. \eqref{eq:lambda_f1}).

\paragraph{Our Contributions:}
Our key contributions are summarized below: 
\begin{itemize}[leftmargin=0.45 cm]
	\setlength\itemsep{-1 pt}
	\item \textbf{Novel Problem Formulation:} To the best of our knowledge, we are the first to define the support recovery problem for a sparse linear regression model under adversarial attacks assuming the availability of corrupted data for model learning. To this end, we define a novel generative model for adversarial training data.
	\item \textbf{Identifying fundamental limitations:} As any robust algorithm has limitations, our theoretical analysis derives the conditions under which the adversary can dominate the learning algorithm. We also derive the conditions in which the learning algorithm can mitigate the adversary despite its attack of any malicious form.
	\item \textbf{Sample Complexity:} The support recovery problem is well explored for the non-adversarial regime, 
	where \cite{wainwright2009sharp} showed the sample complexity of  $\Omega(k \log(p))$, where $p$ denotes the size of the regression parameter vector and $k$ denotes the number of non-zero entries. In this work, we extend the primal-dual witness paradigm \cite{ravikumar2010high,ravikumar2011high, ravikumar2009sparse,daneshmand2014estimating} to a model 
	under adversarial attacks. 
	In this novel problem, we show the sample complexity of $ \Omega\left(k^2 \log(p)\right)$ (Theorem \ref{thm:main}). If we assume the adversarial perturbation to be Gaussian instead of sub-Gaussian, the sample complexity improves to  $ \Omega\left(k \log(p)\right)$ under the adversarial setting (Appendix \ref{sec:proof_Gaussadv}).
	\item \textbf{Theoretical tools:} Our contribution can be seen as a first step  towards the study of learning from adversarial training data. As a byproduct, we also obtain several technical results related to a new concentration inequality (Theorem \ref{thm:B_2}) and a projection matrix (Lemma \ref{lem:Proj_mat}), which could be useful for other problems.
	\item \textbf{Empirical validation:} We also verify our theoretical findings through experiments on synthetic and real-world data where the adversarial perturbation can be dependent on the data. 
\end{itemize}

\section{Preliminaries and Notation}
\label{sec:prelim}
In this section, we briefly discuss the existing work formally. Let $\left(\mathbf{x}^{\star(j)}, y^{\star(j)}\right)$ denote the $j^{th}$ \textit{noise-free} 
sample coming from the linear model: $y^{\star(j)} = \mathbf{w}^{\star \intercal}\mathbf{x}^{\star(j)}$ where $y^{\star(j)}  \in \mathbb{R}$, $\mathbf{x}^{\star(j)} \in \mathbb{R}^p$, and $\mathbf{w}^{\star} \in \mathbb{R}^p$, where $\mathcal{S} = \mathcal{S}(\mathbf{w}^{\star})$ denotes the support of $\mathbf{w}^{\star}$, i.e., the set of indices corresponding to non-zero entries of $\mathbf{w}^{\star}$. Let $k = |\mathcal{S}|$ denote the cardinality of the support and hence $p - k = |\mathcal{S}^c|$. \cite{wainwright2009sharp} considers the following model where \textit{only} $y^{\star(j)} $ is corrupted 
\begin{align}
y^{(j)}  = y^{\star(j)} + e_y^{(j)}
\end{align}
where $e_y$ is the zero-mean noise. Given $n$ samples of corrupted  ${y}^{(j)}$ and uncorrupted ${\mathbf{x}}^{\star (j)}$ for $j \in [n]$, the regression parameter vector is estimated by solving the least absolute shrinkage and selection operator (LASSO) problem: 
\begin{align}
\hat{\mathbf{w}} = \argmin_{\mathbf{w} \in \mathbb{R}^p} \frac{1}{2n} \lmod\mathbf{y} - \mathbf{X}^{\star} \mathbf{w}\rmod_2^2 +  \lambda ||\mathbf{w}||_1 \label{eq:lasso}
\end{align}
where $\lambda > 0$ is the regularization parameter, $\mathbf{y} \in \mathbb{R}^n$ and $\mathbf{X}^{\star} \in \mathbb{R}^{n \times p}$ are a collection of $n$  samples of $y^{(j)}$ and $\mathbf{x}^{\star(j)} $ for $j \in [n]$,  respectively.  In the support recovery problem, we are interested in theoretical guarantees for $\hat{\mathbf{w}} $ to be a unique solution of the optimization problem in Eq. \eqref{eq:lasso}, $\mathcal{S}(\hat{\mathbf{w}}) = \mathcal{S}(\mathbf{w}^{\star})$, and $\text{sign}(\hat{\mathbf{w}}_i) = \text{sign}(\mathbf{w}^{\star}_i)$ for $i \in \mathcal{S}$. The minimum number of samples required to ensure these properties is $n = \Omega(k \log(p))$ \cite{wainwright2009sharp}. The model discussed in this section is a special case of our novel proposed model in the next section (Eq. \eqref{eq:ex_assum} with $\mathbf{e}_\mathbf{x} = \mathbf{0}$). Our goal in this paper is to derive the sample complexity for support recovery of a more generalized model. Before proceeding on to that discussion, we introduce the notations used in this paper. 

\textbf{Notation}: We use a lowercase letter, e.g., $a$ to denote a scalar, a lowercase  bold letter such as $\mathbf{a}$ to denote a vector, and an uppercase bold letter such as $\mathbf{A}$ to denote a matrix. A vector $\mathbf{1}_m$ or $\mathbf{0}_m$ represents a vector of ones or zeros respectively, of size $m$. We denote a set with calligraphic alphabet, e.g., $\mathcal{P}$. Also, $[n]$ denotes the set $\left\{1,2,\ldots, n\right\}$. For a vector, $\mathbf{a}_i$ denotes the $i^{\text{th}}$ entry of the vector $\mathbf{a}$. For a matrix $\mathbf{A} \in \mathbb{R}^{p \times q}$, we represent the sub-matrix with rows $\mathcal{P} \subseteq [p]$ and columns $\mathcal{Q} \subseteq [q]$ as $\mathbf{A}_{\mathcal{P}\mathcal{Q}} \in \mathbb{R}^{|\mathcal{P}| \times |\mathcal{Q}|}$. For a vector vector $\mathbf{a} \in \mathbb{R}^m$, we denote the $\ell_p$ norm as $\lmod \mathbf{a}\rmod_{p} = \left(\sum_{i = 1}^{m} |\mathbf{a}_i|^p\right)^{\frac{1}{p}}$. Similarly $\lmod \mathbf{A}\rmod_{\infty}$ denotes the entrywise $\ell_{\infty}$ norm of a matrix $\mathbf{A}$ and $\lmod \mathbf{A}\rmod_{\infty}$ denotes the spectral norm. We denote the induced $\ell_{\infty}$ norm using $\lmod \left| \mathbf{A}\right|\rmod_{\infty} = \max_{i \in [m]} \lmod \mathbf{a}_i\rmod_1$ for a matrix $\mathbf{A} \in \mathbb{R}^{m \times k}$, where $\mathbf{a}_i$ denotes its $i^{\text{th}}$ row. Minimum and maximum eigenvalues of a matrix $\mathbf{A}$ are denoted by $\Lambda_{\text{min}} \left(\mathbf{A}\right)$, $\Lambda_{\text{max}} \left(\mathbf{A}\right)$. A function  $f(m) = \Omega(g(m))$ implies that there exists a constant $c_1$ such that $f(m) \geq c_1g(m), \forall m\geq m_0$. Similarly, $f(m) = \mathcal{O}(g(m))$ denotes that there exists a constant $c_2$ such that $f(m) \leq c_2g(m), \forall m\geq m_0$.  For a vector $\mathbf{w} \in \mathbb{R}^p$, $\mathcal{S}(\mathbf{w}) = \{i \in [p],\mathbf{w}_i \neq 0 \}$ denotes the support of $\mathbf{w} $ and similarly $\mathcal{S}^c(\mathbf{w}) = [p] \setminus \mathcal{S}(\mathbf{w}) $ denotes the non-support. 
\section{A Novel Generative Model For Adversarial Training Data}
\label{sec:novelgen}
In this section, we propose a novel generative model for adversarial training. In our model, we assume the adversary has attacked the system and disturbed the noise-free features $\mathbf{x}^{\star(j)}$ as shown below:
\begin{align}
{y}^{(j)}  = {y}^{\star (j)} + e_y^{(j)}, \qquad 
{\mathbf{x}}^{(j)}  = {\mathbf{x}}^{\star(j)} + \mathbf{e}^{(j)}_{\mathbf{x}}\label{eq:ex_assum}
\end{align} 
We assume only adversarially corrupted data, that is $n$ independent samples of $\left\{{y}^{(j)}, {\mathbf{x}}^{(j)} \right\}$ for $j \in [n]$ are available for training, which makes the problem challenging as compared to existing works which assume availability of uncorrupted features $\mathbf{x}^{\star (j)}$. Let $\mathbf{X} \in \mathbb{R}^{n \times p}$ be the collection of $n$ samples of $\mathbf{x}^{(j)}$ for $j \in [n]$. For brevity, we may drop the superscript $(j)$ later. Let the population covariance matrix of $\mathbf{X}^{\star}$ be denoted by $\mathbf{\Sigma}$. We assume that $\frac{\mathbf{x}^{\star}_i}{\mathbf{\sqrt{\Sigma}}_{ii}}$ is a zero mean sub-Gaussian random variable with  variance proxy parameter $\sigma^2$ for $i \in [p]$. We assume $e_y$ is a zero mean sub-Gaussian random variable with  variance proxy parameter $\sigma^2_{e_y}$.  

Proceeding to adversarial perturbation, it should be obviously bounded by some budget for each sample \cite{zhai2019adversarially, balda2019perturbation, yin2019rademacher, cohen2019certified} so that the underlying model is learnable. We model this fixed budget to be a parameter in our analysis, and hence the performance guarantees can be analyzed as a function of this parameter if interested. As the adversarial perturbation is bounded, it can be considered as sub-Gaussian random variable.  Also, it can be dependent on uncorrupted regressors, $\mathbf{x}^{\star}$ for each sample. To clarify,  $\mathbf{e}^{(i)}_{\mathbf{x}}$ can be dependent on uncorrupted regressors $\mathbf{x}^{\star (i)} $, where $i$ denotes $i^{\text{th}}$ sample. But $\mathbf{e}^{(i)}_{\mathbf{x}}$ is independent of $\mathbf{x}^{\star(j)} $, if $i \neq j$, where $j$ denotes another sample. Note that the adversary does not have control over uncorrupted regressors, $\mathbf{x}^{\star}$ but has access to $\mathbf{x}^{\star}$, which can be used to design $\mathbf{e}_{\mathbf{x}}$ in an arbitrary manner (that may make the support recovery problem most challenging).
\subsection{Notations and Assumptions}
\label{sec:assumpt}
In this section, we define a few quantities and briefly discuss their utility or effect on sample complexity. For example, assumption \ref{assum:Sigma_Xpsd} and \ref{assum:mutual_incoh} are critical to guarantee uniqueness of solution and identifiability in support recovery problem, as seen in the literature \cite{wainwright2009sharp, ravikumar2011high,ravikumar2010high,ravikumar2009sparse,daneshmand2014estimating}. We start the discussion on the first assumption, which avoids trivial cases to make the support recovery problem more challenging. 

It may look obvious for an adversary to spend its fixed budget for a sample to disturb the support entries ($\mathbf{x}_{\mathcal{S}}$) only since $y^{\star(j)} = \mathbf{w}^{\star \intercal}\mathbf{x}^{\star(j)} =  \mathbf{w}_{\mathcal{S}}^{\star \intercal}\mathbf{x}_{\mathcal{S}}^{\star(j)}$. We assume the adversary attacks the non-support entries ($\mathbf{x}_{\mathcal{S}^c}$) as well, to make the learning task of estimating $\mathcal{S}$ and $\mathcal{S}^c$ tougher, which is formalized in the following lemma.
\begin{lemma}
	\label{lem:zero_mean}
	If the adversary attacks only the support entries ($\mathbf{x}_{\mathcal{S}}$) or the non-support entries ($\mathbf{x}_{\mathcal{S}^c}$) with non-zero mean of adversarial perturbation, then the learner can guess the support trivially with probability at least  $1 - \mathcal{O}\left( \frac{1}{p} \right)$ if $n = \Omega\left(\log (p)\right)$. 
\end{lemma}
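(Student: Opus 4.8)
The plan is to exploit that a non-zero-mean perturbation shifts the first moment of exactly one of the two coordinate blocks, so the learner can separate $\mathcal{S}$ from $\mathcal{S}^c$ by thresholding the empirical per-coordinate mean. Recall that by assumption $\E[\mathbf{x}^{\star}_i]=0$ for every $i\in[p]$. Hence, if the adversary perturbs only the support block (the non-support case being symmetric), then for $i\in\mathcal{S}$ we have $\E[\mathbf{x}_i]=\E[(\mathbf{e}_{\mathbf{x}})_i]=:\mu_i\neq 0$, whereas for $i\in\mathcal{S}^c$ we have $\E[\mathbf{x}_i]=0$. First I would form the statistic $\bar{\mathbf{x}}_i=\frac{1}{n}\sum_{j=1}^n\mathbf{x}_i^{(j)}$ for each $i\in[p]$ and declare $i$ to be in the attacked block whenever $|\bar{\mathbf{x}}_i|>\mu_{\min}/2$, where $\mu_{\min}:=\min_{i:\mu_i\neq 0}|\mu_i|>0$ is treated as a constant of the instance.

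The second step is a routine concentration argument. Since $\mathbf{x}^{\star}_i$ is zero-mean sub-Gaussian and the bounded (hence sub-Gaussian) perturbation $(\mathbf{e}_{\mathbf{x}})_i$ contributes another sub-Gaussian term, the centered variable $\mathbf{x}_i-\E[\mathbf{x}_i]$ is sub-Gaussian with a finite variance proxy $\tau^2$, obtained from the triangle inequality for sub-Gaussian norms; this remains valid even when the perturbation depends on $\mathbf{x}^{\star}$, since only the one-dimensional marginal tail is used. The $n$ samples are independent, so Hoeffding's bound for sub-Gaussian averages yields $\P\!\nrbr{|\bar{\mathbf{x}}_i-\E[\mathbf{x}_i]|\ge \mu_{\min}/2}\le 2\exp\!\nrbr{-n\mu_{\min}^2/(8\tau^2)}$ for each $i$. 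A union bound over the $p$ coordinates then guarantees that, simultaneously, every zero-mean coordinate has $|\bar{\mathbf{x}}_i|<\mu_{\min}/2$ and every non-zero-mean coordinate has $|\bar{\mathbf{x}}_i|>\mu_{\min}/2$, with failure probability at most $2p\exp\!\nrbr{-n\mu_{\min}^2/(8\tau^2)}$. Forcing this to be $\mathcal{O}(1/p)$ requires $n\ge \frac{8\tau^2}{\mu_{\min}^2}\nrbr{\ln 2 + 2\ln p}$, i.e.\ $n=\Omega(\log p)$, which matches the claimed complexity and success probability $1-\mathcal{O}(1/p)$.

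The threshold test recovers the unordered partition $\crbr{\mathcal{S},\mathcal{S}^c}$; naming the support is then immediate, since in the support-attack scenario the non-zero-mean block is exactly $\mathcal{S}$, while in the non-support-attack scenario it is exactly $\mathcal{S}^c$ and its complement is $\mathcal{S}$, so $\mathcal{S}$ is identified in both cases. If one does not assume the learner knows which scenario holds, the tie can be broken by a simple model-consistency check against $y$, but this is a minor point. I expect the only real obstacle to be the bookkeeping around the separation constant: the statement is meaningful only once $\mu_{\min}$ is bounded away from zero independently of $p$, and one must verify that the adversarial budget simultaneously permits a non-negligible mean shift and keeps the variance proxy $\tau^2$ finite. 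Everything else is standard sub-Gaussian concentration plus a union bound.
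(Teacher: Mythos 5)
Your proof is correct and takes essentially the same route as the paper's: both form the per-coordinate empirical means, apply sub-Gaussian tail bounds with a union bound over the $p$ coordinates, and threshold to separate the attacked block from the unattacked one, giving $n = \Omega(\log p)$ for success probability $1 - \mathcal{O}\left(\frac{1}{p}\right)$. Your additional remarks on naming which block is $\mathcal{S}$ and on the separation constant $\mu_{\min}$ address points the paper leaves implicit (it assumes a uniform mean shift $\mu_1 \mathbf{1}_k$ or $\mu_2 \mathbf{1}_{p-k}$ per block and thresholds at $\max(|\mu_1|,|\mu_2|)/3$ rather than $\mu_{\min}/2$), but these are cosmetic differences, not a different argument.
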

The proof of the above lemma relies on the fact that the sample mean of feature vector $\mathbf{x}^{(j)}$ for $j \in [n]$ will be close to the population mean. If only the support is attacked, the population mean is away from zero for entries in the support, and is zero for entries in the non-support.  Hence, the learner can guess the support by just computing the sample mean. In order to avoid such a trivial case, we consider the adversarial perturbation to be zero-mean in the next assumption. 
\begin{assumption}(Adversarial Perturbation)
	\label{assum:ex_subG} 
	$\mathbf{e_x}$ is a zero mean  sub-Gaussian random vector with parameters $(\mathbf{\Sigma_e}, r^2)$, where $\mathbf{\Sigma_e}$ is the population covariance matrix of the adversarial perturbation ($\mathbf{e}_{\mathbf{x}}$) and $r^2$ is the variance proxy parameter 
	for some $r \in \mathbb{R}$.
\end{assumption}
In the above definition, we require that $\mathbb{E}\left(\exp\left(\bm{\alpha}^{\intercal} \mathbf{e_x}\right)\right)\leq \exp\left(\frac{ \bm{\alpha}^{\intercal}\mathbf{\Sigma}_{\mathbf{e}} \bm{\alpha} r^2 }{2}\right)$, for all $\bm{\alpha} \in \mathbb{R}^p$ which is akin to the classical definition \cite{hsu2012tail} as discussed briefly in Appendix \ref{appn:ex_subG}.

Let the population covariance of $\mathbf{X}^{\star}$ be denoted by $\mathbf{\Sigma} = \mathbb{E}\left(\mathbf{X}^{\star \intercal} \mathbf{X}^{\star}\right)$ and the sample covariance matrix of $\mathbf{X}$ be denoted by $\hat{\mathbf{\Sigma}} = \frac{1}{n}\mathbf{X}^\intercal\mathbf{X}$. For the uniqueness of the solution to the problem stated in Eq. \eqref{eq:res_lasso}, we need a submatrix of the sample covariance matrix to be positive definite. But as $\mathbf{X}$ is assumed to be random, we assume the population covariance matrix of $\mathbf{X}^{\star}$ is positive definite as done in literature \cite{wainwright2009sharp, ravikumar2011high,ravikumar2010high,ravikumar2009sparse,daneshmand2014estimating}.
\begin{assumption}(Positive Definiteness)
	\label{assum:Sigma_Xpsd}
	The minimum eigenvalue of the population covariance matrix of $\mathbf{x}^{\star}$ fulfills $\lambda_{\text{min}}\left( \mathbf{\Sigma} \right) = C_{\text{min}} > 0 $. 
\end{assumption}

Similarly, the minimum eigenvalue of the population covariance matrix of adversarial perturbations in support $\mathcal{S}$ is denoted by $D_{\text{min}} \geq 0$. More perturbation in $\mathcal{S}$ compared to $\mathcal{S}^c$ may lead to a larger value of  $D_{\text{min}} $, which helps the learner easily ensure the uniqueness of the solution compared to smaller values of $D_{\text{min}} $. This counter-intuitive result is  discussed in sub-section \ref{sec:uniq}. The minimum eigenvalue of $\mathbf{ \Sigma}_{\mathbf{ex}}$ is denoted by $F_{\text{min}}$, which influences the sample complexity as discussed in Lemma \ref{lem:m2_bound}. Similarly, the maximum eigenvalues of matrices $\{\mathbf{\Sigma}, \mathbf{\Sigma_e}, \mathbf{ \Sigma}_{\mathbf{ex}}\}$ are denoted by $\{C_{\max}, D_{\max}, F_{\max}\}$, whose influence on sample complexity can be seen in Lemma \ref{lem:Xe_ey_lambda}. Further, we make the assumption on mutual incoherence, which basically implies that the regressors in the non-support $\mathcal{S}^c$ do not have a strong correlation with regressors in the support $\mathcal{S}$. 
\begin{assumption}(Mutual Incoherence)
	\label{assum:mutual_incoh}
	For some $\gamma \in (0,1]$, 
		$\left| \lmod \mathbf{\Sigma}_{\mathbf{x}_{\mathcal{S}^c} \mathbf{x}_\mathcal{S}} \left( \mathbf{\Sigma}_{\mathbf{x}_\mathcal{S} \mathbf{x}_\mathcal{S}} \right)^{-1} \rmod\right|_{\infty} \leq 1-\gamma$. 
\end{assumption}
This assumption is not restrictive and has been used in various works related to support recovery \cite{wainwright2009sharp, ravikumar2011high,ravikumar2010high,ravikumar2009sparse,daneshmand2014estimating}. In addition, we also assume $\indinf{ \nrbr{\mathbf{\Sigma}_{\mathbf{x}_\mathcal{S} \mathbf{x}_\mathcal{S}}}^{-1}} = G_{\text{max}} \leq \infty$ 
and $\indinf{\mathbf{\Sigma}_{\mathbf{x}_{\mathcal{S}^c} \mathbf{x}_\mathcal{S}}} = H_{\max} \leq \infty$. 

We refer to $(r,\gamma,F_{\text{min}},F_{\text{max}},D_{\text{min}},D_{\text{max}},G_{\text{max}},H_{\text{max}})$ as the budget given to the adversary for corrupting each sample. With a brief discussion of our assumptions in this section, we present our main theoretical result in the next section. 

\section{Our Theoretical Analysis}
\label{sec:main}
In this section, we discuss the main theorem for the proposed support recovery problem mentioned in the previous section. In a nutshell, the theorem implies a sample complexity of $n = \Omega\left( k^2 \log(p)\right)$ for correct support recovery with high probability. We provide formal and intuitive implications of the proposed theorem first and further present its proof. 

The Lasso problem under the adversarial setting can be stated as:
	\begin{align}
	\hat{\mathbf{w}} = \argmin_{\mathbf{w} \in \mathbb{R}^p} l(\mathbf{w}) + \lambda ||\mathbf{w}||_1 = \argmin_{\mathbf{w} \in \mathbb{R}^p} \frac{1}{2n} ||\mathbf{y} - \mathbf{X} \mathbf{w}||_2^2 +  \lambda ||\mathbf{w}||_1 \label{eq:res_lasso}
	\end{align}
Our main theoretical result for the above problem is as follows.
\begin{theorem}
	\label{thm:main}
	If $n = \Omega(k^2 \log(p))$, assumption \ref{assum:ex_subG}, \ref{assum:Sigma_Xpsd}, and  \ref{assum:mutual_incoh} hold, and 

		\begin{align*}
		\lambda \geq \max\crbr{\frac{16b}{\gamma}, \frac{q_1 \sigma_{e_y}}{\gamma} \sqrt{\frac{2\log(p)}{n}}, \frac{16q}{\gamma}\sqrt{\frac{4\log(p)}{n}}} 
		\end{align*}
	where $q= r\sqrt{ \mathbf{w}^{\star\intercal}_{\mathcal{S}} \mathbf{\Sigma}_{\mathbf{e}_{\mathcal{S} \mathcal{S}}} \mathbf{w}^{\star}_{\mathcal{S}}}   \max\limits_{i \in [p]} \nrbr{\sigma \sqrt{\mathbf{ \Sigma}}_{ii} + r \sqrt{\mathbf{ \Sigma_e}}_{ii}}, \; b = \norm{\mathbf{\Sigma}_{\mathbf{x} \mathbf{e}_{\mathcal{S}}} \mathbf{w}^{\star}_{\mathcal{S}}  }_{\infty}, \; q^2_1 = 3  \nrbr{ C_{\max} + 2F_{\max} + D_{\max} }$,	then we claim the following with probability of at least $1 - \mathcal{O}\left( \frac{1}{p} \right)$
	\begin{enumerate}[leftmargin=0.7 cm]
		\setlength\itemsep{-1 pt}
		\item The true support is recovered, i.e., $\mathcal{S}(\hat{\mathbf{w}}) \subseteq \mathcal{S}(\mathbf{w}^{\star})$ or equivalently $\hat{\mathbf{w}}_i = 0, \forall i \notin \mathcal{S}(\mathbf{w}^{\star})$
		\item $\hat{\mathbf{w}}_{\mathcal{S}} $ is the unique solution for the Lasso problem stated in Eq. \eqref{eq:res_lasso}. 
		\item The estimated parameter vector satisfies 
		\begin{align}
		\norm{\hat{\mathbf{w}}_{\mathcal{S}}- \mathbf{w}_{\mathcal{S}}^{\star}}_{\infty} \leq \lambda \nrbr{1 + \frac{\gamma}{4}} \frac{3G_{\max} }{2}  = f(\lambda) \label{eq:thm3rdclaim}
		\end{align}
		\item $\hat{\mathbf{w}}_i \neq 0$ and furthermore $\text{sign}(\hat{\mathbf{w}}_i) = \text{sign}(\mathbf{w}^{\star}_i), \forall i \in \mathcal{S}(\mathbf{w}^{\star}) $ if $\min\limits_{i \in \mathcal{S}}\left|\mathbf{w}_i^{\star}\right| \geq 2 f\left(\lambda\right) $.
		\item Additionally if $\mathbf{\Sigma}_{\mathbf{x} \mathbf{e}_{\mathcal{S}}} = \mathbf{0}_{p \times k}$, and therefore $b = 0$, then $\lambda \geq \mathcal{O}\nrbr{\sqrt{\frac{\log(p)}{n}}}$, statement 1, 2, and 4 still hold true and statement 3 is modified to
		\begin{align*}
		\norm{\hat{\mathbf{w}}_{\mathcal{S}}- \mathbf{w}_{\mathcal{S}}^{\star}}_{\infty} \leq f(\lambda) = \mathcal{O}\nrbr{\sqrt{\frac{\log(p)}{n}}}
		\end{align*}
	\end{enumerate}
\end{theorem}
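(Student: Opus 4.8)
The plan is to run the primal--dual witness (PDW) construction, adapting the classical Lasso support-recovery argument to the corrupted design $\mathbf{X} = \mathbf{X}^\star + \mathbf{E}_{\mathbf{x}}$ (stacking the per-sample perturbations $\mathbf{e}^{(j)}_{\mathbf{x}}$ into $\mathbf{E}_{\mathbf{x}}$). First I would write the subgradient stationarity conditions for Eq.~\eqref{eq:res_lasso}: there exists $\hat{\mathbf{z}} \in \partial\norm{\hat{\mathbf{w}}}_1$ with $\tfrac{1}{n}\mathbf{X}^\intercal(\mathbf{X}\hat{\mathbf{w}} - \mathbf{y}) + \lambda\hat{\mathbf{z}} = \mathbf{0}$. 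The witness is built by forcing $\hat{\mathbf{w}}_{\mathcal{S}^c} = \mathbf{0}$, solving the $\mathcal{S}$-restricted problem for $\hat{\mathbf{w}}_{\mathcal{S}}$ with $\hat{\mathbf{z}}_{\mathcal{S}} = \text{sign}(\mathbf{w}^\star_{\mathcal{S}})$, and then reading off $\hat{\mathbf{z}}_{\mathcal{S}^c}$ from the remaining block. Statements~1 and~2 then reduce to two checks: (i) $\hat{\mathbf{\Sigma}}_{\mathcal{S}\mathcal{S}}$ is invertible, so the restricted solution is unique, and (ii) strict dual feasibility $\norm{\hat{\mathbf{z}}_{\mathcal{S}^c}}_\infty < 1$, which certifies that this witness is the unique global optimum with the correct support. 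The first I would obtain from Assumption~\ref{assum:Sigma_Xpsd} together with Lemma~\ref{lem:sampleHess_psd}, giving $\hat{\mathbf{\Sigma}}_{\mathcal{S}\mathcal{S}} \succ 0$ with high probability once $n = \Omega(k^2\log p)$.

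The heart of the argument is isolating the effective noise. Substituting $\mathbf{y} = \mathbf{X}^\star_{\mathcal{S}}\mathbf{w}^\star_{\mathcal{S}} + \mathbf{e}_y$ and $\mathbf{X}_{\mathcal{S}} = \mathbf{X}^\star_{\mathcal{S}} + \mathbf{E}_{\mathbf{x}_{\mathcal{S}}}$ gives the residual $\mathbf{y} - \mathbf{X}_{\mathcal{S}}\mathbf{w}^\star_{\mathcal{S}} = \mathbf{e}_y - \mathbf{E}_{\mathbf{x}_{\mathcal{S}}}\mathbf{w}^\star_{\mathcal{S}} =: \boldsymbol{\eta}$, so with $\Delta_{\mathcal{S}} := \hat{\mathbf{w}}_{\mathcal{S}} - \mathbf{w}^\star_{\mathcal{S}}$ the witness satisfies
\begin{align*}
\Delta_{\mathcal{S}} &= \hat{\mathbf{\Sigma}}_{\mathcal{S}\mathcal{S}}^{-1}\nrbr{\tfrac{1}{n}\mathbf{X}_{\mathcal{S}}^\intercal\boldsymbol{\eta} - \lambda\hat{\mathbf{z}}_{\mathcal{S}}}, \\
\hat{\mathbf{z}}_{\mathcal{S}^c} &= \hat{\mathbf{\Sigma}}_{\mathcal{S}^c\mathcal{S}}\hat{\mathbf{\Sigma}}_{\mathcal{S}\mathcal{S}}^{-1}\hat{\mathbf{z}}_{\mathcal{S}} + \tfrac{1}{\lambda}\nrbr{\tfrac{1}{n}\mathbf{X}_{\mathcal{S}^c}^\intercal\boldsymbol{\eta} - \hat{\mathbf{\Sigma}}_{\mathcal{S}^c\mathcal{S}}\hat{\mathbf{\Sigma}}_{\mathcal{S}\mathcal{S}}^{-1}\tfrac{1}{n}\mathbf{X}_{\mathcal{S}}^\intercal\boldsymbol{\eta}}.
\end{align*}
The crucial observation, and the source of the quantity $b$, is that $\tfrac{1}{n}\mathbf{X}^\intercal\boldsymbol{\eta}$ is no longer mean-zero in general: since $\mathbf{e}_{\mathbf{x}}$ may be correlated with $\mathbf{x}^\star$, its population mean equals $-\mathbf{\Sigma}_{\mathbf{x}\mathbf{e}_{\mathcal{S}}}\mathbf{w}^\star_{\mathcal{S}}$, whose $\ell_\infty$ norm is exactly $b$. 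I would therefore split every such inner product into this deterministic bias (controlled in $\ell_\infty$ by $b$, $H_{\max}$, and Assumption~\ref{assum:mutual_incoh}) plus a mean-zero fluctuation.

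Next I would bound the fluctuations. The mean-zero pieces are bilinear forms in the sub-Gaussian vectors $\mathbf{x}^\star$, $\mathbf{e}_{\mathbf{x}}$, and $e_y$ (Assumption~\ref{assum:ex_subG}); products of sub-Gaussians are sub-exponential, so I would apply the paper's concentration inequality (Theorem~\ref{thm:B_2}) coordinate-wise, union bound over the $p-k$ non-support indices, and use $n = \Omega(k^2\log p)$ to get deviations of order $\sqrt{\log p/n}$ with probability $1 - \mathcal{O}(1/p)$, with the projection-matrix structure of Lemma~\ref{lem:Proj_mat} handling the combined term $\hat{\mathbf{\Sigma}}_{\mathcal{S}^c\mathcal{S}}\hat{\mathbf{\Sigma}}_{\mathcal{S}\mathcal{S}}^{-1}\tfrac{1}{n}\mathbf{X}_{\mathcal{S}}^\intercal\boldsymbol{\eta}$. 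Feeding these into $\hat{\mathbf{z}}_{\mathcal{S}^c}$, the incoherence term is $\le 1-\gamma$ (a sample-level upgrade of Assumption~\ref{assum:mutual_incoh}) and the remainder is $\le b/\lambda + \mathcal{O}(\sqrt{\log p/n})/\lambda$; the stated choice of $\lambda$ forces both contributions below $\gamma/2$, yielding $\norm{\hat{\mathbf{z}}_{\mathcal{S}^c}}_\infty < 1$ and hence Statements~1--2. Statement~3 follows by bounding $\norm{\Delta_{\mathcal{S}}}_\infty \le \indinf{\hat{\mathbf{\Sigma}}_{\mathcal{S}\mathcal{S}}^{-1}}(\norm{\tfrac{1}{n}\mathbf{X}_{\mathcal{S}}^\intercal\boldsymbol{\eta}}_\infty + \lambda)$ via $G_{\max}$, which reproduces $f(\lambda)$ in Eq.~\eqref{eq:thm3rdclaim}, and Statement~4 is immediate once $f(\lambda) < \tfrac{1}{2}\min_{i\in\mathcal{S}}|\mathbf{w}^\star_i|$ prevents any sign flip. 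For Statement~5 I would simply set $\mathbf{\Sigma}_{\mathbf{x}\mathbf{e}_{\mathcal{S}}} = \mathbf{0}$: this kills the bias, so $b=0$, the $16b/\gamma$ term drops out of the lower bound on $\lambda$, the remaining two terms are both $\mathcal{O}(\sqrt{\log p/n})$, and since $f(\lambda)$ is linear in $\lambda$ the error bound collapses to $\mathcal{O}(\sqrt{\log p/n})$.

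The main obstacle I anticipate is the concentration of the adversarial bilinear forms. Unlike the classical clean-design case, where the only stochastic term $\tfrac{1}{n}\mathbf{X}^{\star\intercal}\mathbf{e}_y$ is a simple sub-Gaussian average, here $\tfrac{1}{n}\mathbf{X}_{\mathcal{S}}^\intercal\mathbf{E}_{\mathbf{x}_{\mathcal{S}}}\mathbf{w}^\star_{\mathcal{S}}$ and $\tfrac{1}{n}\mathbf{E}_{\mathbf{x}}^\intercal\mathbf{E}_{\mathbf{x}_{\mathcal{S}}}\mathbf{w}^\star_{\mathcal{S}}$ are heavier-tailed sub-exponential quadratics whose effective variance proxy carries the factor $\mathbf{w}^{\star\intercal}_{\mathcal{S}}\mathbf{\Sigma}_{\mathbf{e}_{\mathcal{S}\mathcal{S}}}\mathbf{w}^\star_{\mathcal{S}}$ appearing in $q$; extracting the clean $\sqrt{\log p/n}$ rate while respecting the data-dependence of $\mathbf{e}_{\mathbf{x}}$ on $\mathbf{x}^\star$ is precisely what necessitates Theorem~\ref{thm:B_2} and, ultimately, the $k^2$ (rather than $k$) sample complexity. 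For Statement~5 specifically, the remaining care is constant-tracking: with $\lambda$ itself of order $\sqrt{\log p/n}$, the dual-feasibility margin is an $\mathcal{O}(1)$ quantity, so the constants hidden in $q$ and $q_1$ must be verified large enough that the fluctuation contribution stays strictly below $\gamma/2$.
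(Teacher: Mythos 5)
Your proposal follows essentially the same route as the paper's proof: the same primal--dual witness construction, the same decomposition of $\hat{\mathbf{{z}}}_{\mathcal{S}^c}$ into a sample-level incoherence term plus a projection-filtered noise term, the same bias/fluctuation split of $\tfrac{1}{n}\mathbf{X}^{\intercal}(\mathbf{e}_y - \mathbf{E_{x_{\mathcal{S}}}}\mathbf{w}^{\star}_{\mathcal{S}})$ that produces $b$ in the lower bound on $\lambda$, and the same reliance on Theorem \ref{thm:B_2}, Lemma \ref{lem:Proj_mat}, and Lemma \ref{lem:sampleHess_psd} to drive statements 1--5. The only differences are bookkeeping: the paper upgrades the population incoherence to the sample level through the explicit four-term decomposition $m_1+m_2+m_3+m_4$ (budgeting $1-3\gamma/4$ plus $3\gamma/8$ of noise), whereas you budget $1-\gamma$ plus $\gamma/2$, which leads to the same conclusion.
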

The first two claims of the above theorem imply that we can uniquely recover the true support with high probability, assuming we have a sufficient number of samples if we choose a regularization parameter $\lambda$ greater than a certain threshold. Note that choosing a very large value of $\lambda$ is not desirable as it would also increase the upper bound for $\lmod \hat{\mathbf{w}}_{\mathcal{S}}  - \mathbf{w}_{\mathcal{S}}^{\star} \rmod_{\infty}$, as per the above theorem (Eq. \eqref{eq:thm3rdclaim}). The fourth claim of the theorem states that the minimum magnitude among the support entries of the regression parameter vector should be greater than a certain function of $\lambda$ for correct sign recovery. 

Note that the adversary can increase the lower bound of $\lambda$. For example, the adversary can increase $b$, which can increase $\lambda$, resulting in higher value of $f(\lambda)$ in Eq. \eqref{eq:thm3rdclaim}. This may lead to violation of requirement in statement 4, $\min_{i \in \mathcal{S}}\left|\mathbf{w}_i^{\star}\right| \geq 2 f\left(\lambda\right) $, if the smallest entry in $\mathbf{w}^{\star}_{\mathcal{S}}$ is not large enough. In statement 5 with $b = 0$, such situation can be avoided by increasing the value of $n$ appropriately despite any efforts from adversary. This helps us to identify different regimes under which we can provide theoretical guarantees of LASSO for successful support recovery under adversarial attack and also the case which may be favorable to adversary. This is discussed in detail towards the end of section \ref{sec:w_bound}. 

\paragraph{Proof Sketch.} We use a constructive proof technique: primal-dual witness (PDW) \cite{wainwright2009sharp, ravikumar2011high,ravikumar2010high,ravikumar2009sparse,daneshmand2014estimating} method to prove Theorem \ref{thm:main}. 
The proof outline is summarized below:
\begin{itemize}[leftmargin=0.25 cm]
	\setlength\itemsep{0 pt}
	\item The PDW framework starts by allowing us to find sufficient conditions to estimate the elements of the non-support ($\mathcal{S}^c$) first by ensuring strict dual feasibility (Section \ref{sec:kkt_cond}). This step ensures that we correctly recover the zeroes, i.e., $\hat{\mathbf{w}}_i = 0$ for all $i \notin \mathcal{S}$. 
	This establishes the first claim of exact support recovery in Theorem \ref{thm:main}. 
	\item We then derive the sufficient conditions for uniqueness of $\hat{\mathbf{w}}_{\mathcal{S}}$ (Section \ref{sec:uniq}), 
	which proves the second claim of Theorem \ref{thm:main}. 
	\item The goodness of the estimated parameter vector is proved in Section \ref{sec:w_bound} by deriving an upper bound on $\lmod \hat{\mathbf{w}}_{\mathcal{S}}  - \mathbf{w}_{\mathcal{S}}^{\star} \rmod_{\infty} $  to justify the third claim of Theorem \ref{thm:main}.
	\item Armed with second and third claim, we prove the fourth claim of Theorem \ref{thm:main} for correctly recovering the non-zeros, i.e., $\hat{\mathbf{w}}_i \neq 0$ for all $i \in \mathcal{S}(\mathbf{w}^{\star})$.
\end{itemize}
\subsection{Exact Support Recovery}
\label{sec:kkt_cond}
In this subsection, we verify the first-order stationary, complementary slackness and strict dual feasibility conditions for the optimal solution $\hat{\mathbf{w}}_{\mathcal{S}} $. The first order stationarity condition are (algebraic computation in Appendix \ref{appn:lossfn_0,1,2stcond}):  
\begin{align}
\nabla l((\hat{\mathbf{w}}_{\mathcal{S}}, \mathbf{0})) + \lambda\hat{\mathbf{{z}}} = \mathbf{0}_{p\times 1} \label{eq:1st_ord}
\end{align}
where $\hat{\mathbf{{z}}} \in \partial\lmod \hat{\mathbf{w}} \rmod_1$ belongs to the sub-differential set of the $\ell_1$ norm at $\hat{\mathbf{w}}$. In the context of the  primal-dual witness framework \cite{ravikumar2010high,ravikumar2011high, ravikumar2009sparse,daneshmand2014estimating}, $\hat{\mathbf{w}}_{\mathcal{S}}$ and $\hat{\mathbf{{z}}} $ are referred as the primal and dual variables respectively. As $\hat{\mathbf{{z}}} $ belongs to the sub-differential set of the $\ell_1$ norm, we can claim that $\lmod \hat{\mathbf{{z}}}  \rmod_{\infty} \leq 1$ by norm duality but for strict dual feasibility we need $\lmod \hat{\mathbf{{z}}}_{\mathcal{S}^c}  \rmod_{\infty} < 1$ as stated in Lemma 1 of \cite{wainwright2009sharp}. In order to ensure this condition, we need to first derive $\hat{\mathbf{{z}}}_{\mathcal{S}^c} $ from the first order stationary condition in Eq. \eqref{eq:1st_ord} which is a $p-$dimensional vector equation and can be written for elements in $\mathcal{S} $ and $\mathcal{S}^c$ separately to derive $\hat{\mathbf{{z}}}_{\mathcal{S}^c}$. The final expression is presented here, whose derivation can be seen in Appendix \ref{appn:lossfn_0,1,2stcond}: 
	\begin{align}
	\hat{\mathbf{{z}}}_{\mathcal{S}^c}  &= \hat{\mathbf{{z}}}_{\mathcal{S}^c_{t_1}}  + \hat{\mathbf{{z}}}_{\mathcal{S}^c_{t_2}}, \quad \hat{\mathbf{{z}}}_{\mathcal{S}^c_{t_1}}  = \mathbf{X}_{\mathcal{S}^c}^{\intercal} 	\mathbf{X}_{\mathcal{S}} \left(\mathbf{X}_{\mathcal{S}}^{\intercal} \mathbf{X}_{\mathcal{S}} \right)^{-1} \hat{\mathbf{{z}}}_{\mathcal{S}}  \label{eq:Zsc_def} \\
	\hat{\mathbf{{z}}}_{\mathcal{S}^c_{t_2}} & = \mathbf{X}_{\mathcal{S}^c}^{\intercal}  \left({\mathbf{P}}/{\lambda n}\right) \left( \mathbf{e}_y - \mathbf{E_{x_{\mathcal{S}}}} \mathbf{w}_{\mathcal{S}}^{\star} \right) \label{eq:Zsct2_def} \\
	\mathbf{P} &= \left( \mathbf{I}_n - \mathbf{X}_{\mathcal{S}} \left(\mathbf{X}_{\mathcal{S}}^{\intercal} \mathbf{X}_{\mathcal{S}} \right)^{-1} \mathbf{X}_{\mathcal{S}}^{\intercal}  \right) \label{eq:projmat_def}
	\end{align}
where $\mathbf{I}_n$ represents an identity matrix of dimension $n \times n$, $\mathbf{E_{x}}$ is $n \times p$ matrix containing $n$ samples of adversarial perturbation. We have decomposed $\hat{\mathbf{{z}}}_{\mathcal{S}^c} $ in two terms $\hat{\mathbf{{z}}}_{\mathcal{S}^c_{t_1}}  $ and $\hat{\mathbf{{z}}}_{\mathcal{S}^c_{t_2}}  $ to bound them separately in next two sub-sections. 
\subsubsection{Analyzing adversarial attack on support ($\mathcal{S}$) and non-support ($\mathcal{S}^c$)}
\label{sec:Zsct1_b}
After applying the sub-multiplicative property of norms to  $\hat{\mathbf{{z}}}_{\mathcal{S}^c_{t_1}} $ in Eq. \eqref{eq:Zsc_def}, and using the fact $\lmod \hat{\mathbf{{z}}}  \rmod_{\infty} \leq 1$: 
\begin{align*}
\lmod\hat{\mathbf{{z}}}_{\mathcal{S}^c_{t_1}}   \rmod_{\infty} \leq \lmod \left| \mathbf{X}_{\mathcal{S}^c}^{\intercal}  \mathbf{X}_{\mathcal{S}} \left(\mathbf{X}_{\mathcal{S}}^{\intercal}  \mathbf{X}_{\mathcal{S}} \right)^{-1} \right|\rmod_{\infty}
\end{align*}
Let $\mathbf{R} = \frac{1}{n} \mathbf{X}_{\mathcal{S}^c}^{\intercal} \mathbf{X}_{\mathcal{S}}$ and $\mathbf{Q} = \frac{1}{n} \mathbf{X}_{\mathcal{S}}^{\intercal} \mathbf{X}_{\mathcal{S}} $, and hence $\E\sqbr{\mathbf{R}} = \mathbf{\Sigma}_{\mathbf{x}_{\mathcal{S}^c} \mathbf{x}_\mathcal{S}}$, $\E\sqbr{\mathbf{Q}} = \mathbf{\Sigma}_{\mathbf{x}_\mathcal{S} \mathbf{x}_\mathcal{S}}$. Then, the simplified expression obtained after some algebraic manipulations (in Appendix \ref{appn:zsc_simp}) is:
	\begin{align}
	\indinf{\mathbf{X}_{\mathcal{S}^c}^{\intercal} \mathbf{X}_{\mathcal{S}} \left(\mathbf{X}_{\mathcal{S}}^{\intercal} \mathbf{X}_{\mathcal{S}} \right)^{-1} } \leq  m_1 + m_2 + m_3 + m_4\label{eq:zt1_simpl} 
	\end{align}
	where $m_1$, $m_2$, $m_3$, and $m_4$ are defined as: 
	\begin{align}
	m_1 & = \indinf{ \nrbr{\mathbf{R} - \E\sqbr{\mathbf{R}}} \nrbr{\E\sqbr{\mathbf{Q}}}^{-1}} \nonumber \\ 
	m_2 &= \indinf{\E\sqbr{\mathbf{R}} \left( \mathbf{Q}^{-1} - \nrbr{\E\sqbr{\mathbf{Q}}}^{-1}\right)} \nonumber \\  
	m_3 &= \indinf{ \nrbr{\mathbf{R} - \E\sqbr{\mathbf{R}}} \left( \mathbf{Q}^{-1} - \nrbr{\E\sqbr{\mathbf{Q}}}^{-1}\right)} \nonumber \\
	m_4 &= \indinf{ \E\sqbr{\mathbf{R}} \nrbr{\E\sqbr{\mathbf{Q}}}^{-1}} \nonumber 
	\end{align}
This carefully constructed decomposition of $\hat{\mathbf{{z}}}_{\mathcal{S}^c_{t_1}}$ has given us the freedom to study the effect of the adversarial perturbation on the non-support entries and the support entries by analyzing $m_1$ and $m_2$ respectively. The term $m_4$ in Eq. \eqref{eq:zt1_simpl}  can be bounded using mutual incoherence assumption \ref{assum:mutual_incoh}. We propose Lemma \ref{lem:m1_bound} and Lemma \ref{lem:m2_bound} to bound the terms $m_1$, $m_2$, and $m_3$ as discussed below. 

It may be noted that $m_1$ in Eq. \eqref{eq:zt1_simpl} is a function of the adversarial perturbation in $\mathcal{S}^c$ for a fixed value of $\E\sqbr{\mathbf{Q}} = \mathbf{\Sigma}_{\mathbf{x}_\mathcal{S} \mathbf{x}_\mathcal{S}}$. To bound $m_1$, we use the sub-multiplicative property of norms: 
	\begin{align*}
	m_1 \leq \indinf{ \frac{\mathbf{X}_{\mathcal{S}^c}^{\intercal} \mathbf{X}_{\mathcal{S}}}{n}  - \mathbf{\Sigma}_{\mathbf{x}_{\mathcal{S}^c} \mathbf{x}_\mathcal{S}}} \indinf{ \nrbr{\mathbf{\Sigma}_{\mathbf{x}_\mathcal{S} \mathbf{x}_\mathcal{S}}}^{-1}} 
	\end{align*}
To bound the first term in the RHS of the above equation, we propose the following lemma. 
\begin{lemma}
	\label{lem:m1_bound}
	If $n = \Omega\nrbr{\frac{k^2 \xi^2}{\delta^2} \log(p)}$ and $0\leq\delta \leq 32\xi k$, then $\indinf{ \frac{1}{n} \mathbf{X}_{\mathcal{S}^c}^{\intercal} \mathbf{X}_{\mathcal{S}} - \mathbf{\Sigma}_{\mathbf{x}_{\mathcal{S}^c} \mathbf{x}_\mathcal{S}} }  \leq \delta$ with probability at least  $1- \mathcal{O}\left(\frac{1}{p}\right)$, where $\xi = \max\limits_{i \in \mathcal{S}}\nrbr{\sigma \sqrt{\mathbf{\Sigma}}_{ii} + r \sqrt{\mathbf{\Sigma_e}}_{ii}} \max\limits_{j \in \mathcal{S}^c}\nrbr{\sigma \sqrt{\mathbf{\Sigma}}_{jj} + r \sqrt{\mathbf{\Sigma_e}}_{jj}}$. 
\end{lemma}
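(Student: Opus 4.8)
The plan is to reduce the induced-$\ell_\infty$ statement to a uniform entrywise bound and then apply a union bound. Since $\indinf{\mathbf{A}} = \max_i \lmod \mathbf{a}_i \rmod_1$ is the largest absolute row sum, and the deviation matrix $\frac{1}{n}\mathbf{X}_{\mathcal{S}^c}^{\intercal}\mathbf{X}_{\mathcal{S}} - \mathbf{\Sigma}_{\mathbf{x}_{\mathcal{S}^c}\mathbf{x}_\mathcal{S}}$ has exactly $k$ columns (one per index of $\mathcal{S}$), it suffices to control every entry by $\delta/k$: if each of the $k$ entries in a row has magnitude at most $\delta/k$, that row's $\ell_1$ norm is at most $\delta$, and hence $\indinf{\cdot} \le \delta$. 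So the whole lemma collapses to one scalar concentration bound, applied uniformly over the at most $(p-k)k \le pk$ entries.

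For a fixed pair $(i,j)$ with $i \in \mathcal{S}^c$ and $j \in \mathcal{S}$, the corresponding entry is $\frac{1}{n}\sum_{l=1}^n Z_l$ with $Z_l = \mathbf{X}_{li}\mathbf{X}_{lj}$, whose mean is precisely $(\mathbf{\Sigma}_{\mathbf{x}_{\mathcal{S}^c}\mathbf{x}_\mathcal{S}})_{ij}$, and since the $n$ samples are independent this is an average of i.i.d.\ terms. The next step identifies the per-sample law. Writing $\mathbf{X} = \mathbf{X}^\star + \mathbf{E_x}$, each feature $\mathbf{X}_{li} = \mathbf{x}^\star_{li} + (\mathbf{e_x})_{li}$ is a sum of a sub-Gaussian with parameter $\sigma\sqrt{\mathbf{\Sigma}}_{ii}$ (the model's sub-Gaussian assumption on $\mathbf{x}^\star$) and a sub-Gaussian with parameter $r\sqrt{\mathbf{\Sigma_e}}_{ii}$ (Assumption \ref{assum:ex_subG}). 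The key point is that even though the adversary may let $\mathbf{e_x}$ depend on $\mathbf{x}^\star$, the triangle inequality for the sub-Gaussian ($\psi_2$) norm shows $\mathbf{X}_{li}$ is sub-Gaussian with parameter $\sigma\sqrt{\mathbf{\Sigma}}_{ii} + r\sqrt{\mathbf{\Sigma_e}}_{ii}$ regardless of this dependence. Because a product of two sub-Gaussian variables is sub-exponential with parameter bounded by the product of their sub-Gaussian parameters, each $Z_l$ is sub-exponential with parameter at most $\xi$, exactly by the definition of $\xi$ as the maximal such product over $j \in \mathcal{S}$ and $i \in \mathcal{S}^c$.

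With this in hand I would invoke a Bernstein-type tail bound for averages of centered i.i.d.\ sub-exponential variables (our Theorem \ref{thm:B_2}), giving $\P\nrbr{ \left| \frac{1}{n}\sum_{l=1}^n (Z_l - \E Z_l) \right| > \frac{\delta}{k} } \le 2\exp\nrbr{ -c\,n \min\crbr{ \frac{\delta^2}{k^2\xi^2},\ \frac{\delta}{k\xi} } }$ for an absolute constant $c$. The hypothesis $0 \le \delta \le 32\xi k$ keeps $\delta/k$ below the sub-exponential crossover, so the quadratic (Gaussian-type) branch $\delta^2/(k^2\xi^2)$ controls the exponent. A union bound over the $\le pk$ entries then bounds the failure probability by $2pk\exp\nrbr{-cn\delta^2/(k^2\xi^2)}$, and forcing this to be $\mathcal{O}(1/p)$ is equivalent to $n \ge c'\,\frac{k^2\xi^2}{\delta^2}\log(p)$ (the $\log k$ and extra $\log p$ being absorbed into the constant), which is exactly $n = \Omega\nrbr{\frac{k^2\xi^2}{\delta^2}\log(p)}$.

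The main obstacle is the second step rather than the concentration itself: I must propagate the sub-Gaussian and sub-exponential parameters correctly through the sum-of-features and product-of-features operations while allowing $\mathbf{e_x}$ to depend arbitrarily on $\mathbf{x}^\star$, and I must pin down the resulting sub-exponential parameter against $\xi$ with the right constant, so that the stated range $\delta \le 32\xi k$ is precisely what keeps the Bernstein bound in its quadratic regime. Getting these constants mutually consistent is what ties the sample-complexity exponent to $\xi$, and thereby to the variance-proxy and eigenvalue quantities hidden inside $\xi$.
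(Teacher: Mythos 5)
Your proposal is correct and follows essentially the same route as the paper: reduce the induced $\ell_\infty$ norm to entrywise control at level $\delta/k$ with a union bound over the at most $pk$ entries (packaged in the paper as Lemma \ref{lem:indinf_subexp}), observe that each corrupted feature is sub-Gaussian with parameter $\sigma\sqrt{\mathbf{\Sigma}}_{ii}+r\sqrt{\mathbf{\Sigma_e}}_{ii}$ even when $\mathbf{e_x}$ depends on $\mathbf{x}^{\star}$ (Lemma \ref{lem:sum_subG}), conclude that each product entry is sub-exponential with parameter governed by $\xi$ (Lemma \ref{lem:prod_subG}), and finish with the sub-exponential tail bound in its quadratic regime, which is exactly what the restriction $\delta \leq 32\xi k$ guarantees. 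One small correction: the Bernstein-type bound you invoke is not Theorem \ref{thm:B_2} (that theorem is the spectral-norm concentration result for $\frac{1}{n}\mathbf{E}^{\intercal}_{\mathcal{S}}\mathbf{X}^{\star}_{\mathcal{S}}$, used later for Lemma \ref{lem:m2_bound}); the correct tool is the standard scalar sub-exponential tail inequality, which is what the paper applies inside Lemma \ref{lem:indinf_subexp}.
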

The proof of the above lemma relies on properties of norms and sub-Gaussian distributions, union bound, and sub-exponential tail bounds. We can claim the following by substituting $\delta = \frac{\gamma}{ \indinf{ 16\nrbr{\mathbf{\Sigma}_{\mathbf{x}_\mathcal{S} \mathbf{x}_\mathcal{S}}}^{-1}} }$ in Lemma \ref{lem:m1_bound}: 
\begin{align}
m_1 = \indinf{ \nrbr{\mathbf{R} - \E\sqbr{\mathbf{R}}} \nrbr{\E\sqbr{\mathbf{Q}}}^{-1}} \leq  \gamma/16 \label{eq:m1_boundf}
\end{align}
if $n = \Omega\nrbr{\frac{k^2 \xi^2 G^2_{\max}}{\gamma^2} + \log(p)}$. It should be noted that the value of $\delta$ chosen to analyze the adversarial perturbation in $\mathcal{S}^c$ is a function of the adversarial perturbation in $\mathcal{S}$ as $\mathbf{\Sigma}_{\mathbf{x}_\mathcal{S} \mathbf{x}_\mathcal{S}} = \mathbf{\Sigma}_{\mathbf{x}^{\star}_\mathcal{S} \mathbf{x}^{\star}_\mathcal{S}} + 2 \mathbf{\Sigma}_{\mathbf{e}_\mathcal{S} \mathbf{x}^{\star}_\mathcal{S}} + \mathbf{\Sigma}_{\mathbf{e}_\mathcal{S} \mathbf{e}_\mathcal{S}}$. Hence if the adversarial perturbation in $\mathcal{S}$ is designed such that  $ G_{\max} = \indinf{ \nrbr{\mathbf{\Sigma}_{\mathbf{x}_\mathcal{S} \mathbf{x}_\mathcal{S}}}^{-1}} $ increases, then the learner is forced to choose a smaller value of $\delta $ in Lemma \ref{lem:m1_bound} for the adversarial perturbation in $\mathcal{S}^c$ which increases the sample complexity. This also demonstrates the counterintuitive point that an adversary can influence the sample complexity  by attack on the non-support entries. This result is counter-intuitive as the output $y^{\star} = \mathbf{w}^{\star}\mathbf{x}^{\star} = \mathbf{w}^{\star}_{\mathcal{S}}\mathbf{x}^{\star}_{\mathcal{S}} + \mathbf{w}^{\star}_{\mathcal{S}^c}  \mathbf{x}^{\star}_{\mathcal{S}^c}  = \mathbf{w}^{\star}_{\mathcal{S}}\mathbf{x}^{\star}_{\mathcal{S}}  + 0 = \mathbf{w}^{\star}_{\mathcal{S}} \mathbf{x}^{\star}_{\mathcal{S}} $ is affected by regressor in the support only. The next step is to bound the term $m_2$ in Eq. \eqref{eq:zt1_simpl}, for which we propose the following lemma. 
\begin{lemma}
	\label{lem:m2_bound}
	If $n = \Omega\nrbr{\frac{k^2 }{\delta^2 \nrbr{C_{\text{min}} + 2F_{\text{min}} + D_{\text{min}}}^4 } \log(p)}$, then we claim $\indinf{ \nrbr{\frac{1}{n} \mathbf{X}_{\mathcal{S}}^{\intercal} \mathbf{X}_{\mathcal{S}}}^{-1}  -  \nrbr{\mathbf{\Sigma}_{\mathbf{x}_\mathcal{S} \mathbf{x}_\mathcal{S}}}^{-1} } \leq \delta$ with probability at least $1- \mathcal{O}\left(\frac{1}{p}\right)$. 
\end{lemma}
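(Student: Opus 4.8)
The plan is to control the deviation of the inverse sample covariance from the inverse population covariance by reducing it, through a standard matrix-perturbation identity, to the deviation of $\mathbf{Q} = \frac{1}{n}\mathbf{X}_{\mathcal{S}}^{\intercal}\mathbf{X}_{\mathcal{S}}$ from $\E[\mathbf{Q}] = \mathbf{\Sigma}_{\mathbf{x}_\mathcal{S} \mathbf{x}_\mathcal{S}}$, which is exactly the type of object already handled in Lemma \ref{lem:m1_bound}. Writing $\mathbf{\Sigma}_{\mathcal{S}\mathcal{S}} := \mathbf{\Sigma}_{\mathbf{x}_\mathcal{S} \mathbf{x}_\mathcal{S}}$ and $\mathbf{\Delta} := \mathbf{Q} - \mathbf{\Sigma}_{\mathcal{S}\mathcal{S}}$, I would start from the identity $\mathbf{Q}^{-1} - \mathbf{\Sigma}_{\mathcal{S}\mathcal{S}}^{-1} = -\mathbf{Q}^{-1}\mathbf{\Delta}\,\mathbf{\Sigma}_{\mathcal{S}\mathcal{S}}^{-1}$ and bound its induced $\ell_\infty$ norm submultiplicatively. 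The two inverse factors are then controlled by passing to the spectral norm via $\indinf{\mathbf{M}} \leq \sqrt{k}\,\norm{\mathbf{M}}_2$ on a $k\times k$ matrix, so that $\indinf{\mathbf{\Sigma}_{\mathcal{S}\mathcal{S}}^{-1}} \leq \sqrt{k}/\Lambda_{\min}(\mathbf{\Sigma}_{\mathcal{S}\mathcal{S}})$ and likewise for $\mathbf{Q}^{-1}$. The two $\sqrt{k}$ factors and the two reciprocal-eigenvalue factors are precisely the source of the $k^2$ and of the fourth power of $(C_{\min}+2F_{\min}+D_{\min})$ that appear in the claimed sample complexity.

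The first self-contained step is the deterministic lower bound $\Lambda_{\min}(\mathbf{\Sigma}_{\mathcal{S}\mathcal{S}}) \geq C_{\min}+2F_{\min}+D_{\min}$, which also certifies that $\mathbf{\Sigma}_{\mathcal{S}\mathcal{S}}$ is invertible. Using the decomposition $\mathbf{\Sigma}_{\mathbf{x}_\mathcal{S} \mathbf{x}_\mathcal{S}} = \mathbf{\Sigma}_{\mathbf{x}^{\star}_\mathcal{S} \mathbf{x}^{\star}_\mathcal{S}} + 2 \mathbf{\Sigma}_{\mathbf{e}_\mathcal{S} \mathbf{x}^{\star}_\mathcal{S}} + \mathbf{\Sigma}_{\mathbf{e}_\mathcal{S} \mathbf{e}_\mathcal{S}}$ already recorded in the excerpt, eigenvalue subadditivity (Weyl's inequality for symmetric matrices) gives $\Lambda_{\min}(\mathbf{\Sigma}_{\mathcal{S}\mathcal{S}}) \geq \Lambda_{\min}(\mathbf{\Sigma}_{\mathbf{x}^{\star}_\mathcal{S} \mathbf{x}^{\star}_\mathcal{S}}) + 2F_{\min} + D_{\min}$, and Cauchy's interlacing theorem for principal submatrices yields $\Lambda_{\min}(\mathbf{\Sigma}_{\mathbf{x}^{\star}_\mathcal{S} \mathbf{x}^{\star}_\mathcal{S}}) \geq \Lambda_{\min}(\mathbf{\Sigma}) = C_{\min}$ by Assumption \ref{assum:Sigma_Xpsd}. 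Combining these with the definitions of $F_{\min}$ and $D_{\min}$ gives the stated bound.

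The second step is the high-probability control of $\mathbf{\Delta}$. I would reuse the sub-Gaussian/sub-exponential tail and union-bound machinery behind Lemma \ref{lem:m1_bound}, but now applied to the $k\times k$ block $\mathbf{X}_{\mathcal{S}}^{\intercal}\mathbf{X}_{\mathcal{S}}$ rather than the $\mathcal{S}^c\mathcal{S}$ block, to obtain $\indinf{\mathbf{\Delta}} \leq \delta'$ (equivalently $\norm{\mathbf{\Delta}}_2 \leq \delta'$) with probability $1-\mathcal{O}(1/p)$ whenever $n = \Omega(k^2 \log(p)/\delta'^2)$ up to the same bounded constant $\xi$. Feeding this into the perturbation identity requires two ingredients: (i) a high-probability lower bound $\Lambda_{\min}(\mathbf{Q}) \geq \tfrac{1}{2}(C_{\min}+2F_{\min}+D_{\min})$, which follows from $\norm{\mathbf{\Delta}}_2 \leq \tfrac{1}{2}\Lambda_{\min}(\mathbf{\Sigma}_{\mathcal{S}\mathcal{S}})$ and Weyl, giving $\norm{\mathbf{Q}^{-1}}_2 \leq 2/\Lambda_{\min}(\mathbf{\Sigma}_{\mathcal{S}\mathcal{S}})$; and (ii) choosing $\delta'$ proportional to $\delta\,(C_{\min}+2F_{\min}+D_{\min})^2/\mathrm{poly}(k)$ so that the assembled product is at most $\delta$. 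Back-substituting this choice of $\delta'$ into the $n = \Omega(k^2\log(p)/\delta'^2)$ requirement produces the sample complexity in the statement, and a final union bound over the min-eigenvalue event and the deviation event preserves the $1-\mathcal{O}(1/p)$ probability.

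The main obstacle is that the exact identity contains the \emph{random} factor $\mathbf{Q}^{-1}$, which cannot be bounded until $\mathbf{Q}$ is shown to be well conditioned; this forces the two-stage structure above, in which the concentration of $\mathbf{\Delta}$ is invoked first to pin down $\Lambda_{\min}(\mathbf{Q})$ and only afterwards to bound the residual. The delicate bookkeeping is the interplay between the induced $\ell_\infty$ norm and the spectral norm, so that the powers of $k$ and of the minimum eigenvalue come out exactly as claimed, together with the fact that $\mathbf{e}_{\mathbf{x}}^{(j)}$ may depend on $\mathbf{x}^{\star(j)}$ within a sample; the latter is harmless because the samples are independent across $j$, so the per-entry sums defining $\mathbf{Q}$ are still sums of independent terms and the tail bounds of Lemma \ref{lem:m1_bound} carry over verbatim.
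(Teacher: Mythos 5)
There is a genuine gap in your norm bookkeeping, and it sits exactly where the stated $k^2$ rate is won or lost. Write $\Lambda = C_{\text{min}}+2F_{\text{min}}+D_{\text{min}}$. You bound $\indinf{\mathbf{Q}^{-1}\mathbf{\Delta}\,\mathbf{\Sigma}_{\mathbf{x}_\mathcal{S} \mathbf{x}_\mathcal{S}}^{-1}}$ submultiplicatively in the induced $\ell_\infty$ norm and convert \emph{each} inverse factor to the spectral norm separately, paying $\sqrt{k}$ twice: $\indinf{\mathbf{Q}^{-1}} \leq 2\sqrt{k}/\Lambda$ and $\indinf{\mathbf{\Sigma}_{\mathbf{x}_\mathcal{S} \mathbf{x}_\mathcal{S}}^{-1}} \leq \sqrt{k}/\Lambda$. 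This forces you to require $\indinf{\mathbf{\Delta}} \leq \delta' \asymp \delta\Lambda^2/k$. But the concentration machinery you then invoke for $\indinf{\mathbf{\Delta}}$ — the entrywise sub-exponential tails plus union bound behind Lemma \ref{lem:m1_bound} (i.e., Lemma \ref{lem:indinf_subexp}) — itself costs $n = \Omega\nrbr{k^2\log(p)/\delta'^2}$, because passing from single entries to a maximum row sum loses a factor $k$ inside the exponent. Back-substituting $\delta' \asymp \delta\Lambda^2/k$ gives $n = \Omega\nrbr{k^4\log(p)/(\delta^2\Lambda^4)}$, not the claimed $\Omega\nrbr{k^2\log(p)/(\delta^2\Lambda^4)}$. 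Your remark that "the two $\sqrt{k}$ factors \ldots are precisely the source of the $k^2$" double counts: those factors contribute $k^2$ to $1/\delta'^2$ \emph{on top of} the $k^2$ already present in the Lemma \ref{lem:m1_bound} rate. Consequently, under the hypothesis of the lemma as stated, your argument does not reach the conclusion; it proves a strictly weaker statement.

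The paper's proof spends $\sqrt{k}$ only once: it converts at the top level, $\indinf{\mathbf{Q}^{-1}-\mathbf{\Sigma}_{\mathbf{x}_\mathcal{S} \mathbf{x}_\mathcal{S}}^{-1}} \leq \sqrt{k}\,\lmod \mathbf{Q}^{-1}-\mathbf{\Sigma}_{\mathbf{x}_\mathcal{S} \mathbf{x}_\mathcal{S}}^{-1}\rmod_2$, and then works entirely in the spectral norm, where $\lmod\mathbf{Q}^{-1}\rmod_2 \leq 2/\Lambda$ (via Lemma \ref{lem:sampleHess_psd}) and $\lmod\mathbf{\Sigma}_{\mathbf{x}_\mathcal{S} \mathbf{x}_\mathcal{S}}^{-1}\rmod_2 \leq 1/\Lambda$ carry no dimension factor. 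The price is that $\lmod\mathbf{\Delta}\rmod_2$ must be controlled directly in spectral norm: the paper decomposes $\mathbf{\Delta}$ into the symmetric piece $\frac{1}{n}\mathbf{X}^{\star\intercal}_{\mathcal{S}}\mathbf{X}^{\star}_{\mathcal{S}} - \mathbf{\Sigma}_{\mathbf{x}^{\star}_{\mathcal{S}}\mathbf{x}^{\star}_{\mathcal{S}}}$, the non-symmetric cross term $\frac{1}{n}\mathbf{E}^{\intercal}_{\mathcal{S}}\mathbf{X}^{\star}_{\mathcal{S}} - \mathbf{\Sigma}_{\mathbf{e}_{\mathcal{S}}\mathbf{x}^{\star}_{\mathcal{S}}}$, and $\frac{1}{n}\mathbf{E}^{\intercal}_{\mathcal{S}}\mathbf{E}_{\mathcal{S}} - \mathbf{\Sigma}_{\mathbf{e}_{\mathcal{S}}\mathbf{e}_{\mathcal{S}}}$, bounds the symmetric pieces with the covariance concentration inequality in Eq. \eqref{eq:eig_SS}, and bounds the cross term with Theorem \ref{thm:B_2} — the symmetrization/MGF argument that is the technical core of this lemma. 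With the threshold $\delta\Lambda^2/(8\sqrt{k})$ these spectral bounds need only $n = \Omega\nrbr{k^2\log(p)/(\delta^2\Lambda^4)}$, which is the stated rate. Your proposal never engages with this: the phrase "the tail bounds of Lemma \ref{lem:m1_bound} carry over verbatim" is precisely where the required spectral-norm control of the cross term (a genuinely non-symmetric, dependent-factor object) is silently replaced by the lossy entrywise route. The parts of your plan that do match the paper are the deterministic bound $\Lambda_{\text{min}}(\mathbf{\Sigma}_{\mathbf{x}_\mathcal{S} \mathbf{x}_\mathcal{S}}) \geq \Lambda$ and the two-stage structure (first certify $\Lambda_{\text{min}}(\mathbf{Q}) \geq \Lambda/2$, then bound the residual); the gap is solely in how $\mathbf{\Delta}$ and the two inverse factors are measured.
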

The proof of the above lemma relies on properties of norms, sub-Gaussian tail bounds, and also the bound for $\lmod \frac{1}{n} \mathbf{E}^{\intercal}_{\mathcal{S}}  \mathbf{X}^{\star}_{\mathcal{S}} - \mathbf{\Sigma}_{\mathbf{e}_{\mathcal{S}} \mathbf{x}^{\star}_{\mathcal{S}} }\rmod_{2} $ derived in the following theorem.
\begin{theorem}
	\label{thm:B_2}
	For $0 < \delta < \frac{32 r \sigma ab_k }{n} $, $\mathbf{E}_{\mathcal{S}}, \mathbf{X}^{\star}_{\mathcal{S}}  \in \mathbb{R}^{n \times k}$, $a^2 = \max\limits_{j \in \mathcal{S}}\left\{ \mathbf{ \Sigma}_{jj} \right\}, b^2_k =k \sum\limits_{i \in \mathcal{S}} \mathbf{ \Sigma}^2_{\mathbf{e}_{ii}}$
		\begin{align}
		\mathbb{P}\left[ \lmod \frac{\mathbf{E}^{\intercal}_{\mathcal{S}}  \mathbf{X}^{\star}_{\mathcal{S}}}{n}  - \mathbf{\Sigma}_{\mathbf{e}_{\mathcal{S}} \mathbf{x}^{\star}_{\mathcal{S}} }\rmod_{2} \geq \delta  \right]  \leq 4 e^{ \frac{-n \delta^2}{256r^2 \sigma^2 a b_k  }} \label{eq:lemB2}
		\end{align}
\end{theorem}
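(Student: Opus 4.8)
The plan is to reduce the spectral-norm deviation to a one-dimensional exponential-moment computation and then apply a sub-exponential concentration bound. Writing $\mathbf{M} = \frac{1}{n}\mathbf{E}^{\intercal}_{\mathcal{S}}\mathbf{X}^{\star}_{\mathcal{S}} - \mathbf{\Sigma}_{\mathbf{e}_{\mathcal{S}}\mathbf{x}^{\star}_{\mathcal{S}}}$, I would first bound $\norm{\mathbf{M}}_2$ by a scalar quantity built from its entries (through the comparison $\norm{\mathbf{M}}_2 \leq \norm{\mathbf{M}}_F$ together with a union bound over the $|\mathcal{S}|^2 = k^2$ entries, the resulting dimension factors being exactly what surfaces as the $\sqrt{k}$ carried inside $b_k$). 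Each entry is a centered i.i.d. average
\begin{align*}
\mathbf{M}_{ab} = \frac{1}{n}\sum_{j=1}^{n}\nrbr{\mathbf{e}^{(j)}_a \mathbf{x}^{\star(j)}_b - \nrbr{\mathbf{\Sigma}_{\mathbf{e}_{\mathcal{S}}\mathbf{x}^{\star}_{\mathcal{S}}}}_{ab}}, \qquad a,b \in \mathcal{S},
\end{align*}
so the problem reduces to a tail bound for an average of products of two sub-Gaussian scalars.

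\textbf{Per-term sub-exponential control.} Next I would show each summand is sub-exponential. By Assumption~\ref{assum:ex_subG}, the coordinate $\mathbf{e}^{(j)}_a$ is sub-Gaussian with variance proxy $r^2\,\mathbf{\Sigma}_{\mathbf{e}_{aa}}$, and by the assumed sub-Gaussianity of $\mathbf{x}^{\star}_i/\sqrt{\mathbf{\Sigma}_{ii}}$, the coordinate $\mathbf{x}^{\star(j)}_b$ is sub-Gaussian with variance proxy $\sigma^2\,\mathbf{\Sigma}_{bb}$; these are precisely the diagonal quantities that aggregate into $a^2 = \max_{j\in\mathcal{S}}\mathbf{\Sigma}_{jj}$ and $b_k^2 = k\sum_{i\in\mathcal{S}}\mathbf{\Sigma}^2_{\mathbf{e}_{ii}}$. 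Since a product of two sub-Gaussian variables is sub-exponential, $\mathbf{e}^{(j)}_a\mathbf{x}^{\star(j)}_b - \mathbb{E}[\mathbf{e}^{(j)}_a\mathbf{x}^{\star(j)}_b]$ admits a controlled exponential moment, whose Bernstein parameters I would bound in terms of $r\sigma$ and the relevant diagonals.

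\textbf{Tail bound and assembly.} I would then apply a sub-exponential (Bernstein-type) tail bound to the i.i.d. average defining each $\mathbf{M}_{ab}$. The hypothesis $0 < \delta < \frac{32 r\sigma a b_k}{n}$ is exactly the range that keeps the deviation on the sub-Gaussian (quadratic) branch of the Bernstein inequality, producing a bound of the form $2\exp\nrbr{-cn\delta^2/(r^2\sigma^2 a b_k)}$; the two-sided estimate and the entrywise-to-spectral reduction account for the leading constant and the collection of the $k$-dependence into $b_k$, yielding the stated $4\exp\nrbr{-n\delta^2/(256 r^2\sigma^2 a b_k)}$.

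\textbf{Main obstacle.} The crux is the defining feature of the adversarial model: $\mathbf{e}^{(j)}$ may depend on $\mathbf{x}^{\star(j)}$ within the same sample, so the two factors $\mathbf{e}^{(j)}_a$ and $\mathbf{x}^{\star(j)}_b$ are dependent and their joint moment-generating function does not factorize. I would therefore bound the product's exponential moment without invoking independence, e.g.\ via $|\mathbf{e}^{(j)}_a\mathbf{x}^{\star(j)}_b| \leq \tfrac{1}{2}\nrbr{(\mathbf{e}^{(j)}_a)^2 + (\mathbf{x}^{\star(j)}_b)^2}$ combined with the fact that the square of a sub-Gaussian variable is sub-exponential (or a Cauchy--Schwarz split of the MGF). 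Making this quantitative while tracking how the $k$ diagonal contributions aggregate into $a$ and $b_k$, and verifying that the prescribed range of $\delta$ indeed lands in the quadratic regime, is where the genuine work lies.
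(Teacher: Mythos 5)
Your per-entry analysis (dependent sub-Gaussian product is sub-exponential, handled via the AM--GM or Cauchy--Schwarz split) is sound and matches the paper's Lemma \ref{lem:prod_subG}. The genuine gap is in your reduction from the spectral norm to entries. Going through $\norm{\mathbf{M}}_2 \leq \norm{\mathbf{M}}_F \leq k \max_{a,b}\left|\mathbf{M}_{ab}\right|$ plus a union bound over $k^2$ entries forces you to demand $\left|\mathbf{M}_{ab}\right| \leq \delta/k$ per entry, which yields a bound of the form $2k^2\exp\nrbr{-c\,n\delta^2/\nrbr{k^2 r^2\sigma^2 \max_i \mathbf{\Sigma}_{\mathbf{e}_{ii}}\max_j \mathbf{\Sigma}_{jj}}}$: the exponent's denominator scales as $k^2$. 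The theorem's denominator $256\,r^2\sigma^2 a\, b_k$ scales only \emph{linearly} in $k$, since $b_k = \sqrt{k\sum_{i\in\mathcal{S}}\mathbf{\Sigma}^2_{\mathbf{e}_{ii}}} \leq k\max_{i\in\mathcal{S}}\mathbf{\Sigma}_{\mathbf{e}_{ii}}$ (your remark that the dimension factors surface as ``the $\sqrt{k}$ inside $b_k$'' is off: $b_k$ carries a full factor $k$, and your route produces $k^2$, not $k$). This is not a constant-chasing issue that more care can repair: for matrices of this type $\norm{\mathbf{M}}_F$ genuinely exceeds $\norm{\mathbf{M}}_2$ by a factor of order $\sqrt{k}$, so any argument routed through the Frobenius norm or a per-entry union bound proves a strictly weaker inequality. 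Downstream this matters: plugging your weaker tail into the proof of Lemma \ref{lem:m2_bound} (where $\delta \propto \delta_1/\sqrt{k}$ is substituted) would inflate the sample complexity from $\Omega(k^2\log p)$ to $\Omega(k^3\log p)$, degrading Theorem \ref{thm:main}.

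The paper avoids this loss by never leaving the matrix level. It symmetrizes via the dilation $\mathbf{M} \mapsto \bigl[\begin{smallmatrix}\mathbf{0} & \mathbf{B}\\ \mathbf{B}^{\intercal} & \mathbf{0}\end{smallmatrix}\bigr]$ (Lemma \ref{lem:QandQ2}, which preserves the spectral norm), then applies the matrix-MGF Chernoff bound $\mathbb{P}\sqbr{\norm{\mathbf{Q}}_2 \geq \delta} \leq 2\,\mathrm{tr}\nrbr{\Psi_{\mathbf{Q}}(\lambda)}e^{-\lambda\delta}$ with $\Psi_{\mathbf{Q}}(\lambda) \preccurlyeq e^{\lambda^2\mathbf{V}/2}$, where $\mathbf{V}$ collects the sub-exponential parameters. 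The decisive observation is that $\mathbf{V}$ has rank two (Lemma \ref{lem:eig_V}), with both nonzero eigenvalues equal to $c_2/n \propto \max_j\mathbf{\Sigma}_{jj}\sqrt{k\sum_i\mathbf{\Sigma}^2_{\mathbf{e}_{ii}}}$; hence the trace contributes only the constant prefactor $4$ rather than a factor of $2k$, and the exponent inherits the linear-in-$k$ quantity $b_k$ rather than $k^2$. If you want to reach the stated bound, you need some version of this spectral argument (or another genuinely matrix-level concentration tool); the entrywise reduction cannot get there.
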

The proof of this lemma is interesting as $\mathbf{E}^{\intercal}_{\mathcal{S}}  \mathbf{X}^{\star}_{\mathcal{S}}  $ is a non-symmetric matrix, which makes the problem slightly challenging as compared to symmetric matrices. Hence we use the following lemma to transform a non-symmetric matrix to a symmetric matrix without changing its spectral norm. 
\begin{lemma}
	\label{lem:QandQ2}
	For matrix $\mathbf{B} \in \mathbb{R}^{k \times k}$, we claim $\lmod \mathbf{B} \rmod_2 = \lmod \mathbf{M} \rmod_2$, where $\mathbf{M} =${\scriptsize $ \begin{bmatrix}
		\mathbf{0}_{k \times k} & \mathbf{B} \\
		\mathbf{B}^{\intercal} &  \mathbf{0}_{k \times k}
		\end{bmatrix}$}.
\end{lemma}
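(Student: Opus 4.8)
The plan is to exploit the symmetry of $\mathbf{M}$ together with the singular value decomposition (SVD) of $\mathbf{B}$. Since $\mathbf{M} = \begin{bmatrix} \mathbf{0}_{k\times k} & \mathbf{B} \\ \mathbf{B}^\intercal & \mathbf{0}_{k\times k}\end{bmatrix}$ is real symmetric, its spectral norm equals the largest absolute value among its eigenvalues, so the whole argument reduces to identifying the spectrum of $\mathbf{M}$ and showing its extreme eigenvalues are $\pm\sigma_{\max}(\mathbf{B}) = \pm\lmod\mathbf{B}\rmod_2$.

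First I would write the SVD $\mathbf{B} = \sum_{i=1}^k \sigma_i \mathbf{u}_i \mathbf{v}_i^\intercal$ with orthonormal families $\{\mathbf{u}_i\}$, $\{\mathbf{v}_i\}$ and singular values $\sigma_1 \ge \cdots \ge \sigma_k \ge 0$, so that $\mathbf{B}\mathbf{v}_i = \sigma_i\mathbf{u}_i$, $\mathbf{B}^\intercal\mathbf{u}_i = \sigma_i\mathbf{v}_i$, and $\lmod\mathbf{B}\rmod_2 = \sigma_1$. For each $i$ and each sign $s\in\{+1,-1\}$ I would verify by direct block multiplication that the stacked vector $\begin{bmatrix}\mathbf{u}_i \\ s\,\mathbf{v}_i\end{bmatrix}$ is an eigenvector of $\mathbf{M}$:
\begin{align*}
\mathbf{M} \begin{bmatrix}\mathbf{u}_i \\ s\,\mathbf{v}_i\end{bmatrix} = \begin{bmatrix} s\,\mathbf{B}\mathbf{v}_i \\ \mathbf{B}^\intercal\mathbf{u}_i\end{bmatrix} = \begin{bmatrix} s\,\sigma_i\mathbf{u}_i \\ \sigma_i\mathbf{v}_i\end{bmatrix} = s\,\sigma_i \begin{bmatrix}\mathbf{u}_i \\ s\,\mathbf{v}_i\end{bmatrix},
\end{align*}
where the last equality uses $s^2 = 1$. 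This exhibits eigenvalues $+\sigma_i$ and $-\sigma_i$ for every $i$.

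Next I would argue these exhaust the spectrum: the $2k$ vectors $\begin{bmatrix}\mathbf{u}_i \\ \pm\mathbf{v}_i\end{bmatrix}$ are pairwise orthogonal, since for $i\neq j$ the cross terms $\mathbf{u}_i^\intercal\mathbf{u}_j \pm \mathbf{v}_i^\intercal\mathbf{v}_j$ vanish, and within a $\pm$ pair $\mathbf{u}_i^\intercal\mathbf{u}_i - \mathbf{v}_i^\intercal\mathbf{v}_i = 0$. Hence they form a complete orthogonal eigenbasis of $\mathbb{R}^{2k}$, the eigenvalues of $\mathbf{M}$ are exactly $\{\pm\sigma_i\}_{i\in[k]}$, and therefore $\lmod\mathbf{M}\rmod_2 = \max_i|\sigma_i| = \sigma_1 = \lmod\mathbf{B}\rmod_2$.

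This identity is standard and presents no deep obstacle; the only point requiring care is the orthogonality and multiplicity bookkeeping when $\mathbf{B}$ has repeated or zero singular values, in which case the $+\sigma_i$ and $-\sigma_i$ eigenvectors can coincide at eigenvalue $0$. This affects only the multiplicity of $0$ in the spectrum and never the largest-magnitude eigenvalue, so the claim is unaffected. To bypass the eigenbasis bookkeeping entirely, an alternative is the Rayleigh-quotient characterization $\lmod\mathbf{M}\rmod_2 = \max_{\lmod\mathbf{z}\rmod_2 = 1}|\mathbf{z}^\intercal\mathbf{M}\mathbf{z}|$: writing $\mathbf{z} = \begin{bmatrix}\mathbf{a}\\\mathbf{b}\end{bmatrix}$ yields $\mathbf{z}^\intercal\mathbf{M}\mathbf{z} = 2\,\mathbf{a}^\intercal\mathbf{B}\mathbf{b} \le \lmod\mathbf{B}\rmod_2\nrbr{\lmod\mathbf{a}\rmod_2^2 + \lmod\mathbf{b}\rmod_2^2} = \lmod\mathbf{B}\rmod_2$ by Cauchy--Schwarz and the arithmetic--geometric mean inequality, with equality attained by aligning $\mathbf{a},\mathbf{b}$ with the top singular vectors scaled to norm $1/\sqrt{2}$, establishing both directions simultaneously.
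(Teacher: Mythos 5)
Your proof is correct, but it takes a genuinely different route from the paper's. The paper never touches the SVD: it squares the matrix, computes $\mathbf{M}^2 = \begin{bmatrix} \mathbf{B}\mathbf{B}^{\intercal} & \mathbf{0}_{k \times k} \\ \mathbf{0}_{k \times k} & \mathbf{B}^{\intercal}\mathbf{B} \end{bmatrix}$, notes that $\mathbf{B}\mathbf{B}^{\intercal}$ and $\mathbf{B}^{\intercal}\mathbf{B}$ share the same eigenvalues, and then concludes via the identity $\norm{\mathbf{M}}_2^2 = \norm{\mathbf{M}^2}_2$ (valid since $\mathbf{M}$ is symmetric) that $\norm{\mathbf{M}}_2 = \sqrt{\lambda_{\max}(\mathbf{B}\mathbf{B}^{\intercal})} = \norm{\mathbf{B}}_2$. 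Your argument instead diagonalizes $\mathbf{M}$ explicitly: the stacked singular-vector pairs $\begin{bmatrix}\mathbf{u}_i \\ \pm\mathbf{v}_i\end{bmatrix}$ form an orthogonal eigenbasis, so the spectrum of $\mathbf{M}$ is exactly $\{\pm\sigma_i\}_{i \in [k]}$ — this is the classical Jordan--Wielandt dilation, and it yields strictly more information (the full spectrum, not just its largest element), which would be needed if one later wanted eigenvalue interlacing or trace computations for such block matrices. The paper's squaring trick is shorter and avoids invoking the SVD, at the cost of needing the two auxiliary facts about $\mathbf{M}^2$. One small remark on your multiplicity caveat: when $\sigma_i = 0$ the two eigenvectors $\begin{bmatrix}\mathbf{u}_i \\ \mathbf{v}_i\end{bmatrix}$ and $\begin{bmatrix}\mathbf{u}_i \\ -\mathbf{v}_i\end{bmatrix}$ do not coincide — they remain distinct and orthogonal, only their eigenvalues merge — so your construction works uniformly and no special bookkeeping is actually required. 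Your Rayleigh-quotient alternative is also sound and is the most self-contained of the three arguments.
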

Further, we use Lemma 6.12 from \cite{wainwright2019high}, which states for a random symmetric matrix $\mathbf{M} $
\begin{align}
\mathbb{P}\left[ \lmod\mathbf{M} \rmod_{2}\geq \delta \right] 	\leq 2 \text{tr}\left(\Psi_{\mathbf{M} }(\lambda)\right) e^{-\lambda \delta} 
\end{align}
where $\delta > 0$, $\Psi_{\mathbf{M}}(\lambda)$ denotes the moment generating function (MGF) and $\text{tr}$ represents the trace. Hence, in order to derive the bound of $\lmod\mathbf{M} \rmod_{2}$, we bound the trace of  the MGF of $\mathbf{M}$. Using the properties of sub-Gaussian and sub-exponential distributions, we derive a matrix $\mathbf{V} \in \mathbb{R}^{2k \times 2k}$ such that
\begin{align*}
\Psi_{\mathbf{M}}(\lambda) \preccurlyeq e^{\frac{\lambda^2 \mathbf{V}}{2}}
\end{align*}
Hence, in order to bound the trace of $\Psi_{\mathbf{Q}}(\lambda)$, we focus on the eigenvalues of $\mathbf{V}$. We later observe that $\mathbf{V}$ is a matrix with only two non-zero eigenvalues. This helps us to derive improved bounds as compared to the case of all $2k$ eigenvalues being non-zero.  More details can be seen in Appendix \ref{appn:B_2}. It should be noted that Theorem \ref{thm:B_2} as well as the lemmas are proposed for a generalized setting of rows of  $\mathbf{E}_{\mathcal{S}}, \mathbf{X}^{\star}_{\mathcal{S}}$ following two sub-Gaussian distributions which may be dependent and hence can be used in other works as well. 

Returning to deriving bound of $m_2$ in Eq. \eqref{eq:zt1_simpl}, we substitute $\delta = \frac{\gamma}{8 \indinf{\mathbf{\Sigma}_{\mathbf{x}_{\mathcal{S}^c} \mathbf{x}_\mathcal{S}}}}$ in Lemma \ref{lem:m2_bound} to claim: 
\begin{align}
m_2 = \indinf{\E\sqbr{\mathbf{R}} \left( \mathbf{Q}^{-1} - \nrbr{\E\sqbr{\mathbf{Q}}}^{-1}\right)} \leq \frac{\gamma}{8} \label{eq:m2_boundf}
\end{align}
if $n = \Omega\nrbr{\frac{k^2 H^2_{\max}}{\gamma^2 \nrbr{C_{\text{min}} + 2F_{\text{min}} + D_{\text{min}}}^4 } \log(p)}$. This analysis provides insight into the nature of the dependence of sample complexity on dimensions ($k$ and $p$), and also other parameters like mutual incoherence ($\gamma$), minimum eigenvalues $\crbr{C_{\min}, D_{\min}, F_{\min}}$, and other constants like $H_{\max}$.

Further, we proceed to bound the term $m_3$ in Eq. \eqref{eq:zt1_simpl} by using the sub-multiplicative property of norms
and substituting $\delta = \frac{\sqrt{\gamma}}{4}$ in Lemma \ref{lem:m1_bound} and Lemma \ref{lem:m2_bound} to claim the following with high probability 
	\begin{align*}
	m_3 = \indinf{ \nrbr{\mathbf{R} - \E\sqbr{\mathbf{R}}}} \indinf{ \left( \mathbf{Q}^{-1} - \nrbr{\E\sqbr{\mathbf{Q}}}^{-1}\right)}\leq \frac{\gamma}{16} 
	\end{align*}
if $n = \Omega\nrbr{k^2 \log(p)}$. We substitute Eq. \eqref{eq:m1_boundf}, Eq. \eqref{eq:m2_boundf}, and above equation in Eq. \eqref{eq:zt1_simpl}  to arrive at: 
\begin{align}
\lmod\hat{\mathbf{{z}}}_{\mathcal{S}^c_{t_1}}   \rmod_{\infty} 
\leq 1 - \gamma + \frac{\gamma}{16} + \frac{\gamma}{8} + \frac{\gamma}{16} = 1 - \frac{3\gamma}{4} \label{eq:Zsc_t1_bound} 
\end{align}
In this sub-section, we derived an upper bound for the infinity norm of $\hat{\mathbf{{z}}}_{\mathcal{S}^c_{t_1}} $ which will be used later to bound the infinity norm of $\hat{\mathbf{{z}}}_{\mathcal{S}^c} $ defined in Eq. \eqref{eq:Zsc_def} to ensure strict dual feasibility. More importantly, our analysis also sheds light on the dependence of sample complexity on parameters like mutual incoherence, minimum eigenvalue, and other constants like $G_{\max}$ and $H_{\max}$, which helps us to study critical scientific limitations or behavior of the LASSO algorithm under adversarial attacks. 
\subsubsection{Choosing regularization parameter ($\lambda$)}
\label{sec:Zsct2_infb}
In this subsection, we continue the discussion on strict dual feasibility and focus on how the adversary affects regularization parameter. We start from $\hat{\mathbf{{z}}}_{\mathcal{S}^c_{t_2}} $ in Eq. \eqref{eq:Zsct2_def}, which is a $(p-k)$ dimensional random vector. Using properties of norms, whose details are mentioned in Appendix \ref{sec:appdn_zsct2} and using the bound derived in Eq. \eqref{eq:Zsc_t1_bound}, we arrive at:  
	\begin{align}
	\norm{\hat{\mathbf{{z}}}_{\mathcal{S}^c_{t_2}}}_{\infty} \leq \frac{1}{\lambda} \norm{\frac{1}{n} \mathbf{X}_{\mathcal{S}^c}^{\intercal} \mathbf{P} \mathbf{e}_y }_{\infty} + \frac{1}{\lambda} \norm{\frac{1}{n} \mathbf{X}_{\mathcal{S}^c}^{\intercal}  \mathbf{E_{x_{\mathcal{S}}}} \mathbf{w}^{\star}_{\mathcal{S}} }_{\infty}  + \frac{1}{\lambda} \nrbr{1-\frac{3\gamma}{4}}\norm{ \frac{1}{n}\mathbf{X}_{\mathcal{S}}^{\intercal} \mathbf{E_{x_{\mathcal{S}}}} \mathbf{w}^{\star}_{\mathcal{S}} }_{\infty} \label{eq:Zsct2_decom}
	\end{align}

This decomposition enables us to analyze the effect of the adversarial perturbation on various model parameters. For example, the first term on the RHS of the above equation is concerned with the interaction of adversarial perturbation in $\mathcal{S}^c$ with $\mathbf{e}_y$. This term can be bounded by choosing an appropriate value of $\lambda$, as shown in the following lemma. 
\begin{lemma}
	\label{lem:Xe_ey_lambda}
	If  $\lambda = \lambda_1 \geq \frac{q_1 \sigma_{e_y}}{\gamma}\sqrt{\frac{2\log(p)}{n}}$, where constant $q^2_1 = 3  \nrbr{ C_{\max} + 2F_{\max} + D_{\max} } $, then $\norm{\frac{\mathbf{X}_{\mathcal{S}^c}^{\intercal} \mathbf{P} \mathbf{e}_y}{n \lambda} }_{\infty} \leq \frac{\gamma}{8}$ with probability of at least $1-\mathcal{O}\nrbr{\frac{1}{p}}$.
\end{lemma}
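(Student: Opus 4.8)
The plan is to prove the bound one coordinate at a time and then take a union bound over the at most $p$ indices in $\mathcal{S}^c$. Fix $i \in \mathcal{S}^c$ and let $\mathbf{X}_i$ denote the $i$-th column of $\mathbf{X}_{\mathcal{S}^c}$, so that the $i$-th entry of the target vector is the scalar $Z_i = \frac{1}{n}\mathbf{X}_i^\intercal \mathbf{P}\mathbf{e}_y$. Since $\mathbf{P}$ in Eq. \eqref{eq:projmat_def} is a symmetric idempotent (orthogonal projection) matrix, I would rewrite $Z_i = \frac{1}{n}\nrbr{\mathbf{P}\mathbf{X}_i}^\intercal \mathbf{e}_y$. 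The structural observation is that $\mathbf{P}$ and $\mathbf{X}_i$ depend only on the design matrix $\mathbf{X}$, whereas the output noise $\mathbf{e}_y$ is independent of $\mathbf{X}$; this lets me first condition on $\mathbf{X}$ and treat $Z_i$ as a linear functional of the sub-Gaussian vector $\mathbf{e}_y$.

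Conditioning on $\mathbf{X}$, the vector $\mathbf{P}\mathbf{X}_i$ is fixed and $Z_i = \frac{1}{n}\sum_{j=1}^{n}\nrbr{\mathbf{P}\mathbf{X}_i}_j \nrbr{\mathbf{e}_y}_j$ is a weighted sum of independent zero-mean sub-Gaussian variables, hence itself zero-mean sub-Gaussian with variance proxy $\frac{\sigma_{e_y}^2}{n^2}\norm{\mathbf{P}\mathbf{X}_i}_2^2$. Because an orthogonal projection has spectral norm at most one, $\norm{\mathbf{P}\mathbf{X}_i}_2^2 \le \norm{\mathbf{X}_i}_2^2$, so the conditional sub-Gaussian tail bound gives, for any $t>0$, $\P\sqbr{\lvert Z_i\rvert \ge t \mid \mathbf{X}} \le 2\exp\nrbr{-\frac{n^2 t^2}{2\sigma_{e_y}^2 \norm{\mathbf{X}_i}_2^2}}$. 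Next I would remove the conditioning by controlling $\frac{1}{n}\norm{\mathbf{X}_i}_2^2 = \frac{1}{n}\sum_{j=1}^n \mathbf{X}_{ji}^2$. Each entry $\mathbf{X}_{ji} = \mathbf{x}^{\star}_{ji} + \mathbf{e}_{\mathbf{x},ji}$ is zero-mean sub-Gaussian, so $\mathbf{X}_{ji}^2$ is sub-exponential with mean $\nrbr{\mathbf{\Sigma} + 2\mathbf{\Sigma}_{\mathbf{ex}} + \mathbf{\Sigma_e}}_{ii} \le C_{\max} + 2F_{\max} + D_{\max}$, using that a diagonal entry of a covariance matrix is bounded by its maximum eigenvalue and that the maximum eigenvalue is sub-additive. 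A Bernstein-type sub-exponential concentration together with $n = \Omega(\log p)$ then yields the event $\mathcal{E}_i = \crbr{\frac{1}{n}\norm{\mathbf{X}_i}_2^2 \le 3\nrbr{C_{\max} + 2F_{\max} + D_{\max}} = q_1^2}$ with $\P\sqbr{\mathcal{E}_i^c} \le \mathcal{O}\nrbr{1/p^2}$, the factor $3$ absorbing the concentration slack and accounting for the constant in $q_1^2$.

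To close, I would combine the two pieces by the law of total probability: on $\mathcal{E}_i$ the variance proxy is at most $\frac{\sigma_{e_y}^2 q_1^2}{n}$, so choosing $t = \frac{\lambda\gamma}{8}$ with $\lambda \ge \frac{q_1\sigma_{e_y}}{\gamma}\sqrt{\frac{2\log p}{n}}$ drives the tail exponent to order $\log p$, giving $\P\sqbr{\lvert Z_i\rvert \ge t} \le \P\sqbr{\lvert Z_i\rvert \ge t \mid \mathcal{E}_i} + \P\sqbr{\mathcal{E}_i^c} = \mathcal{O}(1/p^2)$. A union bound over the at most $p$ coordinates $i \in \mathcal{S}^c$ then yields $\norm{\frac{1}{n\lambda}\mathbf{X}_{\mathcal{S}^c}^\intercal \mathbf{P}\mathbf{e}_y}_\infty \le \gamma/8$ with probability $1 - \mathcal{O}(1/p)$.

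I expect the main obstacle to be the statistical dependence between the projection $\mathbf{P}$ (built from $\mathbf{X}_{\mathcal{S}}$) and the column $\mathbf{X}_i$: bounding $\norm{\mathbf{P}\mathbf{X}_i}_2$ directly is awkward because the two factors are correlated through the joint covariance of the corrupted features, and $\mathbf{P}$ is itself a random data-dependent object. Conditioning on $\mathbf{X}$ and then invoking only the contractivity $\norm{\mathbf{P}\mathbf{X}_i}_2 \le \norm{\mathbf{X}_i}_2$ sidesteps this, at the mild cost of a slightly loose variance proxy. The other delicate point is justifying that $\mathbf{e}_y$ is independent of the adversarially designed $\mathbf{X}$ so that the conditioning step is valid, since the adversary is allowed to build $\mathbf{e}_{\mathbf{x}}$ from $\mathbf{x}^{\star}$; the independence of the output noise $\mathbf{e}_y$ from the regressors and their perturbations is what makes the conditional linear-functional argument go through.
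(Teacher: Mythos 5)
Your argument is correct and shares the paper's skeleton: condition on $\mathbf{X}$, view each entry of $\frac{1}{n\lambda}\mathbf{X}_{\mathcal{S}^c}^{\intercal}\mathbf{P}\mathbf{e}_y$ as a zero-mean sub-Gaussian linear functional of $\mathbf{e}_y$ (legitimate precisely because $\mathbf{e}_y$ is independent of the corrupted design, as you note), remove $\mathbf{P}$ via Lemma \ref{lem:Proj_mat}, and finish with sub-Gaussian tails plus a union bound over $\mathcal{S}^c$. The genuine difference is how the data-dependent variance proxy is tamed. The paper bounds all $p-k$ entries at once through the spectral norm $\norm{\frac{1}{n}\mathbf{X}_{\mathcal{S}^c}^{\intercal}\mathbf{X}_{\mathcal{S}^c}}_2$, controlled via the decomposition of Eq. \eqref{eq:m2_eq2} together with Eq. \eqref{eq:eig_SS} and Theorem \ref{thm:B_2} at a stated cost of $n=\Omega(k\log p)$; you bound only the diagonal entries $\frac{1}{n}\norm{\mathbf{X}_i}_2^2$ (for the columns $\mathbf{X}_i$, $i\in\mathcal{S}^c$) by scalar sub-exponential concentration, coordinate by coordinate. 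Your route is more elementary and is all that an entrywise $\ell_{\infty}$ bound actually needs, since the per-entry proxy is $\frac{\sigma_{e_y}^2}{n^2}\norm{\mathbf{P}\mathbf{X}_i}_2^2\le\frac{\sigma_{e_y}^2}{n^2}\norm{\mathbf{X}_i}_2^2$ and the spectral norm dominates every diagonal entry, so the paper's step is strictly stronger than required. It also sidesteps a soft spot in the paper: Eq. \eqref{eq:eig_SS} and Theorem \ref{thm:B_2} are formulated for support-indexed ($k$-dimensional) matrices, and a careful application to the $(p-k)\times(p-k)$ block would bring a dimension term $p-k$, not $k$, into the exponent, whereas your column-wise step needs only $n=\Omega(\log p)$ and has no such mismatch. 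Two loose ends in your write-up are equally present in the paper's proof and hence are not gaps relative to it: (i) the constant $3$ in the event $\frac{1}{n}\norm{\mathbf{X}_i}_2^2\le q_1^2$ hides that sub-exponential deviations are measured on the variance-proxy scale $\nrbr{\sigma\sqrt{\mathbf{\Sigma}_{ii}}+r\sqrt{(\mathbf{\Sigma_e})_{ii}}}^2$ rather than as a multiple of the mean, and (ii) with the lemma's literal constants the final tail exponent is $c\log p$ for a small absolute constant $c<2$, so the prefactor in the lower bound on $\lambda$ must be inflated by an absolute constant for the union bound over the $p-k$ coordinates to deliver the claimed $1-\mathcal{O}\nrbr{\frac{1}{p}}$.
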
 
Also, note that $\mathbf{P}$ defined in Eq. \eqref{eq:projmat_def} contains the adversarial perturbation in $\mathcal{S}$, but it does not affect the variance as $\mathbf{P}$ is proved to be a projection matrix in Lemma \ref{lem:Proj_mat}. 
\begin{lemma}
	\label{lem:Proj_mat}
	$\mathbf{P}$ defined in Eq. \eqref{eq:projmat_def} is a projection matrix and hence $\lmod \mathbf{P} \rmod^2_{2} = 1$. 
\end{lemma}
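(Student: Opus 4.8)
The plan is to show that $\mathbf{P}$ is a symmetric idempotent matrix (i.e., an orthogonal projector) and then read off its spectral norm from its eigenvalue structure. Write $\mathbf{H} = \mathbf{X}_{\mathcal{S}} (\mathbf{X}_{\mathcal{S}}^{\intercal} \mathbf{X}_{\mathcal{S}})^{-1} \mathbf{X}_{\mathcal{S}}^{\intercal}$, the ``hat matrix'', so that $\mathbf{P} = \mathbf{I}_n - \mathbf{H}$ by Eq. \eqref{eq:projmat_def}.

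First I would verify symmetry. Since $\mathbf{X}_{\mathcal{S}}^{\intercal}\mathbf{X}_{\mathcal{S}}$ is symmetric, so is its inverse, and therefore $\mathbf{H}^{\intercal} = \mathbf{X}_{\mathcal{S}} \nrbr{\mathbf{X}_{\mathcal{S}}^{\intercal}\mathbf{X}_{\mathcal{S}}}^{-1} \mathbf{X}_{\mathcal{S}}^{\intercal} = \mathbf{H}$, whence $\mathbf{P}^{\intercal} = \mathbf{I}_n - \mathbf{H}^{\intercal} = \mathbf{P}$. Next I would check idempotency: expanding $\mathbf{H}^2$, the inner factor $\mathbf{X}_{\mathcal{S}}^{\intercal}\mathbf{X}_{\mathcal{S}}\nrbr{\mathbf{X}_{\mathcal{S}}^{\intercal}\mathbf{X}_{\mathcal{S}}}^{-1}$ collapses to the identity, giving $\mathbf{H}^2 = \mathbf{H}$. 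Consequently $\mathbf{P}^2 = (\mathbf{I}_n - \mathbf{H})^2 = \mathbf{I}_n - 2\mathbf{H} + \mathbf{H}^2 = \mathbf{I}_n - \mathbf{H} = \mathbf{P}$. Together with symmetry, this establishes that $\mathbf{P}$ is an orthogonal projection matrix.

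For the norm claim, I would use that any idempotent matrix has eigenvalues $\mu$ satisfying $\mu^2 = \mu$, i.e., $\mu \in \{0,1\}$, and that for a symmetric matrix the spectral norm equals the largest eigenvalue in absolute value. Thus $\lmod \mathbf{P} \rmod_2 \in \{0,1\}$, and it equals $1$ precisely when $\mathbf{P} \neq \mathbf{0}$, i.e., when the column space of $\mathbf{X}_{\mathcal{S}}$ is a proper subspace of $\mathbb{R}^n$. Existence of $\nrbr{\mathbf{X}_{\mathcal{S}}^{\intercal}\mathbf{X}_{\mathcal{S}}}^{-1}$ forces $\mathbf{X}_{\mathcal{S}}$ to have full column rank $k$, and the regime $n = \Omega(k^2\log p)$ guarantees $n > k$, so $\mathbf{P}$ projects onto the nonzero $(n-k)$-dimensional orthogonal complement; hence $\mathbf{P}$ carries eigenvalue $1$ with multiplicity $n-k \geq 1$, giving $\lmod \mathbf{P} \rmod_2 = 1$ and $\lmod \mathbf{P} \rmod_2^2 = 1$.

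Essentially every step here is routine linear algebra, so I do not anticipate a genuine obstacle. The only point deserving care is confirming $\mathbf{P}\neq\mathbf{0}$ so that the norm is $1$ rather than $0$, which is exactly where the full-rank and dimension count ($n>k$) enter; this is why the lemma is used in conjunction with the sample-complexity regime rather than in isolation.
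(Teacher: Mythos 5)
Your proof is correct and follows essentially the same route as the paper: both reduce to the hat matrix $\mathbf{H} = \mathbf{X}_{\mathcal{S}}\left(\mathbf{X}_{\mathcal{S}}^{\intercal}\mathbf{X}_{\mathcal{S}}\right)^{-1}\mathbf{X}_{\mathcal{S}}^{\intercal}$ and verify idempotency by direct computation. Yours is in fact slightly more complete, since the paper only checks $\mathbf{H}^2 = \mathbf{H}$, leaving symmetry and the non-degeneracy $\mathbf{P}\neq\mathbf{0}$ (which, via $n>k$ and full column rank, is what makes the spectral norm exactly $1$ rather than $0$) implicit.
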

Similarly the second term on the RHS of Eq. \eqref{eq:Zsct2_decom} signifies the interaction of adversarial perturbation with $\mathbf{X}^{\star}_{\mathcal{S}^c}$. It can be bounded by choosing a suitable value of $\lambda$, as shown in the following lemma. 
\begin{lemma}
	\label{lem:XscEsw}
	If $\lambda = \lambda_2 \geq  \frac{16}{\gamma} \max \crbr{b_2, q_2 \sqrt{\frac{4 \log(p)}{n}}}$, then $\norm{\frac{\mathbf{X}_{\mathcal{S}^c}^{\intercal}  \mathbf{E_{x_{\mathcal{S}}}} \mathbf{w}^{\star}_{\mathcal{S}}}{n\lambda}}_{\infty}   \leq \frac{\gamma}{8}$ with probability of at least $1-\mathcal{O}\nrbr{\frac{1}{p}}$, where $b_2 = \norm{\mathbf{\Sigma}_{\mathbf{x}_{\mathcal{S}^c} \mathbf{e}_{\mathcal{S}}} \mathbf{w}^{\star}_{\mathcal{S}}  }_{\infty}$ and $q_2  = r\sqrt{ \mathbf{w}^{\star\intercal}_{\mathcal{S}} \mathbf{\Sigma}_{\mathbf{e}_{\mathcal{S} \mathcal{S}}} \mathbf{w}^{\star}_{\mathcal{S}}}   \max\limits_{i \in \mathcal{S}^c} \nrbr{\sigma \sqrt{\mathbf{ \Sigma}}_{ii} + r \sqrt{\mathbf{ \Sigma_e}}_{ii}} $.
\end{lemma}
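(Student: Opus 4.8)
The plan is to introduce the $(p-k)$-dimensional random vector $\mathbf{z} = \frac{1}{n}\mathbf{X}_{\mathcal{S}^c}^{\intercal} \mathbf{E_{x_{\mathcal{S}}}} \mathbf{w}^{\star}_{\mathcal{S}}$, so the claim reduces to $\frac{1}{\lambda}\norm{\mathbf{z}}_{\infty} \leq \gamma/8$. The first move is to split $\mathbf{z}$ into its mean and a centered fluctuation, $\mathbf{z} = \E\sqbr{\mathbf{z}} + \nrbr{\mathbf{z} - \E\sqbr{\mathbf{z}}}$. Because the $j$-th rows of $\mathbf{X}_{\mathcal{S}^c}$ and $\mathbf{E_x}$ come from the same sample, a direct computation gives $\E\sqbr{\mathbf{z}} = \mathbf{\Sigma}_{\mathbf{x}_{\mathcal{S}^c} \mathbf{e}_{\mathcal{S}}} \mathbf{w}^{\star}_{\mathcal{S}}$, whose $\ell_\infty$ norm is exactly $b_2$. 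By the triangle inequality it then suffices to control $\norm{\mathbf{z} - \E\sqbr{\mathbf{z}}}_{\infty}$ by the fluctuation scale $q_2\sqrt{4\log(p)/n}$ with high probability.

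For a fixed $i \in \mathcal{S}^c$, the $i$-th coordinate of the fluctuation is $\frac{1}{n}\sum_{j=1}^n \nrbr{U_j^{(i)} V_j - \E\sqbr{U_j^{(i)} V_j}}$, where I write $U_j^{(i)} = \mathbf{x}^{\star(j)}_i + (\mathbf{e_x})^{(j)}_i$ for the $(j,i)$ entry of $\mathbf{X}_{\mathcal{S}^c}$ and $V_j = \mathbf{e}_{\mathbf{x}_{\mathcal{S}}}^{(j)\intercal}\mathbf{w}^{\star}_{\mathcal{S}}$. The factor $U_j^{(i)}$ is a sum of two sub-Gaussians and hence sub-Gaussian with parameter $\sigma\sqrt{\mathbf{\Sigma}}_{ii} + r\sqrt{\mathbf{\Sigma_e}}_{ii}$, while $V_j$ is sub-Gaussian with parameter $r\sqrt{\mathbf{w}^{\star\intercal}_{\mathcal{S}}\mathbf{\Sigma}_{\mathbf{e}_{\mathcal{S}\mathcal{S}}}\mathbf{w}^{\star}_{\mathcal{S}}}$, the latter obtained by specializing the sub-Gaussian bound of Assumption \ref{assum:ex_subG} to $\bm{\alpha}$ equal to $\mathbf{w}^{\star}_{\mathcal{S}}$ padded with zeros on $\mathcal{S}^c$. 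Each centered product $U_j^{(i)}V_j - \E\sqbr{U_j^{(i)}V_j}$ is therefore sub-exponential with a parameter controlled by the product of these two sub-Gaussian scales, whose maximum over $i \in \mathcal{S}^c$ is precisely $q_2$. Since the samples indexed by $j$ are independent, I would then apply a Bernstein-type bound for sums of independent sub-exponential variables.

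In the small-deviation regime the Bernstein inequality gives a Gaussian tail $\P\sqbr{\,|\text{fluctuation}_i| \geq t\,} \leq 2\exp\nrbr{-cnt^2/q_2^2}$. Taking a union bound over the at most $p$ indices of $\mathcal{S}^c$ yields $\P\sqbr{\norm{\mathbf{z} - \E\sqbr{\mathbf{z}}}_{\infty} \geq t} \leq 2p\exp\nrbr{-cnt^2/q_2^2}$, and choosing $t = q_2\sqrt{4\log(p)/n}$ drives this to $\mathcal{O}(1/p)$. On the resulting event, $\norm{\mathbf{z}}_{\infty} \leq b_2 + q_2\sqrt{4\log(p)/n} \leq 2\max\crbr{b_2,\, q_2\sqrt{4\log(p)/n}}$. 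The stated choice $\lambda \geq \frac{16}{\gamma}\max\crbr{b_2,\, q_2\sqrt{4\log(p)/n}}$ then forces $\frac{\gamma\lambda}{8} \geq 2\max\crbr{b_2,\, q_2\sqrt{4\log(p)/n}} \geq \norm{\mathbf{z}}_{\infty}$, which is exactly $\frac{1}{\lambda}\norm{\mathbf{z}}_{\infty} \leq \gamma/8$.

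The one delicate point is that within a single sample the adversarial perturbation $\mathbf{e_x}^{(j)}$ may depend on $\mathbf{x}^{\star(j)}$, so the two factors $U_j^{(i)}$ and $V_j$ are dependent and the MGF of their product does not factor. The safe route is to invoke that the product of two sub-Gaussian variables is sub-exponential regardless of dependence (for instance through $|UV| \leq \tfrac12(U^2+V^2)$ together with the sub-exponentiality of squares of sub-Gaussians, or a Cauchy--Schwarz bound on the joint MGF), which still produces a sub-exponential parameter governed by the product of the individual sub-Gaussian parameters and hence by $q_2$. I expect the only genuine bookkeeping to be tracking the numerical constants so that the exponent beats the union bound and reproduces the factor-$4$ inside the square root; the overall template mirrors the argument already used for Lemma \ref{lem:Xe_ey_lambda}.
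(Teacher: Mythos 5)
Your proposal is correct and follows essentially the same route as the paper's proof: split $\tfrac{1}{n}\mathbf{X}_{\mathcal{S}^c}^{\intercal}\mathbf{E_{x_{\mathcal{S}}}}\mathbf{w}^{\star}_{\mathcal{S}}$ into its mean (which equals $b_2$ in $\ell_\infty$ norm) plus a centered fluctuation, treat each coordinate as a sample mean of products of two dependent sub-Gaussians with parameters $\sigma\sqrt{\mathbf{\Sigma}}_{ii}+r\sqrt{\mathbf{\Sigma_e}}_{ii}$ and $r\sqrt{\mathbf{w}^{\star\intercal}_{\mathcal{S}}\mathbf{\Sigma}_{\mathbf{e}_{\mathcal{S}\mathcal{S}}}\mathbf{w}^{\star}_{\mathcal{S}}}$, invoke sub-exponentiality of the product, and finish with a Bernstein tail bound, a union bound over $\mathcal{S}^c$, and the triangle inequality against the stated lower bound on $\lambda$. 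The only cosmetic difference is the technical lemma used for dependence (the paper's Lemma \ref{lem:prod_subG} uses the polarization identity $XY=\tfrac{1}{4}\bigl((X+Y)^2-(X-Y)^2\bigr)$ rather than your $|UV|\leq\tfrac{1}{2}(U^2+V^2)$ or Cauchy--Schwarz suggestion), which does not change the argument.
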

Similarly the third term in RHS of Eq. \eqref{eq:Zsct2_decom} is concerned with the interaction of adversarial perturbation in $\mathbf{X}^{\star}_{\mathcal{S}}$. It can be bounded by selecting a suitable value of $\lambda$, which is presented in Lemma \ref{lem:XsEsw} in the Appendix. Substituting the bounds derived in Lemma \ref{lem:Xe_ey_lambda}, Lemma \ref{lem:XscEsw}, and Lemma \ref{lem:XsEsw} in Eq. \eqref{eq:Zsct2_decom}, we obtain: 
\begin{align}
\norm{\hat{\mathbf{{z}}}_{\mathcal{S}^c_{t_2}}}_{\infty} \leq \frac{\gamma}{8} + \frac{\gamma}{8} + \frac{\gamma}{8} = \frac{3\gamma}{8} \label{eq:zsct2_ub}
\end{align}
It should be noted that this bound is derived under some lower bound constraint on the regularization parameter. The lower bound can be obtained by taking the maximum of $\lambda_1, \lambda_2$, and $\lambda_3$ presented in Lemma \ref{lem:Xe_ey_lambda}, Lemma \ref{lem:XscEsw}, and Lemma \ref{lem:XsEsw} in Appendix respectively: 
	\begin{align}
	\lambda \geq & \max\crbr{\lambda_1, \lambda_2, \lambda_3} \nonumber \\
	= & \max\crbr{\frac{16b}{\gamma}, \frac{q_1 \sigma_{e_y}}{\gamma} \sqrt{\frac{2\log(p)}{n}}, \frac{16q}{\gamma}\sqrt{\frac{4\log(p)}{n}}} \label{eq:lambda_f1}
	\end{align}
where $q = r\sqrt{ \mathbf{w}^{\star\intercal}_{\mathcal{S}} \mathbf{\Sigma}_{\mathbf{e}_{\mathcal{S} \mathcal{S}}} \mathbf{w}^{\star}_{\mathcal{S}}}   \max\limits_{i \in [p]} \nrbr{\sigma \sqrt{\mathbf{ \Sigma}}_{ii} + r \sqrt{\mathbf{ \Sigma_e}}_{ii}} $ and $b = \norm{\mathbf{\Sigma}_{\mathbf{x} \mathbf{e}_{\mathcal{S}}} \mathbf{w}^{\star}_{\mathcal{S}}  }_{\infty}$.  This completes the lower bound proof of $\lambda$ used in Theorem \ref{thm:main}. 

Note that for a fixed budget, the adversary can increase the lower bound of $\lambda$ by designing $\mathbf{\Sigma}_{\mathbf{e}_{\mathcal{S} \mathcal{S}}}$ such that the eigenvector corresponding to maximum eigenvalue of   $\mathbf{\Sigma}_{\mathbf{e}_{\mathcal{S} \mathcal{S}}} $ is parallel to $\mathbf{w}^{\star}_{\mathcal{S}}$ to increase $q$ in Eq. \eqref{eq:lambda_f1}. A higher value of the lower bound of $\lambda$ implies more penalization on the regression parameter vector, which might make the learning algorithm to incorrectly estimate the small non-zero parameters in $\mathcal{S}$ to be zero. To mitigate this adversarial effect, the learner requires more samples $n$, to decrease the lower bound on $\lambda$.

Returning to the strict dual feasibility condition, the bound for  $ \hat{\mathbf{{z}}}_{\mathcal{S}^c} $ defined in Eq. \eqref{eq:Zsc_def} is derived by using the bound for $\hat{\mathbf{{z}}}_{\mathcal{S}^c_{t_2}}$ in Eq. \eqref{eq:zsct2_ub} and the  bound for $\hat{\mathbf{{z}}}_{\mathcal{S}^c_{t_1}}$ in  Eq. \eqref{eq:Zsc_t1_bound} 
\begin{align}
\lmod \hat{\mathbf{{z}}}_{\mathcal{S}^c} \rmod_{\infty} \leq 1 - \frac{3\gamma}{4} + \frac{3\gamma}{8} = 1- \frac{3\gamma}{8 } < 1 \label{eq:strictdualfv_f}
\end{align}
In this sub-section, we have verified the strict dual feasibility condition by proving that $\lmod \hat{\mathbf{{z}}}_{\mathcal{S}^c} \rmod_{\infty}  < 1$ as $\gamma > 0$ in the above equation. This ensures that KKT conditions are met, which proves the first claim of Theorem \ref{thm:main}, i.e., $\mathcal{S}(\hat{\mathbf{w}}) \subseteq \mathcal{S}(\mathbf{w}^{\star})$. It should be noted that we derive the lower bound constraint on $\lambda$ for giving theoretical guarantees. For practical purposes, we choose $\lambda = \mathcal{O}\nrbr{\sqrt{\frac{\log(p)}{n}}}$ as done in the sparse regression literature \cite{wainwright2009sharp,ravikumar2009sparse, ravikumar2010high,ravikumar2011high, daneshmand2014estimating}.  
\subsection{Uniqueness of the solution}
\label{sec:uniq}
In this sub-section, we prove the uniqueness of the optimal solution $\hat{\mathbf{w}}_{\mathcal{S}}$.  We need the second order derivative, $\left[ \nabla^2l((\mathbf{w}_{\mathcal{S}}, \mathbf{0}))\right]_{\mathcal{S}, \mathcal{S}} = \frac{1}{n}\mathbf{X}_{\mathcal{S}}^{\intercal} \mathbf{X}_{\mathcal{S}}$ (computed in Appendix \ref{appn:lossfn_0,1,2stcond}) to be positive definite for the problem in Eq. \eqref{eq:res_lasso} to be strictly convex in the support space (see Eq.\eqref{eq:res_lassapn} in the appendix for a formal definition). The positive definiteness of a submatrix of the sample covariance is proved in the following lemma.
\begin{lemma}
	\label{lem:sampleHess_psd}
	If assumption \ref{assum:Sigma_Xpsd} holds and $n = \Omega\left( k \log(p)\right)$, then we claim 
	\begin{align*}
	\mathbb{P} \left[ \Lambda_{\text{min}} \left( \frac{\mathbf{X}_{\mathcal{S}}^{\intercal} \mathbf{X}_{\mathcal{S}}}{n}  \right)  \geq \frac{(C_{\text{min}} + 2F_{\text{min}} 
		+ D_{\text{min}})}{2}  \right] \geq 1 - \mathcal{O}\left(\frac{1}{p}\right)
	\end{align*}
\end{lemma}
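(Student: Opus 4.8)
The plan is to reduce the statement about the random matrix $\frac{1}{n}\mathbf{X}_{\mathcal{S}}^{\intercal}\mathbf{X}_{\mathcal{S}}$ to a deterministic eigenvalue bound on its population counterpart together with a spectral-norm concentration estimate. First I would record the population decomposition already used in the discussion of Lemma~\ref{lem:m1_bound}: writing $\mathbf{x}_{\mathcal{S}} = \mathbf{x}^{\star}_{\mathcal{S}} + \mathbf{e}_{\mathcal{S}}$ and taking expectations gives
\[
\mathbf{\Sigma}_{\mathbf{x}_\mathcal{S} \mathbf{x}_\mathcal{S}} = \mathbf{\Sigma}_{\mathbf{x}^\star_\mathcal{S} \mathbf{x}^\star_\mathcal{S}} + \nrbr{\mathbf{\Sigma}_{\mathbf{e}_\mathcal{S} \mathbf{x}^\star_\mathcal{S}} + \mathbf{\Sigma}_{\mathbf{e}_\mathcal{S} \mathbf{x}^\star_\mathcal{S}}^{\intercal}} + \mathbf{\Sigma}_{\mathbf{e}_\mathcal{S} \mathbf{e}_\mathcal{S}},
\]
a sum of three symmetric matrices. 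Since $\mathbf{\Sigma}_{\mathbf{x}^\star_\mathcal{S} \mathbf{x}^\star_\mathcal{S}}$ is a principal submatrix of $\mathbf{\Sigma}$, Cauchy interlacing yields $\Lambda_{\min}(\mathbf{\Sigma}_{\mathbf{x}^\star_\mathcal{S} \mathbf{x}^\star_\mathcal{S}}) \geq C_{\min}$; the symmetrized cross term contributes $2F_{\min}$ and the perturbation block contributes $D_{\min}$ by the definitions of $F_{\min}$ and $D_{\min}$. Applying Weyl's inequality to this three-term sum then gives the deterministic bound $\Lambda_{\min}(\mathbf{\Sigma}_{\mathbf{x}_\mathcal{S} \mathbf{x}_\mathcal{S}}) \geq C_{\min} + 2F_{\min} + D_{\min}$.

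Next I would pass from the population to the sample covariance through a second, perturbative application of Weyl's inequality,
\[
\Lambda_{\min}\nrbr{\tfrac{1}{n}\mathbf{X}_\mathcal{S}^\intercal \mathbf{X}_\mathcal{S}} \geq \Lambda_{\min}\nrbr{\mathbf{\Sigma}_{\mathbf{x}_\mathcal{S} \mathbf{x}_\mathcal{S}}} - \lmod \tfrac{1}{n}\mathbf{X}_\mathcal{S}^\intercal \mathbf{X}_\mathcal{S} - \mathbf{\Sigma}_{\mathbf{x}_\mathcal{S} \mathbf{x}_\mathcal{S}} \rmod_2 .
\]
Combined with the first step, the lemma follows once I show that the spectral deviation is at most $(C_{\min}+2F_{\min}+D_{\min})/2$ with probability $1-\mathcal{O}(1/p)$ whenever $n = \Omega(k\log p)$.

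For that concentration step I would use that each row of $\mathbf{X}_\mathcal{S}$, equal to $\mathbf{x}^\star_\mathcal{S} + \mathbf{e}_\mathcal{S}$, is a zero-mean sub-Gaussian vector in $\mathbb{R}^{k}$ whose proxy is controlled by $\max_{i\in\mathcal{S}}(\sigma\sqrt{\mathbf{\Sigma}}_{ii} + r\sqrt{\mathbf{\Sigma_e}}_{ii})$, the same quantities appearing in Lemmas~\ref{lem:m1_bound} and \ref{lem:m2_bound}; crucially, although $\mathbf{e}_\mathcal{S}$ may be dependent on $\mathbf{x}^\star_\mathcal{S}$ \emph{within} a sample, the rows are independent \emph{across} the $n$ samples. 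I would then control the spectral norm by a standard $\tfrac14$-net argument over the unit sphere $\mathbb{S}^{k-1}$ (net cardinality at most $9^{k}$): for each fixed $\mathbf{v}$ in the net, $\frac{1}{n}\sum_j (\mathbf{v}^\intercal \mathbf{x}^{(j)}_\mathcal{S})^2 - \mathbf{v}^\intercal \mathbf{\Sigma}_{\mathbf{x}_\mathcal{S}\mathbf{x}_\mathcal{S}} \mathbf{v}$ is a centered average of independent sub-exponential variables, to which a Bernstein tail applies, and a union bound over the net yields
\[
\P\sqbr{\lmod \tfrac{1}{n}\mathbf{X}_\mathcal{S}^\intercal \mathbf{X}_\mathcal{S} - \mathbf{\Sigma}_{\mathbf{x}_\mathcal{S} \mathbf{x}_\mathcal{S}} \rmod_2 \geq \delta} \leq 2 \cdot 9^{k}\, e^{-c n \delta^2}
\]
for a constant $c$ depending on $\sigma$, $r$, and the eigenvalue bounds (see, e.g., \cite{wainwright2019high}). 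Taking $\delta = (C_{\min}+2F_{\min}+D_{\min})/2$, a fixed constant, the exponent is $cn\delta^2 - k\log 9$, so $n = \Omega(k\log p)$ makes the failure probability $\mathcal{O}(1/p)$ and closes the argument.

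The main obstacle is this concentration step, and within it the two features that separate the present setting from the textbook i.i.d. case: first, the rows are only sub-Gaussian and the adversarial perturbation $\mathbf{e}_\mathcal{S}$ is permitted to be \emph{dependent} on $\mathbf{x}^\star_\mathcal{S}$ within a sample, so I must verify carefully that the sum $\mathbf{x}^\star_\mathcal{S}+\mathbf{e}_\mathcal{S}$ remains sub-Gaussian with the stated proxy and that only cross-sample independence is needed for the Bernstein bound; and second, I must track the dependence of the Bernstein constant $c$ on $(\sigma,r)$ so that the threshold $\delta$ is genuinely a fixed constant and the net cardinality $9^{k}$ is absorbed into the $k\log p$ sample budget.
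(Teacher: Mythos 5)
Your proposal is correct, but it takes a genuinely different route from the paper. The paper decomposes the \emph{sample} Gram matrix as $\frac{1}{n}\mathbf{X}_{\mathcal{S}}^{\intercal}\mathbf{X}_{\mathcal{S}} = \frac{1}{n}\mathbf{X}_{\mathcal{S}}^{\star\intercal}\mathbf{X}_{\mathcal{S}}^{\star} + \frac{1}{n}\mathbf{E}_{\mathcal{S}}^{\intercal}\mathbf{X}_{\mathcal{S}}^{\star} + \frac{1}{n}\mathbf{X}_{\mathcal{S}}^{\star\intercal}\mathbf{E}_{\mathcal{S}} + \frac{1}{n}\mathbf{E}_{\mathcal{S}}^{\intercal}\mathbf{E}_{\mathcal{S}}$, applies the eigenvalue inequality at the sample level, and then concentrates each block separately around its population counterpart: the two symmetric blocks via a covariance concentration result (Proposition 2.1 of Vershynin, through Lemma \ref{lem:sampleHess_psd_Xstar} with $\delta = 1/2$), and the non-symmetric cross block via the paper's own Theorem \ref{thm:B_2} with $\delta = \lambda_{\min}(\mathbf{\Sigma}_{\mathbf{e}_{\mathcal{S}}\mathbf{x}^{\star}_{\mathcal{S}}})/2$. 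You instead apply Weyl at the \emph{population} level (exactly the step the paper itself uses in Eq. \eqref{eq:m2_eq5} inside Lemma \ref{lem:m2_bound}) and then need only one concentration statement, for $\lmod \frac{1}{n}\mathbf{X}_{\mathcal{S}}^{\intercal}\mathbf{X}_{\mathcal{S}} - \mathbf{\Sigma}_{\mathbf{x}_{\mathcal{S}}\mathbf{x}_{\mathcal{S}}} \rmod_2$, obtained by treating each row $\mathbf{x}^{\star}_{\mathcal{S}} + \mathbf{e}_{\mathcal{S}}$ directly as a sub-Gaussian vector (the paper's Lemma \ref{lem:sum_subG} covers the within-sample dependence) and running a standard net-plus-Bernstein argument. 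Your route is leaner: it avoids Theorem \ref{thm:B_2} entirely, sidesteps the delicate question of what $\Lambda_{\min}$ of the non-symmetric sample cross matrix means, and in fact only needs $n = \Omega(k + \log p)$ since your deviation threshold is a fixed constant, whereas the paper's cross-term bound inherits a factor $b_k \sim k$ in the exponent of Theorem \ref{thm:B_2} and thus genuinely uses $n = \Omega(k\log p)$. What the paper's term-by-term decomposition buys is interpretability: it isolates how $C_{\min}$, $F_{\min}$, and $D_{\min}$ each enter through their own sample matrix (e.g., Eq. \eqref{eq:mineig_Ex} showing $n = \Omega\left((k+\log p)/D_{\min}^2\right)$ for the $\mathbf{E}_{\mathcal{S}}^{\intercal}\mathbf{E}_{\mathcal{S}}$ block), which underpins the paper's discussion of why the adversary should keep $D_{\min}$ small; your single-matrix bound collapses these effects into one constant. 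One shared caveat, not a gap relative to the paper: your claim that the symmetrized cross term contributes $2F_{\min}$ requires interpreting $F_{\min}$ as the minimum eigenvalue of the symmetric part of $\mathbf{\Sigma}_{\mathbf{e}_{\mathcal{S}}\mathbf{x}^{\star}_{\mathcal{S}}}$, which is precisely the convention the paper adopts implicitly in Eq. \eqref{eq:m2_eq5} — your version at least makes the symmetrization explicit.
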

Hence $\left[ \nabla^2l((\mathbf{w}_{\mathcal{S}}, \mathbf{0}))\right]_{\mathcal{S}, \mathcal{S}} = \frac{1}{n}\mathbf{X}_{\mathcal{S}}^{\intercal} \mathbf{X}_{\mathcal{S}}$ is positive definite. More importantly, as the Hessian matrix depends only on adversarial perturbation in the support $\mathcal{S}$, sample complexity in the above lemma is not impacted by perturbation in the non-support $\mathcal{S}^c$. But this does not imply allocating more budget to $\mathcal{S}$ to design perturbation is recommended from the adversary's perspective, as more budget to $\mathcal{S}$ may lead to increasing $D_{\min}$, which is advantageous for the learning algorithm. In a more formal way, we need to bound $\lambda_{\text{min}}\left( \frac{ \mathbf{E}^{\intercal}_{\mathbf{x}_{\mathcal{S}}} \mathbf{E}_{\mathbf{x}_{\mathcal{S}}}}{n}\right)$ while proving Lemma \ref{lem:sampleHess_psd}, which requires $n = \Omega\left( {\left( k + \log(p)\right)}/{D^2_{\text{min}} }\right)$ samples (Eq. \eqref{eq:mineig_Ex}). Hence, it is advisable for the adversary to design perturbations such that $D_{\text{min}}$ is small.

With a brief discussion on uniqueness in this sub-section, we provide theoretical guarantees for the estimated regression parameter vector in the next subsection.
\subsection{Quality of estimated regression parameter vector } 
\label{sec:w_bound}
In this subsection, we prove the third claim made in Theorem \ref{thm:main} and discuss how the adversarial perturbation in non-support $\mathcal{S}^c$ can  affect the theoretical guarantees for $\hat{\mathbf{w}}_{\mathcal{S}}$ (in support) indirectly through regularization parameter. We start with the computation of $\hat{\mathbf{w}}_{\mathcal{S}} - \mathbf{w}_{\mathcal{S}}^{\star}$ by using the first order stationary condition specified in Eq. \eqref{eq:1st_ord}. The algebraic steps are presented in Appendix \ref{appn:lossfn_0,1,2stcond} and the simplified expression is: 
	\begin{align}
	&\norm{\hat{\mathbf{w}}_{\mathcal{S}}- \mathbf{w}_{\mathcal{S}}^{\star}}_{\infty} \leq \indinf{\mathbf{A}^{-1}}\nrbr{\norm{\mathbf{w_1} }_{\infty} + \norm{\mathbf{w_2} }_{\infty} } \nonumber \\
	& \qquad \qquad \qquad \qquad + \lambda \indinf{\mathbf{A}^{-1}}  \norm{\hat{\mathbf{{z}}}_{\mathcal{S}}}_{\infty}\label{eq:wb_trnineq} \\ 
	&\mathbf{w_1} = \frac{\mathbf{X}_{\mathcal{S}}^{\intercal} \mathbf{e}_y}{n}, \; \mathbf{w_2}  = \frac{\mathbf{X}_{\mathcal{S}}^{\intercal}\mathbf{E_{x_{\mathcal{S}}}} \mathbf{w}^{\star}_{\mathcal{S}} }{n}, \; \mathbf{A} = \frac{\mathbf{X}_{\mathcal{S}}^{\intercal}  \mathbf{X}_{\mathcal{S}}}{n}  \label{eq:w2_def}
	\end{align}

The last term in RHS of Eq. \eqref{eq:wb_trnineq} can be easily bounded as $\norm{\hat{\mathbf{{z}}}_{\mathcal{S}}}_{\infty} \leq 1 $. To further bound $ \indinf{\mathbf{A}^{-1} } $, we use the triangle inequality: 
\begin{align*}
\indinf{\mathbf{A}^{-1}} \leq \indinf{\mathbf{A}^{-1} - \nrbr{\E\sqbr{\mathbf{ A}}}^{-1}} + \indinf{\nrbr{\E\sqbr{\mathbf{ A}}}^{-1}}
\end{align*}
The first term in the RHS of the above equation can be bounded using Lemma \ref{lem:m2_bound}. We can claim 
\begin{align}
\P\sqbr{\indinf{\mathbf{A}^{-1} - \nrbr{\E\sqbr{\mathbf{ A}}}^{-1}} \geq \frac{G_{\max} }{2}} \leq  \mathcal{O}\left(\frac{1}{p}\right) \label{eq:w2b_Aeinv}
\end{align}
by substituting $\delta = \frac{G_{\max}}{2}$ in Lemma \ref{lem:m2_bound} if $n = \Omega\nrbr{\frac{k^2 \log(p) }{G^2_{\text{max}}}}$. Using this, we can claim $\indinf{\mathbf{A}^{-1}} \leq \frac{3G_{\max} }{2}$. Further we proceed to bound $\norm{\mathbf{ w}_1}_{\infty}$ defined in Eq. \eqref{eq:w2_def}. Using an approach very similar to Lemma \ref{lem:Xe_ey_lambda}, we claim: 
\begin{align}
\norm{\mathbf{ w}_1}_{\infty} =\norm{\frac{\mathbf{X}_{\mathcal{S}}^{\intercal} \mathbf{e}_y}{n}}_{\infty} \leq \frac{\lambda \gamma}{8} \label{eq:w1_binf}
\end{align}
It should be noted that there is lower bound constraint on $\lambda$ for the above statement to hold with high probability, as specified in Lemma \ref{lem:Xe_ey_lambda}.  The lower bound value of $\lambda$ can be tightened slightly for this case specifically by changing the $\log(p)$ factor to $\log(k)$ as $\mathbf{ w}_1$ is a $k-$dimensional vector, and we need to take union bound over $k$ elements only instead of $p-k$, as done in Lemma \ref{lem:Xe_ey_lambda}.  But we take the $\lambda$ mentioned in Eq. \eqref{eq:lambda_f1}, so that the strict dual feasibility is also verified. 

Further, we proceed to bound  $\norm{\mathbf{ w}_2}_{\infty}$ defined in Eq. \eqref{eq:w2_def} by using the approach similar to  Lemma \ref{lem:XsEsw}  presented in Appendix. We claim: 
\begin{align}
\norm{\mathbf{w_2}}_{\infty}  = \norm{\frac{\mathbf{X}_{\mathcal{S}}^{\intercal}\mathbf{E_{x_{\mathcal{S}}}} \mathbf{w}^{\star}_{\mathcal{S}} }{n}}_{\infty} \leq \frac{\lambda \gamma}{8} \label{eq:w2_binf}
\end{align}
where the lower bound on $\lambda$ is specified in Eq. \eqref{eq:lambda_f1}. Substituting the bounds derived in Eq. \eqref{eq:w1_binf} and Eq. \eqref{eq:w2_binf} in Eq. \eqref{eq:wb_trnineq}, we obtain: 
\begin{align}
\norm{\hat{\mathbf{w}}_{\mathcal{S}}- \mathbf{w}_{\mathcal{S}}^{\star}}_{\infty} \leq \lambda \nrbr{1 + \frac{\gamma}{4}} \frac{3G_{\max} }{2}  = f(\lambda)
\end{align}
This proves the third claim in Eq. \eqref{eq:thm3rdclaim} of Theorem \ref{thm:main}. From the above equation, we observe that a large value regularization $\lambda$ is not desirable as it is directly proportional to the bound of $\lmod \hat{\mathbf{w}}_{\mathcal{S}} - \mathbf{w}_{\mathcal{S}}^{\star} \rmod_{\infty}$. But note that the lower bound of $\lambda$ can be controlled by the adversary due to the presence of constants $b$ and $q$ in Eq. \eqref{eq:lambda_f1}, and hence the adversary can control the quality of the estimated regression parameter vector as demonstrated shortly. Before proceeding to that discussion, we need to prove the fourth claim of sign matching in Theorem \ref{thm:main}, which can be seen as a direct consequence of Lemma \ref{lem:sign_p} in the Appendix.

Consider the case when $\lambda = \max\crbr{\lambda_1, \lambda_2, \lambda_3}= \frac{16b}{\gamma}$, then we need the following condition as per Theorem \ref{thm:main}:
\begin{align}
\min\limits_{i \in \mathcal{S}}\left|\mathbf{w}_i^{\star}\right| \geq 2 f(\lambda) = 12G_{\max}  \nrbr{1 + \frac{4}{\gamma}}b. \label{eq:case_1b}
\end{align}
This requirement on the lower bound of the absolute value of parameters in the support basically states that these coefficients should have significant values for detection. If the adversary is given more budget and designs a large value of  $b = \norm{\mathbf{\Sigma}_{\mathbf{x} \mathbf{e}_{\mathcal{S}}} \mathbf{w}^{\star}_{\mathcal{S}}  }_{\infty}$ to break the above requirement (Eq. \eqref{eq:case_1b}), then we may not be able to provide theoretical guarantees for successful support recovery. Our theoretical analysis has identified the critical condition under which the adversary can design malicious attacks such that the LASSO algorithm may not have a high probability of successful support recovery. It should be noted that the lack of theoretical guarantees for successful support recovery does not restrict a user from using the LASSO algorithm in practice. It may still do correct support recovery under this case, but we may not be able to provide reasonable bounds for probability of success.  

Consider the case with $\mathbf{\Sigma}_{\mathbf{x} \mathbf{e}_{\mathcal{S}}} = \mathbf{0}_{p \times k}$, and hence $b = 0$. Therefore $\lambda = \max\crbr{\lambda_1, \lambda_2, \lambda_3}=\max\crbr{\lambda_2, \lambda_3}= \mathcal{O}\nrbr{\sqrt{\frac{\log(p)}{n}}}$, then the same requirement is:
\begin{align}
\min\limits_{i \in \mathcal{S}}\left|\mathbf{w}_i^{\star}\right| \geq 2 f(\lambda) = \mathcal{O}\nrbr{\sqrt{\frac{\log(p)}{n}}} \label{eq:minw_1}
\end{align}
This condition can be easily fulfilled by increasing the value of $n$ sufficiently high depending on the value of  $\min_{i \in \mathcal{S}}\left|\mathbf{w}_i^{\star}\right|$, and hence, theoretical guarantees can be established. The adversary can still try to break the above condition by increasing the value of $q_1$ or $q$ in $\lambda_2$ or $\lambda_3$ respectively, but the user can increase the sample size ($n$) accordingly as derived in various lemmas to ensure a high probability of success.  For example, consider a scenario when the LASSO algorithm is performing satisfactorily, and the adversary tries to break Eq. \eqref{eq:minw_1} by increasing $\gamma$ to twice its value. Assuming other parameters are constant, the user can increase the value of $n$ to at least four times as compared to its previous value for Eq. \eqref{eq:minw_1} to hold. This quadratic dependence of $n$ on $\gamma$ can be seen from the sample complexity of bound derived in Eq. \eqref{eq:m1_boundf}. Similarly, we can see the dependence of $n$ on other adversarial parameters. 

In this subsection, we completed the proof of Theorem \ref{thm:main} and discussed the critical regimes which may be favorable to adversary or learning algorithm. We also discussed the counter-intuitive result of how the adversarial perturbation in $\mathcal{S}^c$ can affect the guarantees for $\hat{\mathbf{w}}_{\mathcal{S}}$ indirectly by influencing the lower bound on the regularization parameter.  
\section{Experiments}
\label{sec:exp}
In this section, we validate our proposed theoretical claims with empirical analysis on synthetic data and real-world data. Please refer to Appendix \ref{appn:exp} for more details. 

\begin{figure}[H]
	\centering
	\includegraphics[width=0.65\columnwidth]{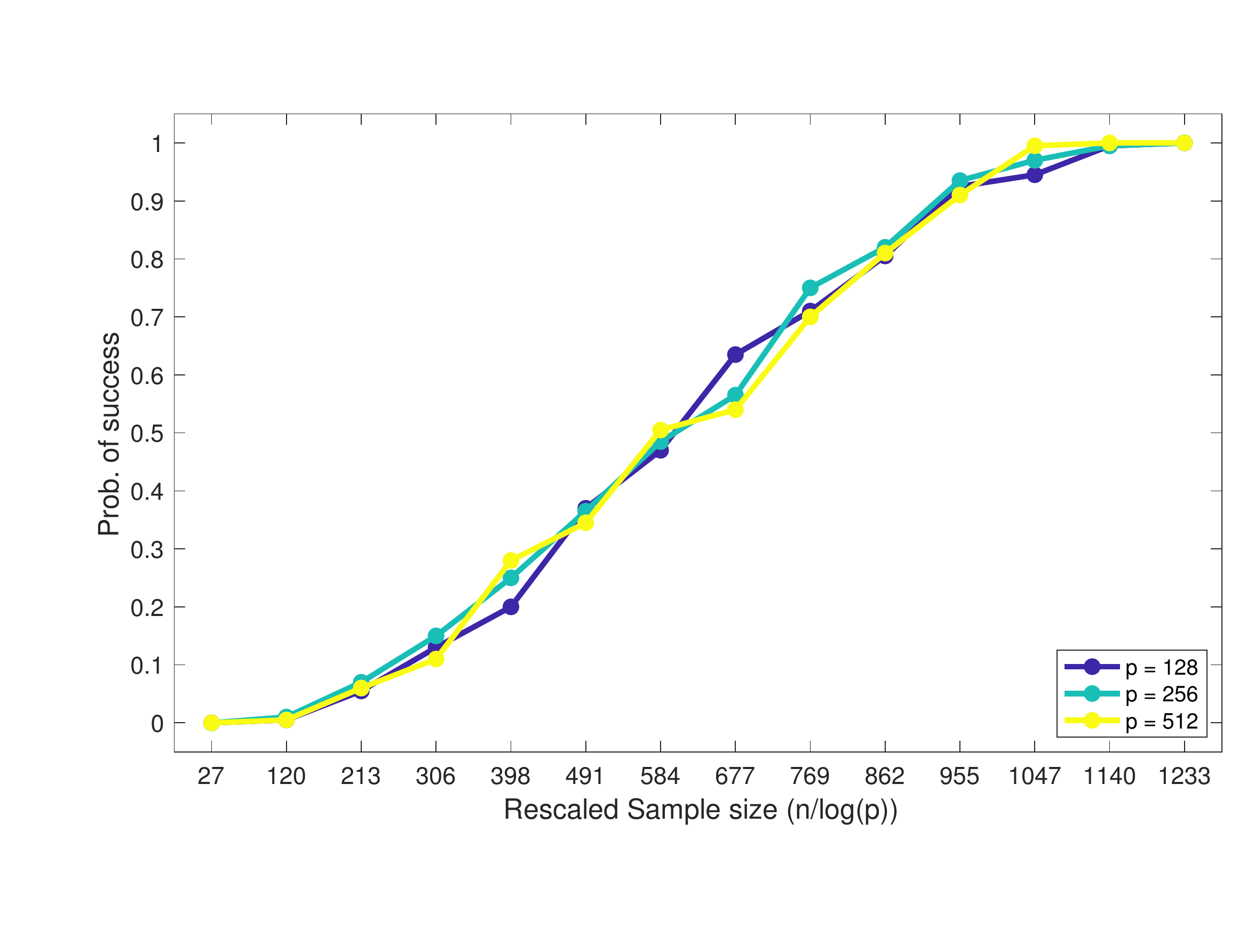} 
	\caption{Probability of support recovery vs rescaled sample size}
	\label{fig:sigmax_0.1}
\end{figure}

\paragraph{Synthetic data:} To verify the sample complexity result of $n = \Omega(\log(p))$ for fixed $k$, we repeat the support recovery experiment $200$ times for a particular value of $(n,p)$. We report the empirical probability of successful support recovery. 
We perform this experiment for $p \in \{128, 256, 512\}$ and vary $n$ such that $\frac{n}{\log(p)} \in \left(25, 1250\right)$. The results presented in Figure \ref{fig:sigmax_0.1}, show that $\frac{n}{\log(p)}$ is not a function of $p$ (hence a constant), as the plots are overlapping. We make the same observation for the more challenging case of adversarial perturbation being dependent on the data (refer Appendix \ref{appn:exp}). We also observe that the sample complexity may increase at least linearly with $\lmod \mathbf{\Sigma}_{\mathbf{e}_{\mathcal{S} \mathcal{S}}} \rmod_{2} $ as implied by Eq. \eqref{eq:lambda_f1} theoretically.

\paragraph{Real-World Data:} We used the BlogFeedback dataset \cite{buza2014feedback} which contains 52397 samples and 276 features. 
We first recover the ``true'' support with the given data and further estimate the support from adversarially corrupted data, which is generated by adding adversarial perturbation to all the features. Our approach recovers the support with F1-score of 0.94, which demonstrates the efficacy of our proposed method on real-world datasets.

\paragraph{Concluding Remarks:}
We hope our work insinuates similar ideas for other problems as learning under adversarial attacks is an interesting problem that is not restricted to a linear sparse regression model. For example, our contributions can be explored in other problems such as nonparametric sparse regression \cite{ravikumar2009sparse}, matrix factorization \cite{luo2020adversarial}, or model compression \cite{gui2019model}. 

\bibliographystyle{plain}
\bibliography{mybibfile}

\newpage

\appendix

\noindent\rule{\textwidth}{3pt}
\begin{center}
	{\Large \textbf{Supplementary Material: A Theoretical Study of The Effects of Adversarial Attacks on Sparse Regression}}
\end{center}
\noindent\rule{\textwidth}{1pt}

\section{Appendix}

\subsection{Proof of Lemma \ref{lem:zero_mean}}
\textbf{Lemma \ref{lem:zero_mean}}: If the adversary attacks only the support entries ($\mathbf{x}_{\mathcal{S}}$) or non-support entries ($\mathbf{x}_{\mathcal{S}^c}$) with non-zero mean adversarial perturbation, then the learner can guess the support trivially with probability at least  $1 - \mathcal{O}\left( \frac{1}{p} \right)$ if $n = \Omega\left(\log (p)\right)$. 
\begin{proof}
	Let the non-zero mean adversarial perturbation have the form $\bm{\mu} = \begin{bmatrix}
	\mu_1 \mathbf{1}_k \\ 
	\mu_2 \mathbf{1}_{p-k}
	\end{bmatrix}$. Also, for clarity, we assume that the first $k$ entries of $\mathbf{x}$ correspond to the support $\mathcal{S}$ and the rest correspond to the non-support $\mathcal{S}^c$. If this is not the case, the support and non-support entries will need to be properly interleaved. For the attack on $\mathcal{S}$ only, we consider the case $\mu_1 \neq 0$ and $\mu_2 = 0$. Similarly, for the attack on $\mathcal{S}^c$ only, we consider the case of $\mu_1 = 0$ and $\mu_2 \neq  0$. 
	
	We first analyze the sample mean of the entries in the support. For $n$ samples, that is $\mathbf{x}^{(j)} \in \mathbb{R}^p$ for $j \in [n]$, we can compute the sample mean. For the $i^{\text{th}}$ entry, denoted by $\mathbf{x}_i$, where $i \in \mathcal{S}$, we use the sub-Gaussian tail bound along with the union bound:
	\begin{align}
	\mathbb{P} \left[ \left( \exists i \in \mathcal{S} \right)  \left|	\frac{1}{n} \sum_{j = 1}^{n} \mathbf{x}_i^{(j)} -  \mu_1 \right| \geq t\right] \leq 2 \sum_{i \in \mathcal{S}}\exp\left\{ \frac{-nt^2}{2 \sigma_{x_i}^2}\right\} 
	\end{align}
	where $\sigma_{x_i}^2$ denotes the variance proxy parameter. Similarly, the mean of the entries in the non-support can be analyzed as 
	\begin{align*}
	\mathbb{P} \left[ \left( \exists l \in \mathcal{S}^c \right) \left|	\frac{1}{n} \sum_{j = 1}^{n} \mathbf{x}_l^{(j)} -  \mu_2 \right| \geq t\right] \leq 2 \sum_{l \in \mathcal{S}^c} \exp\left\{ \frac{-nt^2}{2 \sigma_{x_l}^2}\right\} 
	\end{align*}
	We substitute $t = \frac{\max(|\mu_1|, |\mu_2|)}{3}$ in the above equations. Now, if $\mu_1 = 0$ and $\mu_2 \neq 0$, we can claim: 
	\begin{align*}
	\left|	\frac{1}{n} \sum_{j = 0}^{n} \mathbf{x}_i^{(j)}\right| &\leq \frac{\max(|\mu_1|, |\mu_2|)}{3} = \frac{|\mu_2|}{3} \\
	\left|	\frac{1}{n} \sum_{j = 0}^{n} \mathbf{x}_l^{(j)} \right| &\geq |\mu_2| - \left|	\frac{1}{n} \sum_{j = 0}^{n} \mathbf{x}_l^{(j)} -  \mu_2 \right| \geq |\mu_2| - \frac{\max(|\mu_1|, |\mu_2|)}{3} = 2\frac{|\mu_2|}{3}
	\end{align*}
	with high probability of $1 - \mathcal{O}\left( \frac{1}{p}\right) $ if $n = \Omega\left( \log(p)\right)$.  Note that the sample mean of the entries in the support is upper-bounded by $\frac{|\mu_2|}{3}$, whereas the sample mean of the entries in the non-support is lower bounded by $2\frac{|\mu_2|}{3}$. Hence the learner can guess the support easily by observing the concentration of the sample mean.  Note that the case of $\mu_1 \neq 0$ and $\mu_2 = 0$ can be analyzed similarly. 
	
\end{proof}	
\subsection{Variance proxy parameter for adversarial perturbation}
\label{appn:ex_subG}
We derive the variance proxy parameter of a sub-Gaussian vector $\mathbf{e_x}$. This is done by first starting with a general random sub-Gaussian vector, $\mathbf{z}$ with variance proxy parameter $r^2$ and identity covariance matrix. The definition of sub-Gaussian vectors in \cite{hsu2012tail} states that for all $\mathbf{v} \in \mathbb{R}^p$:
\begin{align}
\mathbb{E}\left[ \mathbf{v}^{\intercal}  \mathbf{z}\right] \leq \exp\left( \frac{\lmod \mathbf{v}\rmod^2_{2} r^2}{2} \right)
\end{align}
Without loss of generality, we define $\mathbf{e_x} = \mathbf{\Sigma}^{1/2}_{\mathbf{e}} \mathbf{z}$, where $\mathbf{\Sigma}_{\mathbf{e}} $ is the covariance matrix of $\mathbf{e_x} $. Substituting $\mathbf{z} =  \mathbf{\Sigma}^{-1/2}_{\mathbf{e}} \mathbf{e_x}$ in the above equation: 
\begin{align*}
\mathbb{E}\left[ \mathbf{v}^{\intercal}  \mathbf{\Sigma}^{-1/2}_{\mathbf{e}} \mathbf{e_x}\right] \leq \exp\left( \frac{\lmod \mathbf{v}\rmod^2_{2} r{\tiny }^2}{2} \right)
\end{align*}
Substituting $\bm{\alpha}^{\intercal}  = \mathbf{v}^{\intercal}  \mathbf{\Sigma}^{-1/2}_{\mathbf{e}} $ in the above equation
\begin{align}
\mathbb{E}\left[ \bm{\alpha}^{\intercal}  \mathbf{e_x}\right] &\leq \exp\left( \frac{\bm{\alpha}^{\intercal}  \mathbf{\Sigma}_{\mathbf{e}} \bm{\alpha} r^2}{2} \right) 
\end{align}
which holds for all $\bm{\alpha} \in \mathbb{R}^p$.  

\subsection{First Order Stationarity condition}
\label{appn:lossfn_0,1,2stcond}
Consider the loss function
\begin{align}
l(\mathbf{w}) = \frac{1}{2n} ||\mathbf{y} - \mathbf{X} \mathbf{w}||_2^2. \label{eq:loss_fnappn}
\end{align}
The Lasso problem is given by:
\begin{align}
\hat{\mathbf{w}}_{\mathcal{S}} = \argmin_{\mathbf{w}_{\mathcal{S}} \in \mathbb{R}^{k}}l((\mathbf{w}_{\mathcal{S}}, \mathbf{0})) + \lambda ||\mathbf{w}_{\mathcal{S}}||_1.  \label{eq:res_lassapn}
\end{align}
We start with the first-order stationary condition.  Taking the first order derivative of Eq. \eqref{eq:loss_fnappn}, we get: 
\begin{align*}
\nabla l(\mathbf{w}) 
=& \frac{1}{n} \mathbf{X}^{\intercal}  \left( \mathbf{X}(\mathbf{w} - \mathbf{w}^{\star} ) + \left(\mathbf{E_x} \mathbf{w}^{\star}  - \mathbf{e}_y\right)   \right) \\
\left[\nabla l((\hat{\mathbf{w}}_{\mathcal{S}}, \mathbf{0})) \right]_{\mathcal{S}} =& \frac{1}{n}\mathbf{X}_{\mathcal{S}}^{\intercal}  \left( \mathbf{X}_{\mathcal{S}} (\hat{\mathbf{w}}_{\mathcal{S}} - \mathbf{w}_{\mathcal{S}}^{\star} ) + \left(\mathbf{E_{x_{\mathcal{S}}}} \mathbf{w}^{\star} _{\mathcal{S}} - \mathbf{e}_y\right)   \right) \\
\left[\nabla l((\hat{\mathbf{w}}_{\mathcal{S}}, \mathbf{0})) \right]_{\mathcal{S}^c} =& \frac{1}{n}\mathbf{X}_{\mathcal{S}^c}^{\intercal}  \left( \mathbf{X}_{\mathcal{S}} (\hat{\mathbf{w}}_{\mathcal{S}} - \mathbf{w}_{\mathcal{S}}^{\star} ) + \left(\mathbf{E_{x_{\mathcal{S}}}} \mathbf{w}^{\star} _{\mathcal{S}} - \mathbf{e}_y\right)   \right)
\end{align*}
The stationarity condition of Eq. \eqref{eq:res_lassapn}, after splitting into the support $\mathcal{S}$ and non-support $\mathcal{S}^c$, becomes:
\begin{align*}
\left[\nabla l((\hat{\mathbf{w}}_{\mathcal{S}}, \mathbf{0})) \right]_{\mathcal{S}} + \lambda \hat{\mathbf{{z}}}_{\mathcal{S}} &= \mathbf{0}_{k}\\ 
\left[\nabla l((\hat{\mathbf{w}}_{\mathcal{S}}, \mathbf{0})) \right]_{\mathcal{S}^c} + \lambda \hat{\mathbf{{z}}}_{\mathcal{S}^c} &= \mathbf{0}_{(p-k)}\\ 
\end{align*}
Using these equations, we arrive at:
\begin{align}
\frac{1}{n}\mathbf{X}_{\mathcal{S}}^{\intercal}  \left( \mathbf{X}_{\mathcal{S}} (\hat{\mathbf{w}}_{\mathcal{S}} - \mathbf{w}_{\mathcal{S}}^{\star} ) + \left(\mathbf{E_{x_{\mathcal{S}}}} \mathbf{w}^{\star} _{\mathcal{S}}  - \mathbf{e}_y\right)   \right)+ \lambda \hat{\mathbf{{z}}}_{\mathcal{S}} &= \mathbf{0} \nonumber \\
(\hat{\mathbf{w}}_{\mathcal{S}} - \mathbf{w}_{\mathcal{S}}^{\star} ) = \left(\mathbf{X}_{\mathcal{S}}^{\intercal}  \mathbf{X}_{\mathcal{S}}\right)^{-1}\left( \mathbf{X}_{\mathcal{S}}^{\intercal}  \left( \mathbf{e}_y - \mathbf{E_{x_{\mathcal{S}}}} \mathbf{w}^{\star} _{\mathcal{S}}  \right)  - n \lambda  \hat{\mathbf{{z}}}_{\mathcal{S}} \right) \label{eq:bdiff_gen}
\end{align} 
Further, using triangle inequality and sub-multiplicative property of norms, we arrive at: 
\begin{align}
\norm{\hat{\mathbf{w}}_{\mathcal{S}} - \mathbf{w}_{\mathcal{S}}^{\star} }_{\infty} \leq \norm{\nrbr{\frac{\mathbf{X}_{\mathcal{S}}^{\intercal}  \mathbf{X}_{\mathcal{S}}}{n}}^{-1}}_{\infty}\nrbr{ \norm{\frac{\mathbf{X}_{\mathcal{S}}^{\intercal}  \mathbf{e}_y }{n}}_{\infty} + \norm{\frac{\mathbf{X}_{\mathcal{S}}^{\intercal}  \mathbf{E_{x_{\mathcal{S}}}} \mathbf{w}^{\star} _{\mathcal{S}}  }{n}}_{\infty} + \lambda \norm{\hat{\mathbf{{z}}}_{\mathcal{S}}}_{\infty}}
\end{align}
Now, $\hat{\mathbf{{z}}}_{\mathcal{S}^c}  $ can be computed as
\begin{align}
\hat{\mathbf{{z}}}_{\mathcal{S}^c}  & = -\frac{1}{\lambda n}\mathbf{X}_{\mathcal{S}^c}^{\intercal}  \left( \mathbf{X}_{\mathcal{S}} (\hat{\mathbf{w}}_{\mathcal{S}} - \mathbf{w}_{\mathcal{S}}^{\star} ) + \left(\mathbf{E_{x_{\mathcal{S}}}} \mathbf{w}^{\star}_{\mathcal{S}}  - \mathbf{e}_y\right)   \right) \nonumber \\
& = -\frac{1}{\lambda n}\mathbf{X}_{\mathcal{S}^c}^{\intercal}  \left( \mathbf{X}_{\mathcal{S}} \left( \left(\mathbf{X}_{\mathcal{S}}^{\intercal}  \mathbf{X}_{\mathcal{S}} \right)^{-1}\left( \mathbf{X}_{\mathcal{S}}^{\intercal}  \left( \mathbf{e}_y - \mathbf{E_{x_{\mathcal{S}}}} \mathbf{w}^{\star}_{\mathcal{S}}  \right)  - n \lambda  \hat{\mathbf{{z}}}_{\mathcal{S}} \right) \right) + \left(\mathbf{E_{x_{\mathcal{S}}}} \mathbf{w}^{\star}_{\mathcal{S}} - \mathbf{e}_y\right)   \right) \nonumber \\
&= \mathbf{X}_{\mathcal{S}^c}^{\intercal}  \left\{ \mathbf{X}_{\mathcal{S}} \left(\mathbf{X}_{\mathcal{S}}^{\intercal}  \mathbf{X}_{\mathcal{S}}\right)^{-1} \hat{\mathbf{{z}}}_{\mathcal{S}}  + \left( \mathbf{I}_n - \mathbf{X}_{\mathcal{S}} \left(\mathbf{X}_{\mathcal{S}}^{\intercal}  \mathbf{X}_{\mathcal{S}} \right)^{-1} \mathbf{X}_{\mathcal{S}}^{\intercal}   \right) \frac{\left( \mathbf{e}_y - \mathbf{E_{x_{\mathcal{S}}}} \mathbf{w}^{\star}_{\mathcal{S}}  \right)  }{\lambda n}\right\}  \label{eq:Zsc_der_apn}
\end{align}
where $\mathbf{I}_n$ denotes an identity matrix of dimension $n \times n$.

The second order derivative of Eq. \eqref{eq:loss_fnappn} is: 
\begin{align}
\nabla^2 l(\mathbf{w}) &= \frac{1}{n}\mathbf{X}^{\intercal} \mathbf{X} \nonumber \\
\left[ \nabla^2l((\mathbf{w}_{\mathcal{S}}, \mathbf{0}))\right]_{\mathcal{S}, \mathcal{S}} &= \frac{1}{n}\mathbf{X}_{\mathcal{S}}^{\intercal} \mathbf{X}_{\mathcal{S}} \nonumber
\end{align}

\subsection{Simplification of $\lmod\hat{\mathbf{{z}}}_{\mathcal{S}^c_{t_1}}   \rmod_{\infty} $}
\label{appn:zsc_simp}
In this sub-section, we present the simplification of the term $\lmod\hat{\mathbf{{z}}}_{\mathcal{S}^c_{t_1}}   \rmod_{\infty} $, which basically uses the triangle inequality as shown below: 
\begin{align*}
\indinf{\mathbf{X}_{\mathcal{S}^c}^{\intercal} \mathbf{X}_{\mathcal{S}}\left(\mathbf{X}_{\mathcal{S}}^{\intercal} \mathbf{X}_{\mathcal{S}} \right)^{-1} } \leq& \indinf{ \nrbr{\frac{1}{n} \mathbf{X}_{\mathcal{S}^c}^{\intercal} \mathbf{X}_{\mathcal{S}} -\mathbf{\Sigma}_{\mathbf{x}_{\mathcal{S}^c} \mathbf{x}_\mathcal{S}} + \mathbf{\Sigma}_{\mathbf{x}_{\mathcal{S}^c} \mathbf{x}_\mathcal{S}} }\left( \frac{1}{n} \mathbf{X}_{\mathcal{S}}^{\intercal} \mathbf{X}_{\mathcal{S}} \right)^{-1} } \\
\leq &\indinf{\nrbr{\frac{1}{n} \mathbf{X}_{\mathcal{S}^c}^{\intercal} \mathbf{X}_{\mathcal{S}} -\mathbf{\Sigma}_{\mathbf{x}_{\mathcal{S}^c} \mathbf{x}_\mathcal{S}}  }\left( \frac{1}{n}\mathbf{X}_{\mathcal{S}}^{\intercal} \mathbf{X}_{\mathcal{S}} \right)^{-1}} + \indinf{ \mathbf{\Sigma}_{\mathbf{x}_{\mathcal{S}^c} \mathbf{x}_\mathcal{S}} \left( \frac{1}{n}\mathbf{X}_{\mathcal{S}}^{\intercal} \mathbf{X}_{\mathcal{S}} \right)^{-1}} \\
= &\indinf{\nrbr{\frac{1}{n} \mathbf{X}_{\mathcal{S}^c}^{\intercal} \mathbf{X}_{\mathcal{S}} -\mathbf{\Sigma}_{\mathbf{x}_{\mathcal{S}^c} \mathbf{x}_\mathcal{S}}  } \nrbr{\left(\frac{1}{n} \mathbf{X}_{\mathcal{S}}^{\intercal} \mathbf{X}_{\mathcal{S}}\right)^{-1} - \left( \mathbf{\Sigma}_{\mathbf{x}_\mathcal{S} \mathbf{x}_\mathcal{S}} \right)^{-1} + \left( \mathbf{\Sigma}_{\mathbf{x}_\mathcal{S} \mathbf{x}_\mathcal{S}} \right)^{-1}} } \\
& + \indinf{ \mathbf{\Sigma}_{\mathbf{x}_{\mathcal{S}^c} \mathbf{x}_\mathcal{S}} \nrbr{\left( \frac{1}{n} \mathbf{X}_{\mathcal{S}}^{\intercal} \mathbf{X}_{\mathcal{S}} \right)^{-1} - \left( \mathbf{\Sigma}_{\mathbf{x}_\mathcal{S} \mathbf{x}_\mathcal{S}} \right)^{-1} + \left( \mathbf{\Sigma}_{\mathbf{x}_\mathcal{S} \mathbf{x}_\mathcal{S}} \right)^{-1}}} \\
\leq & \indinf{\nrbr{\frac{1}{n} \mathbf{X}_{\mathcal{S}^c}^{\intercal} \mathbf{X}_{\mathcal{S}} -\mathbf{\Sigma}_{\mathbf{x}_{\mathcal{S}^c} \mathbf{x}_\mathcal{S}}  } \nrbr{\left(\frac{1}{n} \mathbf{X}_{\mathcal{S}}^{\intercal} \mathbf{X}_{\mathcal{S}} \right)^{-1} - \left( \mathbf{\Sigma}_{\mathbf{x}_\mathcal{S} \mathbf{x}_\mathcal{S}} \right)^{-1} } } \\
& + \indinf{\nrbr{\frac{1}{n} \mathbf{X}_{\mathcal{S}^c}^{\intercal} \mathbf{X}_{\mathcal{S}} -\mathbf{\Sigma}_{\mathbf{x}_{\mathcal{S}^c} \mathbf{x}_\mathcal{S}}  } \left( \mathbf{\Sigma}_{\mathbf{x}_\mathcal{S} \mathbf{x}_\mathcal{S}} \right)^{-1} } \\
& +  \indinf{ \mathbf{\Sigma}_{\mathbf{x}_{\mathcal{S}^c} \mathbf{x}_\mathcal{S}} \nrbr{\left( \frac{1}{n} \mathbf{X}_{\mathcal{S}}^{\intercal} \mathbf{X}_{\mathcal{S}} \right)^{-1} - \left( \mathbf{\Sigma}_{\mathbf{x}_\mathcal{S} \mathbf{x}_\mathcal{S}} \right)^{-1} }} + \indinf{ \mathbf{\Sigma}_{\mathbf{x}_{\mathcal{S}^c} \mathbf{x}_\mathcal{S}} \left( \mathbf{\Sigma}_{\mathbf{x}_\mathcal{S} \mathbf{x}_\mathcal{S}} \right)^{-1}} \\
\end{align*}
Let $\mathbf{R} = \frac{1}{n} \mathbf{X}_{\mathcal{S}^c}^{\intercal} \mathbf{X}_{\mathcal{S}}$ and $\mathbf{Q} = \frac{1}{n} \mathbf{X}_{\mathcal{S}}^{\intercal} \mathbf{X}_{\mathcal{S}}$, and hence $\E\sqbr{\mathbf{R}} = \mathbf{\Sigma}_{\mathbf{x}_{\mathcal{S}^c} \mathbf{x}_\mathcal{S}}$, $\E\sqbr{\mathbf{Q}} = \mathbf{\Sigma}_{\mathbf{x}_\mathcal{S} \mathbf{x}_\mathcal{S}}$. The above expression simplifies to: 
\begin{align*}
\indinf{\mathbf{X}_{\mathcal{S}^c}^{\intercal} \mathbf{X}_{\mathcal{S}}\left(\mathbf{X}_{\mathcal{S}}^{\intercal} \mathbf{X}_{\mathcal{S}} \right)^{-1} } &\leq \indinf{ \E\sqbr{\mathbf{R}} \nrbr{\E\sqbr{\mathbf{Q}}}^{-1}} + \indinf{\E\sqbr{\mathbf{R}} \left( \mathbf{Q}^{-1} - \nrbr{\E\sqbr{\mathbf{Q}}}^{-1}\right)} \\ 
+ & \indinf{ \nrbr{\mathbf{R} - \E\sqbr{\mathbf{R}}} \left( \mathbf{Q}^{-1} - \nrbr{\E\sqbr{\mathbf{Q}}}^{-1}\right)} + \indinf{ \nrbr{\mathbf{R} - \E\sqbr{\mathbf{R}}} \nrbr{\E\sqbr{\mathbf{Q}}}^{-1}} 
\end{align*}

\subsection{Proof of Lemma \ref{lem:m1_bound}} 
\textbf{Lemma \ref{lem:m1_bound}: } For $0\leq\delta \leq 32\xi k$, where $\xi = \max\limits_{i \in \mathcal{S}}\nrbr{\sigma \sqrt{\mathbf{\Sigma}}_{ii} + r \sqrt{\mathbf{\Sigma_e}}_{ii}} \max\limits_{j \in \mathcal{S}^c}\nrbr{\sigma \sqrt{\mathbf{\Sigma}}_{jj} + r \sqrt{\mathbf{\Sigma_e}}_{jj}}$, if $n = \Omega\nrbr{\frac{k^2 \xi^2}{\delta^2} \log(p)}$, then 
\begin{align*}
\P\sqbr{\indinf{ \frac{1}{n} \mathbf{X}_{\mathcal{S}^c}^{\intercal} \mathbf{X}_{\mathcal{S}} - \mathbf{\Sigma}_{\mathbf{x}_{\mathcal{S}^c} \mathbf{x}_\mathcal{S}} }  \leq \delta} \geq 1- \mathcal{O}\left(\frac{1}{p}\right)
\end{align*}
\begin{proof}
	We start by analyzing each entry of $\frac{1}{n} \mathbf{X}_{\mathcal{S}^c}^{\intercal} \mathbf{X}_{\mathcal{S}}$. As $\mathbf{x}= \mathbf{x}^{\star}+ \mathbf{e}_{\mathbf{x}}$, we can claim $\mathbf{x}_i \sim SG(0, \sigma \sqrt{\mathbf{\Sigma}}_{ii} + r\sqrt{\mathbf{\Sigma_e}}_{ii})$ using Lemma \ref{lem:sum_subG}. Further as $\mathbf{X}_{ki}$ and $\mathbf{X}_{kj}$  are sub-Gaussian, its product is sub-exponentially distributed, denoted by $SE(8\sqrt{2}c_{ij} , 4c_{ij})$ using Lemma \ref{lem:prod_subG} where $c_{ij}= \nrbr{\sigma \sqrt{\mathbf{\Sigma}}_{ii} + r \sqrt{\mathbf{\Sigma_e}}_{ii}} \nrbr{\sigma \sqrt{\mathbf{\Sigma}}_{jj} + r \sqrt{\mathbf{\Sigma_e}}_{jj}}$. By using properties of sub-exponential distributions, we can further claim the following for entry $(i,j)$: 
	\begin{align*}
	\nrbr{\frac{1}{n} \mathbf{X}_{\mathcal{S}^c}^{\intercal} \mathbf{X}_{\mathcal{S}} }_{ij}\sim SE\left( \frac{8\sqrt{2} \xi}{\sqrt{n}}, \frac{4\xi}{n}\right)\\
	\end{align*}
	where $\xi = \max\limits_{i \in \mathcal{S}}\nrbr{\sigma \sqrt{\mathbf{\Sigma}}_{ii} + r \sqrt{\mathbf{\Sigma_e}}_{ii}} \max\limits_{j \in \mathcal{S}^c}\nrbr{\sigma \sqrt{\mathbf{\Sigma}}_{jj} + r \sqrt{\mathbf{\Sigma_e}}_{jj}}$. Applying Lemma \ref{lem:indinf_subexp} for $\frac{1}{n} \mathbf{X}_{\mathcal{S}^c}^{\intercal} \mathbf{X}_{\mathcal{S}}$, we arrive at: 
	\begin{align*}
	\P\sqbr{\indinf{ \frac{1}{n} \mathbf{X}_{\mathcal{S}^c}^{\intercal} \mathbf{X}_{\mathcal{S}} - \mathbf{\Sigma}_{\mathbf{x}_{\mathcal{S}^c} \mathbf{x}_\mathcal{S}} }  \geq \delta} \leq 2(p-k) k  \exp\crbr{\frac{-n\delta^2}{256 k^2 \xi^2}}
	\end{align*}
	for $0\leq\delta \leq 32\xi k$. If we choose $n = \Omega\nrbr{\frac{k^2 \xi^2}{\delta^2} \log(p)}$, then we may claim:
	\begin{align*}
	\P\sqbr{\indinf{ \frac{1}{n} \mathbf{X}_{\mathcal{S}^c}^{\intercal} \mathbf{X}_{\mathcal{S}} - \mathbf{\Sigma}_{\mathbf{x}_{\mathcal{S}^c} \mathbf{x}_\mathcal{S}} }  \leq \delta} \geq 1- \mathcal{O}\left(\frac{1}{p}\right).
	\end{align*}
\end{proof}

\subsection{Proof of Lemma \ref{lem:m2_bound}}
\textbf{Lemma \ref{lem:m2_bound}: }	For any $\delta > 0$, if $n = \Omega\nrbr{\frac{k^2 }{\delta^2 \nrbr{C_{\text{min}} + 2F_{\text{min}} + D_{\text{min}} }^4 } \log(p)}$
\begin{align*}
\P\sqbr{\indinf{ \nrbr{\frac{1}{n} \mathbf{X}_{\mathcal{S}}^{\intercal} \mathbf{X}_{\mathcal{S}}}^{-1}  -  \nrbr{\mathbf{\Sigma}_{\mathbf{x}_\mathcal{S} \mathbf{x}_\mathcal{S}}}^{-1} } \leq \delta} \geq 1- \mathcal{O}\left(\frac{1}{p}\right) 
\end{align*}
\begin{proof}
	We start by applying norm inequalities to arrive to the spectral norm: 
	\begin{align}
	\indinf{ \nrbr{\frac{1}{n} \mathbf{X}_{\mathcal{S}}^{\intercal} \mathbf{X}_{\mathcal{S}}}^{-1}  -  \nrbr{\mathbf{\Sigma}_{\mathbf{x}_\mathcal{S} \mathbf{x}_\mathcal{S}}}^{-1} } \leq& \sqrt{k}  \norm{\nrbr{\frac{1}{n} \mathbf{X}_{\mathcal{S}}^{\intercal} \mathbf{X}_{\mathcal{S}}}^{-1}  -  \nrbr{\mathbf{\Sigma}_{\mathbf{x}_\mathcal{S} \mathbf{x}_\mathcal{S}}}^{-1}}_2 \nonumber \\
	\leq & \sqrt{k}  \norm{ \nrbr{\frac{1}{n} \mathbf{X}_{\mathcal{S}}^{\intercal} \mathbf{X}_{\mathcal{S}}}^{-1}  \nrbr{\mathbf{\Sigma}_{\mathbf{x}_\mathcal{S} \mathbf{x}_\mathcal{S}}- \frac{1}{n} \mathbf{X}_{\mathcal{S}}^{\intercal} \mathbf{X}_{\mathcal{S}}  } \nrbr{\mathbf{\Sigma}_{\mathbf{x}_\mathcal{S} \mathbf{x}_\mathcal{S}}}^{-1} }_2 \nonumber \\ 
	\leq & \sqrt{k}  \norm{ \nrbr{\frac{1}{n} \mathbf{X}_{\mathcal{S}}^{\intercal} \mathbf{X}_{\mathcal{S}}}^{-1}}_2  \norm{\frac{1}{n} \mathbf{X}_{\mathcal{S}}^{\intercal} \mathbf{X}_{\mathcal{S}}  - \mathbf{\Sigma}_{\mathbf{x}_\mathcal{S} \mathbf{x}_\mathcal{S}}}_2 \norm{ \nrbr{\mathbf{\Sigma}_{\mathbf{x}_\mathcal{S} \mathbf{x}_\mathcal{S}}}^{-1} }_2 \label{eq:m2_e1}
	\end{align}
	The term $\norm{ \nrbr{\mathbf{\Sigma}_{\mathbf{x}_\mathcal{S} \mathbf{x}_\mathcal{S}}}^{-1} }_2$ in the above equation can be bounded as shown below:
	\begin{align}
	\lambda_{\text{min}}\nrbr{\mathbf{\Sigma}_{\mathbf{x}_\mathcal{S} \mathbf{x}_\mathcal{S}}}&\geq \lambda_{\text{min}}\nrbr{\mathbf{\Sigma}_{\mathbf{x}^{\star}_\mathcal{S} \mathbf{x}^{\star}_\mathcal{S}}} + 2 \lambda_{\text{min}}\nrbr{\mathbf{\Sigma}_{\mathbf{e}_\mathcal{S} \mathbf{x}^{\star}_\mathcal{S}}} + \lambda_{\text{min}}\nrbr{\mathbf{\Sigma}_{\mathbf{e}_\mathcal{S} \mathbf{e}_\mathcal{S}}} \nonumber \\
	& = C_{\text{min}} +2F_{\text{min}} + D_{\text{min}}  \label{eq:m2_eq5}
	\end{align} 
	We use Lemma \ref{lem:sampleHess_psd} to claim $ \norm{ \nrbr{\frac{1}{n} \mathbf{X}_{\mathcal{S}}^{\intercal} \mathbf{X}_{\mathcal{S}}}^{-1}}_2 \leq \frac{2}{C_{\text{min}} +2F_{\text{min}}  + D_{\text{min}}}  $ with high probability of $1- \mathcal{O}\left(\frac{1}{p}\right) $ if $n = \Omega\nrbr{k \log(p)}$. Substituting this bound and Eq. \eqref{eq:m2_eq5} in Eq. \eqref{eq:m2_e1}:
	\begin{align}
	\indinf{ \nrbr{\frac{1}{n} \mathbf{X}_{\mathcal{S}}^{\intercal} \mathbf{X}_{\mathcal{S}}}^{-1}  -  \nrbr{\mathbf{\Sigma}_{\mathbf{x}_\mathcal{S} \mathbf{x}_\mathcal{S}}}^{-1} } \leq \sqrt{k} \frac{2}{\nrbr{C_{\text{min}} + 2F_{\text{min}} + D_{\text{min}}}^2} \norm{\frac{1}{n} \mathbf{X}_{\mathcal{S}}^{\intercal} \mathbf{X}_{\mathcal{S}}  - \mathbf{\Sigma}_{\mathbf{x}_\mathcal{S} \mathbf{x}_\mathcal{S}}}_2  \label{eq:m2_e6}
	\end{align}
	We further proceed to bound $\norm{\frac{1}{n} \mathbf{X}_{\mathcal{S}}^{\intercal} \mathbf{X}_{\mathcal{S}}  - \mathbf{\Sigma}_{\mathbf{x}_\mathcal{S} \mathbf{x}_\mathcal{S}}}_2  $  in Eq. \eqref{eq:m2_e6}:
	\begin{align}
	\norm{\frac{1}{n} \mathbf{X}_{\mathcal{S}}^{\intercal} \mathbf{X}_{\mathcal{S}}  - \mathbf{\Sigma}_{\mathbf{x}_\mathcal{S} \mathbf{x}_\mathcal{S}}}_2  \leq \lmod \frac{1}{n} \mathbf{X}_{\mathcal{S}}^{\star \intercal} \mathbf{X}_{\mathcal{S}}^{\star} - \mathbf{\Sigma}_{\mathbf{x}^{\star}_{\mathcal{S}} \mathbf{x}^{\star}_{\mathcal{S}}}\rmod_{2} + 2 \lmod \frac{1}{n} \mathbf{E}_{{\mathcal{S}}}^{\intercal} \mathbf{X}_{\mathcal{S}}^{\star} - \mathbf{\Sigma}_{\mathbf{e}_{\mathcal{S}} \mathbf{x}^{\star}_{\mathcal{S}}}\rmod_{2} + \lmod \frac{1}{n} \mathbf{E}_{{\mathcal{S}}}^{\intercal} \mathbf{E}_{\mathcal{S}} - \mathbf{\Sigma}_{\mathbf{e}_{\mathcal{S}} \mathbf{e}_{\mathcal{S}}}\rmod_{2} \label{eq:m2_eq2}
	\end{align}
	The first term in the RHS of the above equation can be easily bounded by substituting $\delta = \nrbr{C_{\text{min}} + 2F_{\text{min}} + D_{\text{min}}}^2\frac{\delta_1}{ 8\sqrt{k}} $ in Eq. \eqref{eq:eig_SS} to claim
	\begin{align}
	\P\sqbr{\lmod \frac{1}{n} \mathbf{X}_{\mathcal{S}}^{\star \intercal} \mathbf{X}_{\mathcal{S}}^{\star} - \mathbf{\Sigma}_{\mathbf{x}^{\star}_{\mathcal{S}} \mathbf{x}^{\star}_{\mathcal{S}}}\rmod_{2}  \leq \frac{\nrbr{C_{\text{min}} + 2F_{\text{min}} + D_{\text{min}} }^2}{8} \frac{\delta_1}{\sqrt{k}}   } \geq 1- \mathcal{O}\left(\frac{1}{p}\right) \label{eq:m2_eq3}
	\end{align} 
	if $n = \Omega\nrbr{\frac{k^2 }{\delta_1^2 \nrbr{C_{\text{min}} + F_{\text{min}}}^4 } + \log(p)}$. The third term, $\lmod \frac{1}{n} \mathbf{E}_{{\mathcal{S}}}^{\intercal} \mathbf{E}_{\mathcal{S}} - \mathbf{\Sigma}_{\mathbf{e}_{\mathcal{S}} \mathbf{e}_{\mathcal{S}}}\rmod_{2} $ in the RHS of Eq. \eqref{eq:m2_eq2} can also be bounded in similar manner with same sample complexity. The second term in  Eq. \eqref{eq:m2_eq2} can be bounded by substituting $\delta = \frac{\nrbr{C_{\text{min}} + 2F_{\text{min}} + D_{\text{min}} }^2 }{8   } \frac{\delta_1}{ \sqrt{k}}  $ in Theorem \ref{thm:B_2} 
	\begin{align}
	\P\sqbr{\lmod \frac{1}{n} \mathbf{E}_{{\mathcal{S}}}^{\intercal} \mathbf{X}_{\mathcal{S}}^{\star} - \mathbf{\Sigma}_{\mathbf{e}_{\mathcal{S}} \mathbf{x}^{\star}_{\mathcal{S}}}\rmod_{2} \leq  \frac{\nrbr{C_{\text{min}} + 2F_{\text{min}} + D_{\text{min}} }^2 }{8} \frac{\delta_1}{\sqrt{k}}   } \geq 1- \mathcal{O}\left(\frac{1}{p}\right)\label{eq:m2_eq4}
	\end{align} 
	if $n = \Omega\nrbr{\frac{k^2 }{\delta_1^2 \nrbr{C_{\text{min}} + 2F_{\text{min}} + D_{\text{min}} }^4 }\log(p)}$. Further, we substitute Eq. \eqref{eq:m2_eq3} and Eq. \eqref{eq:m2_eq4} in Eq. \eqref{eq:m2_eq2} to claim the following 
	\begin{align}
	\norm{\frac{1}{n} \mathbf{X}_{\mathcal{S}}^{\intercal} \mathbf{X}_{\mathcal{S}}  - \mathbf{\Sigma}_{\mathbf{x}_\mathcal{S} \mathbf{x}_\mathcal{S}}}_2 \leq \frac{\nrbr{C_{\text{min}} + 2F_{\text{min}} + D_{\text{min}} }^2}{2} \frac{\delta_1}{\sqrt{k}} \label{eq:m2_e8}
	\end{align}
	with probability $ 1- \mathcal{O}\left(\frac{1}{p}\right)$. Substituting Eq. \eqref{eq:m2_e8} in Eq. \eqref{eq:m2_e6} and replacing the dummy variable $\delta_1$ with $\delta$, we arrive at the claimed result. 
\end{proof}

\subsection{Proof of Theorem \ref{thm:B_2}}
\label{appn:B_2}
\textbf{Theorem \ref{thm:B_2}:} 
For $0 < \delta < \frac{r^2 \sigma^2  \max\limits_{j \in \mathcal{S}}\left\{ \mathbf{ \Sigma}_{jj} \right\} \sqrt{2k \sum\limits_{i \in \mathcal{S}} \mathbf{ \Sigma}^2_{\mathbf{e}_{ii}}} }{n} $, we have
\begin{align}
\mathbb{P}\left[ \lmod \frac{1}{n} \mathbf{E}^{\intercal}  \mathbf{X}^{\star} - \mathbf{\Sigma}_{\mathbf{e}_{\mathcal{S}} \mathbf{x}^{\star}_{\mathcal{S}}} \rmod_{2} \geq \delta  \right] \leq 4 \exp\left\{ \frac{-n \delta^2}{256r^2 \sigma^2  \max\limits_{j \in \mathcal{S}}\left\{ \mathbf{ \Sigma}_{jj} \right\} \sqrt{k \sum\limits_{i \in \mathcal{S}} \mathbf{ \Sigma}^2_{\mathbf{e}_{ii}}}   }\right\}\label{eq:lemB2_a}
\end{align}
\begin{proof}
	Let $\mathbf{B} = \mathbf{E}^{\intercal}  \mathbf{X}^{\star} $ and a matrix $\mathbf{Q}$ be defined as:
	\begin{align}
	\mathbf{Q} = \begin{bmatrix}
	\mathbf{0}_{k \times k} & \mathbf{B} \\
	\mathbf{B}^{\intercal}  &  \mathbf{0}_{k \times k}
	\end{bmatrix}
	\end{align}
	Using Lemma \ref{lem:QandQ2}, $\lmod \mathbf{B} \rmod_2 = \lmod \mathbf{Q} \rmod_2$. Hence we work with $\mathbf{Q}$ instead of $\mathbf{B}$. 
	
	Using Lemma 6.12 from \cite{wainwright2019high}, we have
	\begin{align}
	\mathbb{P}\left[ \lmod\mathbf{Q} - \E\sqbr{\mathbf{Q}}\rmod_{2}\geq \delta \right] 	\leq 2 \texttt{tr}\left(\Psi_{\mathbf{Q} }(\lambda)\right) e^{-\lambda \delta} \label{eq:matrix_chern}
	\end{align}
	where $\Psi_{\mathbf{Q}}$ is the moment generating function of a random matrix $\mathbf{Q} $ and can be seen as a mapping $\Psi_{\mathbf{Q}}: \mathbb{R} \rightarrow \mathcal{S}^{d \times d}$ defined as:
	\begin{align*}
	\Psi_{\mathbf{Q}}(\lambda) = \mathbb{E}\left[ e^{\lambda \mathbf{Q}}\right] = \sum_{k = 0}^{\infty } \frac{\lambda^k}{k!} \mathbb{E}\left[ \mathbf{Q}^k\right]
	\end{align*} 		
	Therefore, we have to compute the moment generating function $\Psi_{\mathbf{Q}}(\lambda)$ or compute the bound for $ \texttt{tr}\left(\Psi_{\mathbf{Q} }(\lambda)\right)$ in Eq \eqref{eq:matrix_chern}. To do that, we need to study the distribution of $\mathbf{B}$. Any entry $(i,j)$ of $\mathbf{B}$ can be expressed as the sum of products of pairs of sub-Gaussian random variables:
	\begin{align*}
	\mathbf{ B}_{ij} = \frac{1}{n} \sum_{k = 1}^n \mathbf{E}_{ki} \mathbf{X}^{\star}_{kj}
	\end{align*}
	Since $\frac{\mathbf{E}_{ki} }{r \sqrt{\mathbf{ \Sigma_e}}_{ii}}$ and $\frac{\mathbf{X}^{\star}_{kj}}{\sigma \sqrt{\mathbf{ \Sigma}}_{jj}}$ are zero-mean sub-Gaussian random variables with variance proxy $1$, their product is a sub-exponential random variable with parameter $\left(8\sqrt{2}, 4\right)$ by using Lemma \ref{lem:prod_subG}. Further, we define $q_{ij}= r \sigma \sqrt{\mathbf{ \Sigma_e}_{ii}} \sqrt{\mathbf{ \Sigma}_{jj}}$ by using properties of sub-exponential distributions:
	\begin{align*}
	\mathbf{E}_{ki} \mathbf{X}^{\star}_{kj} & \sim SE\left(8\sqrt{2} q_{ij} , 4q_{ij}\right) \\
	\sum_{k = 1}^n \mathbf{E}_{ki} \mathbf{X}^{\star}_{kj}  & \sim SE\left(8\sqrt{2} q_{ij} \sqrt{n} , 4q_{ij}\right) \\
	\frac{1}{n} \sum_{k = 1}^n \mathbf{E}_{ki} \mathbf{X}^{\star}_{kj} &\sim SE\left( \frac{8\sqrt{2} q_{ij}}{\sqrt{n}}, \frac{4q_{ij}}{n}\right)\\
	\frac{1}{n} \sum_{k = 1}^n \mathbf{E}_{ki} \mathbf{X}^{\star}_{kj} &\sim SE\left( \frac{8\sqrt{2} q_{i}}{\sqrt{n}}, \frac{4q_{i}}{n}\right)
	\end{align*}
	where $q_{i} = r \sigma \sqrt{\mathbf{ \Sigma_e}_{ii}}  \nrbr{\max\limits_{j \in \mathcal{S}}\left\{\mathbf{ \Sigma}_{jj}\right\}}^{1/2}$. Therefore $\mathbf{Q}$ follows sub-exponential distribution with parameter $\left( \mathbf{V}_1, 4q\right)$, where $q = r \sigma \nrbr{\max\limits_{i \in \mathcal{S}} \mathbf{ \Sigma_e}^{1/2}_{ii}}  \nrbr{\max\limits_{j \in \mathcal{S}}\left\{\mathbf{ \Sigma}_{jj}\right\}^{1/2} }$, and $\mathbf{V}_1$ is a matrix of dimension $k \times k$: 
	\begin{align}
	\mathbf{V}_1 = \frac{128 r^2 \sigma^2 \max\limits_{j \in \mathcal{S}}\left\{\mathbf{ \Sigma}_{jj}\right\} }{n}\begin{bmatrix}
	\mathbf{ \Sigma_e}_{11}  & \mathbf{ \Sigma_e}_{11} & \ldots & \mathbf{ \Sigma_e}_{11} \\
	\mathbf{ \Sigma_e}_{22}  & \mathbf{ \Sigma_e}_{22} & \ldots & \mathbf{ \Sigma_e}_{22} \\
	\vdots &\vdots & \ddots & \vdots \\
	\mathbf{ \Sigma_e}_{kk}  & \mathbf{ \Sigma_e}_{kk}  & \ldots & \mathbf{ \Sigma_e}_{kk}  \\
	\end{bmatrix} \label{eq:V_def_c2} 
	\end{align}
	Further, it is easy to observe that the random matrix $\mathbf{Q}$ is sub-exponential with parameter $(\mathbf{V}, \frac{4q}{n})$, where $\mathbf{V}$ is described as
	\begin{align}
	\mathbf{V} = \begin{bmatrix}
	\mathbf{0} & \mathbf{V}_1  \\
	\mathbf{V}_1^{\intercal}  & \mathbf{0} 
	\end{bmatrix} \label{eq:V_def}
	\end{align}
	
	The moment generating function $\Psi_{\mathbf{Q}}(\lambda) $ can be expressed as:
	\begin{align*}
	\Psi_{\mathbf{Q}}(\lambda) \preccurlyeq e^{\frac{\lambda^2 \mathbf{V}}{2}}
	\end{align*}
	Substituting the above in Eq. \eqref{eq:matrix_chern} and by replacing $\delta$ with $n \delta$, we get: 
	\begin{align*}
	\mathbb{P}\left[ \lmod\mathbf{Q}  - \E\sqbr{\mathbf{Q}} \rmod_{2}\geq n \delta \right] 	& \leq 2 \texttt{tr}\left(e^{\frac{\lambda^2 \mathbf{V}}{2}}\right) e^{-n\lambda \delta} \\
	\mathbb{P}\left[ \lmod\frac{1 }{n}\mathbf{Q}  - \E\sqbr{\mathbf{Q}} \rmod_{2}\geq  \delta \right]  &  \leq 2 \texttt{tr}\left( \sum_{i = 0}^{\infty } \frac{\lambda^i}{2^i i!} \mathbf{V}^i\right) e^{-n\lambda \delta}\\
	&  = 2 \sum_{i = 0}^{\infty } \texttt{tr}\left( \frac{\lambda^i}{2^i i!} \left( \mathbf{UDU}^{\intercal} \right)^i\right) e^{-n \lambda \delta}\\
	&  = 2 \sum_{i = 0}^{\infty } \texttt{tr}\left( \frac{\lambda^i}{2^i i!} \left( \mathbf{D}\right)^i\right) e^{-n\lambda \delta}\\
	&  = 2 \texttt{tr}\left( \sum_{i = 0}^{\infty } \frac{\lambda^i}{2^i i!} \left( \mathbf{D}\right)^i\right) e^{-n \lambda \delta}\\
	&  = 2 e^{-n \lambda \delta} \sum_{i = 1}^{2k} e^{\frac{\lambda^2}{2}d_i} 
	\end{align*}	
	The next step is to compute the eigenvalues of the matrix $\mathbf{V} $ which is done in Lemma \ref{lem:eig_V}. It can be easily observed that $\mathbf{V} $ has only two non-zero eigenvalues equal to $\frac{c_2}{n}$, where
	\begin{align}
	c_2 =  128r^2 \sigma^2  \max\limits_{j \in \mathcal{S}}\left\{ \mathbf{ \Sigma}_{jj} \right\} \sqrt{k \sum\limits_{i \in \mathcal{S}} \mathbf{ \Sigma}^2_{\mathbf{e}_{ii}}} \label{eq:c2_t1}
	\end{align}
	If we use all the zero eigenvalues of $\mathbf{V}$ to compute $\sum_{i = 1}^{2k} e^{\frac{\lambda^2}{2} d_i} = 2\exp\left\{ n \frac{\lambda^2}{2} c_2\right\} + 2k-2$, this would lead to ultimately non-optimal bounds. Hence the trick here is that the matrix $\mathbf{V}$ can be expressed as $\mathbf{V} = \mathbf{UDU}^{\intercal} $, where $\mathbf{U}$ is a $2k \times 2$ matrix instead of $2k \times 2k$ because we know $(2k-2)$ eigenvalues of $\mathbf{V}$ are zero. If we use the first two columns of $\mathbf{U}$, then $\sum_{i = 1}^{2k} e^{\frac{\lambda^2}{2}d_i} = 2\exp\left\{ n \frac{\lambda^2}{2} c_2\right\}$. Substituting this in Eq. \eqref{eq:matrix_chern}:
	\begin{align*}
	\mathbb{P}\left[ \lmod \frac{1}{n} \mathbf{E}^{\intercal}  \mathbf{X} - \mathbf{\Sigma}_{\mathbf{e}_{\mathcal{S}} \mathbf{x}^{\star}_{\mathcal{S}}} \rmod_{2} \geq \delta  \right] \leq 2 e^{-\lambda \delta } \times 2 \exp\left\{ \frac{\lambda^2 c_2}{2n} \right\} \qquad \qquad \forall \lambda < \frac{n}{4q}
	\end{align*}
	Substituting the optimal $\lambda = \frac{n \delta }{c_2}$, we get:
	\begin{align}
	\mathbb{P}\left[ \lmod \frac{1}{n} \mathbf{E}^{\intercal}  \mathbf{X} - \mathbf{\Sigma}_{\mathbf{e}_{\mathcal{S}} \mathbf{x}^{\star}_{\mathcal{S}}} \rmod_{2} \geq \delta  \right] \leq 4 e^{\frac{-n \delta^2}{2c_2}} = 4 \exp\left\{ \frac{-n \delta^2}{256 r^2 \sigma^2 \max\limits_{j \in \mathcal{S}}\left\{ \mathbf{ \Sigma}_{jj} \right\} \sqrt{k \sum\limits_{i \in \mathcal{S}} \mathbf{ \Sigma}^2_{\mathbf{e}_{ii}}}   }\right\}
	\end{align}
	for $0 < \delta < 32 \sqrt{k} r \sigma  \max\limits_{j \in \mathcal{S}}\left\{ \mathbf{ \Sigma}_{jj} \right\} \sqrt{ \sum_{i=1}^{k} \mathbf{ \Sigma}^2_{\mathbf{e}_{ii}}}$. Hence, a slightly simplified version of $\delta$ can be $0 < \delta < 32 \sqrt{k}r \sigma \nrbr{\max\limits_{i \in \mathcal{S}} \mathbf{ \Sigma_e}^{1/2}_{ii}}  \nrbr{\max\limits_{j \in \mathcal{S}}\left\{\mathbf{ \Sigma}_{jj}\right\}^{1/2} }$. 
	
\end{proof}

\subsection{Proof of Lemma \ref{lem:QandQ2}}
This lemma helps us to work with a symmetric matrix ($ \mathbf{M} $) instead of non-symmetric matrix ($ \mathbf{B} $). 
\textbf{Lemma \ref{lem:QandQ2}}
For matrix $\mathbf{B} \in \mathbb{R}^{k \times k}$, let $\mathbf{M}$ be defined as:
\begin{align*}
\mathbf{M} = \begin{bmatrix}
\mathbf{0}_{k \times k} & \mathbf{B} \\
\mathbf{B}^{\intercal} &  \mathbf{0}_{k \times k}
\end{bmatrix}
\end{align*}
we claim $\lmod \mathbf{B} \rmod_2 = \lmod \mathbf{M} \rmod_2$.
\begin{proof}
	Using $\mathbf{M}$ defined as above, $\mathbf{M}^2$ can be computed as:
	\begin{align*}
	\mathbf{M}^2 = \begin{bmatrix}
	\mathbf{B} \mathbf{B}^{\intercal} & \mathbf{0}_{k \times k} \\ 
	\mathbf{0}_{k \times k} & \mathbf{B}^{\intercal} \mathbf{B}  \\ 
	\end{bmatrix}
	\end{align*}
	The spectral norm of $	\mathbf{M}^2 $ can be computed as: 
	\begin{align*}
	\lmod 	\mathbf{M}^2  \rmod_{2} = \max \left\{ \lambda_{\text{max}}\left(\mathbf{B} \mathbf{B}^{\intercal}\right),  \lambda_{\text{max}}\left(\mathbf{B}^{\intercal} \mathbf{B} \right)\right\}
	\end{align*}
	From basic linear algebra properties, it is easy to observe that eigenvalues of $\mathbf{B} \mathbf{B}^{\intercal}$ and $\mathbf{B}^{\intercal} \mathbf{B}$ are the same: 
	\begin{align*}
	\mathbf{B} \mathbf{B}^{\intercal} \mathbf{x} &= \lambda \mathbf{x} \\
	\mathbf{B}^{\intercal}	\mathbf{B} \mathbf{B}^{\intercal} \mathbf{x} &= \lambda \mathbf{B}^{\intercal}\mathbf{x} \\
	\mathbf{B}^{\intercal} 	\mathbf{B} \left( \mathbf{B}^{\intercal} \mathbf{x} \right) &= \lambda \left( \mathbf{B}^{\intercal}\mathbf{x} \right) \\
	\mathbf{B}^{\intercal} 	\mathbf{B} \mathbf{y} &= \lambda \mathbf{y} 
	\end{align*}
	Using the above, we can claim, 
	\begin{align*}
	\lmod 	\mathbf{M}^2  \rmod_{2} =  \lambda_{\text{max}}\left(\mathbf{B} \mathbf{B}^{\intercal}\right)
	\end{align*}
	We also know that $\lmod \mathbf{M}  \rmod^2_{2}  = \lmod 	\mathbf{M}^2  \rmod_{2} $. Therefore $\lmod \mathbf{M}  \rmod_{2} = \sqrt{\lambda_{\text{max}}\left(\mathbf{B} \mathbf{B}^{\intercal}\right)} = \lmod\mathbf{B}  \rmod_{2}$. 
\end{proof}

\subsection{Simplification of $\hat{\mathbf{{z}}}_{\mathcal{S}^c_{t_2}} $ }
\label{sec:appdn_zsct2}
\begin{align}
\hat{\mathbf{{z}}}_{\mathcal{S}^c_{t_2}} = \mathbf{X}_{\mathcal{S}^c}^{\intercal}  \left({\mathbf{P}}/{\lambda n}\right) \left( \mathbf{e}_y - \mathbf{E_{x_{\mathcal{S}}}} \mathbf{w}_{\mathcal{S}}^{\star} \right), \qquad \text{where} \qquad \mathbf{P} = \left( \mathbf{I}_n - \mathbf{X}_{\mathcal{S}} \left(\mathbf{X}_{\mathcal{S}}^{\intercal} \mathbf{X}_{\mathcal{S}} \right)^{-1} \mathbf{X}_{\mathcal{S}}^{\intercal}  \right) 
\end{align}
Using triangle inequality and sub-multiplicative property of norms:
\begin{align*}
\norm{\hat{\mathbf{{z}}}_{\mathcal{S}^c_{t_2}}}_{\infty} &\leq \frac{1}{\lambda} \norm{\frac{1}{n} \mathbf{X}_{\mathcal{S}^c}^{\intercal} \mathbf{P} \mathbf{e}_y }_{\infty} + \frac{1}{\lambda} \norm{\frac{1}{n} \mathbf{X}_{\mathcal{S}^c}^{\intercal}  \mathbf{E_{x_{\mathcal{S}}}} \mathbf{w}^{\star}_{\mathcal{S}} }_{\infty}  + \frac{1}{\lambda} \norm{ \mathbf{X}_{\mathcal{S}^c}^{\intercal}  \mathbf{X}_{\mathcal{S}} \left(\mathbf{X}_{\mathcal{S}}^{\intercal} \mathbf{X}_{\mathcal{S}} \right)^{-1}}_{\infty}\norm{ \frac{1}{n}\mathbf{X}_{\mathcal{S}}^{\intercal} \mathbf{E_{x_{\mathcal{S}}}} \mathbf{w}^{\star}_{\mathcal{S}} }_{\infty}
\end{align*}
Further using the bound for $\norm{ \mathbf{X}_{\mathcal{S}^c}^{\intercal}  \mathbf{X}_{\mathcal{S}} \left(\mathbf{X}_{\mathcal{S}}^{\intercal} \mathbf{X}_{\mathcal{S}} \right)^{-1}}_{\infty}$ derived in section \ref{sec:Zsct1_b} or Eq. \eqref{eq:Zsc_t1_bound}:
\begin{align}
\norm{\hat{\mathbf{{z}}}_{\mathcal{S}^c_{t_2}}}_{\infty} &\leq \frac{1}{\lambda} \norm{\frac{1}{n} \mathbf{X}_{\mathcal{S}^c}^{\intercal} \mathbf{P} \mathbf{e}_y }_{\infty} + \frac{1}{\lambda} \norm{\frac{1}{n} \mathbf{X}_{\mathcal{S}^c}^{\intercal}  \mathbf{E_{x_{\mathcal{S}}}} \mathbf{w}^{\star}_{\mathcal{S}} }_{\infty}  + \frac{1}{\lambda} \nrbr{1-\frac{3\gamma}{4}}\norm{ \frac{1}{n}\mathbf{X}_{\mathcal{S}}^{\intercal} \mathbf{E_{x_{\mathcal{S}}}} \mathbf{w}^{\star}_{\mathcal{S}} }_{\infty}
\end{align}

\subsection{Proof of Lemma \ref{lem:Xe_ey_lambda}}

\textbf{ Lemma }\ref{lem:Xe_ey_lambda}: If the regularization parameter $\lambda = \lambda_1 \geq \frac{8q_1 \sigma_{e_y}}{\gamma}\sqrt{\frac{4\log(p)}{n}}$, where constant $q^2_1 = 3  \nrbr{ C_{\max} + 2F_{\max} + D_{\max} } $, then $\norm{\frac{\mathbf{X}_{\mathcal{S}^c}^{\intercal} \mathbf{P} \mathbf{e}_y}{n \lambda} }_{\infty} \leq \frac{\gamma}{8}$ with probability of at least $1-\mathcal{O}\nrbr{\frac{1}{p}}$.

\begin{proof}
	Consider the random vector of dimension $(p-k)$: 
	\begin{align}
	\mathbf{t}_1 = \frac{\mathbf{X}_{\mathcal{S}^c}^{\intercal} \mathbf{P} \mathbf{e}_y}{n \lambda} 
	\end{align}
	whose each entry is zero-mean sub-Gaussian conditioned on $\mathbf{X}$. The variance parameter for each entry is given by: 
	\begin{align}
	\sigma^2_{t_1} = \frac{1}{\lambda^2 n^2} \norm{\mathbf{X}_{\mathcal{S}^c}^{\intercal} \mathbf{P} \E\sqbr{\mathbf{e}_y \mathbf{e}^{\intercal}_y }\mathbf{P} \mathbf{X}_{\mathcal{S}^c} }_2&= \frac{\sigma^2_{e_y}}{\lambda^2 n} \norm{\frac{\mathbf{X}_{\mathcal{S}^c}^{\intercal} \mathbf{X}_{\mathcal{S}^c}}{n} }_2 \nonumber \\
	&\leq \frac{\sigma^2_{e_y}}{\lambda^2 n} \nrbr{\norm{\mathbf{ \Sigma}_{\mathbf{x}_{\mathcal{S}^c}  \mathbf{x}_{\mathcal{S}^c}} }_2 + \norm{\frac{\mathbf{X}_{\mathcal{S}^c}^{\intercal} \mathbf{X}_{\mathcal{S}^c}}{n} - \mathbf{ \Sigma}_{\mathbf{x}_{\mathcal{S}^c}  \mathbf{x}_{\mathcal{S}^c}}}_2} \nonumber \\
	& \leq \frac{3\sigma^2_{e_y}}{2\lambda^2 n} \norm{\mathbf{ \Sigma}_{\mathbf{x}_{\mathcal{S}^c}  \mathbf{x}_{\mathcal{S}^c}} }_2 \nonumber \\
	& \leq \frac{3\sigma^2_{e_y}}{2\lambda^2 n}  \nrbr{ C_{\max} + 2F_{\max} + D_{\max} }
	\end{align}
	where we have used Lemma \ref{lem:Proj_mat} in the first step. In the last step, we decompose $\norm{\frac{\mathbf{X}_{\mathcal{S}^c}^{\intercal} \mathbf{X}_{\mathcal{S}^c}}{n} - \mathbf{ \Sigma}_{\mathbf{x}_{\mathcal{S}^c}  \mathbf{x}_{\mathcal{S}^c}}}_2$ as done in Eq. \eqref{eq:m2_eq2} and further use Eq. \eqref{eq:eig_SS} and Theorem \ref{thm:B_2} to claim the resulting bound  with high probability of at least $1-\mathcal{O}\nrbr{\frac{1}{p}}$, if $n = k \log(p)$. 
	
	Further we use union-bound along with sub-Gaussian tail bounds to claim: 
	\begin{align}
	\P\sqbr{\norm{\mathbf{t}_1}_{\infty} \geq \delta} \leq 2\exp\crbr{\frac{-\delta^2}{2 \sigma^2_{t_1}} + \log(p-k)}
	\end{align}
	Substituting $\delta = \frac{\gamma}{16}$, we can claim the above state with high probability of at least $1-\mathcal{O}\nrbr{\frac{1}{p}}$ if 
	\begin{align}
	\lambda \geq \frac{\sigma_{e_y}}{\gamma}\sqrt{3  \nrbr{ C_{\max} + 2F_{\max} + D_{\max} }} \sqrt{\frac{2\log(p)}{n}}
	\end{align}
	which completes the proof. 
\end{proof}

\subsection{Proof of Lemma \ref{lem:Proj_mat}}
This lemma helps us to bound the spectral norm of $\mathbf{P}$.

\noindent \textbf{Lemma \ref{lem:Proj_mat}: }	$\mathbf{P}$ defined in Eq. \eqref{eq:projmat_def} is a projection matrix and hence $\lmod \mathbf{P} \rmod^2_{2} = 1$. 
\begin{proof}
	We use the fact that $\mathbf{P}$ is a projection matrix iff  $\mathbf{I} - \mathbf{P}$ is a projection matrix. Hence we focus only on $\mathbf{T} = \mathbf{I} - \mathbf{P} = \mathbf{X}_{\mathcal{S}} \left(\mathbf{X}_{\mathcal{S}}^{\intercal}  \mathbf{X}_{\mathcal{S}} \right)^{-1} \mathbf{X}_{\mathcal{S}}^{\intercal}   $
	\begin{align*}
	\mathbf{T}  &= \mathbf{X}_{\mathcal{S}} \left(\mathbf{X}_{\mathcal{S}}^{\intercal}  \mathbf{X}_{\mathcal{S}} \right)^{-1} \mathbf{X}_{\mathcal{S}}^{\intercal}   \\
	\mathbf{T}^2 &=  \mathbf{X}_{\mathcal{S}}\left(\mathbf{X}_{\mathcal{S}}^{\intercal}  \mathbf{X}_{\mathcal{S}} \right)^{-1} \mathbf{X}_{\mathcal{S}}^{\intercal}   \mathbf{X}_{\mathcal{S}} \left(\mathbf{X}_{\mathcal{S}}^{\intercal}  \mathbf{X}_{\mathcal{S}} \right)^{-1} \mathbf{X}_{\mathcal{S}}^{\intercal}  = \mathbf{X}_{\mathcal{S}} \left(\mathbf{X}_{\mathcal{S}}^{\intercal}  \mathbf{X}_{\mathcal{S}} \right)^{-1} \mathbf{X}_{\mathcal{S}}^{\intercal}    = \mathbf{T} 
	\end{align*} 
	Hence $\mathbf{P}$ defined in Eq. \eqref{eq:projmat_def} is a valid projection matrix. 
\end{proof}

\subsection{Proof of Lemma \ref{lem:XscEsw}}
\textbf{Lemma \ref{lem:XscEsw}: }
If $\lambda = \lambda_2 \geq  \frac{16}{\gamma} \max \crbr{\norm{\mathbf{\Sigma}_{\mathbf{x}^{\star}_{\mathcal{S}} \mathbf{e}_{\mathcal{S}}} \mathbf{w}^{\star}_{\mathcal{S}} + \mathbf{\Sigma}_{\mathbf{e}_{\mathcal{S}} \mathbf{e}_{\mathcal{S}}} \mathbf{w}^{\star}_{\mathcal{S}} }_{\infty}, q_2 \sqrt{\frac{4 \log(p)}{n}}}$, then 
\begin{align}
\P\sqbr{\norm{\frac{\mathbf{X}_{\mathcal{S}}^{\intercal}  \mathbf{E_{x_{\mathcal{S}}}} \mathbf{w}^{\star}_{\mathcal{S}}}{n\lambda}}_{\infty} \leq \frac{\gamma}{8}   } \geq 1 -\mathcal{O}\nrbr{\frac{1}{p}}
\end{align}
where $q_2  = r\sqrt{ \mathbf{w}^{\star\intercal}_{\mathcal{S}} \mathbf{\Sigma}_{\mathbf{e}_{\mathcal{S} \mathcal{S}}} \mathbf{w}^{\star}_{\mathcal{S}}}   \max\limits_{i \in \mathcal{S}^c} \nrbr{\sigma \sqrt{\mathbf{ \Sigma}}_{ii} + r \sqrt{\mathbf{ \Sigma_e}}_{ii}} $.

\begin{proof}
	Consider $\mathbf{t}_2 = \frac{1}{n} \mathbf{X}_{\mathcal{S}^c}^{\intercal}  \mathbf{E_{x_{\mathcal{S}}}} \mathbf{w}^{\star}_{\mathcal{S}} $ which is a $(p-k) \times 1$ random vector whose $i^{th}$ entry can be expressed as the mean of $n$ samples: 
	\begin{align}
	\mathbf{t}_{2i} &= \frac{1}{n} \sum_{j = 1}^n \mathbf{x}^{(j)}_{i}  \left(\sum_{l \in \mathcal{S}} \mathbf{e}^{(j)}_{\mathbf{x}_l} \mathbf{w}^{\star}_l\right) \\
	\mathbb{E}\sqbr{\mathbf{t}_{2i}} &= \frac{1}{n} \sum_{j = 1}^n \sum_{l \in \mathcal{S}} \E \sqbr{\mathbf{x}^{\star (j)}_{i}  \mathbf{e}^{(j)}_{\mathbf{x}_l} \mathbf{w}^{\star}_l } + \frac{1}{n} \sum_{j = 1}^n \sum_{l \in \mathcal{S}} \E \sqbr{\mathbf{e}^{(j)}_{\mathbf{x}_i} \mathbf{e}^{(j)}_{\mathbf{x}_l} \mathbf{w}^{\star}_l } \\
	&=  \mathbf{\Sigma}_{\mathbf{x}^{\star}_i \mathbf{e}_{\mathcal{S}}} \mathbf{w}^{\star}_{\mathcal{S}} + \mathbf{\Sigma}_{\mathbf{e}_i \mathbf{e}_{\mathcal{S}}} \mathbf{w}^{\star}_{\mathcal{S}} 
	\end{align}
	where $i \in \mathcal{S}^c$. Since $\frac{\mathbf{x}^{(j)}_{i} }{ \nrbr{\sigma \sqrt{\mathbf{ \Sigma}}_{ii} + r \sqrt{\mathbf{ \Sigma_e}}_{ii}}}$ and $\frac{\sum\limits_{l \in \mathcal{S}} \mathbf{e}^{(j)}_{\mathbf{x}_l} \mathbf{w}^{\star}_l }{\sigma_t}$, where $\sigma_t =   r\sqrt{ \mathbf{w}^{\star\intercal}_{\mathcal{S}} \mathbf{\Sigma}_{\mathbf{e}_{\mathcal{S} \mathcal{S}}} \mathbf{w}^{\star}_{\mathcal{S}}} $ are zero-mean sub-Gaussian random variables with variance proxy $1$, their product is a sub-exponential random variable with parameter $\left(8\sqrt{2}, 4\right)$ by using Lemma \ref{lem:prod_subG}. Therefore the sample mean will also be sub-exponential random variable with following parameters: 
	\begin{align*}
	\frac{1}{n \lambda} \sum_{j = 1}^n \mathbf{x}^{(j)}_{i}  \sum_{l \in \mathcal{S}} \mathbf{e}^{(j)}_{\mathbf{x}_l} \mathbf{w}^{\star}_l \sim SE\nrbr{\frac{8\sqrt{2}c_i}{\lambda\sqrt{n}}, \frac{4 c_i}{\lambda n}}
	\end{align*}
	where $c_i = \nrbr{\sigma \sqrt{\mathbf{ \Sigma}}_{ii} + r \sqrt{\mathbf{ \Sigma_e}}_{ii}} \sigma_t$. By using sub-exponential tail bounds and union bound, we further claim:
	\begin{align}
	\P\sqbr{\frac{1}{\lambda}\norm{\mathbf{t}_{2} - \E[\mathbf{t}_{2}]}_{\infty} > \delta} \leq 2\exp\nrbr{-\frac{n\delta^2 \lambda^2}{2 q^2_2} + \log(p-k)}
	\end{align} 
	for $0 < \delta \lambda \leq 32 q_2$, where $q_2  = \max\limits_{i \in \mathcal{S}^c} c_i $.
	Substituting $\delta  = \frac{\gamma}{16}$ in the above equation, we arrive at: 
	\begin{align}
	\frac{1}{\lambda}\norm{\mathbf{t}_{2} - \E[\mathbf{t}_{2}]}_{\infty} \leq \frac{\gamma}{16} \label{eq:lambda_b1subG}
	\end{align}
	with high probability of at least $1-\mathcal{O}\nrbr{\frac{1}{p}}$, if the regularization parameter satisfies: 
	\begin{align}
	\lambda \geq \frac{16q_2}{\gamma} \sqrt{\frac{2 \log(p)}{n}} \label{eq:lambda_1}
	\end{align}
	Using triangle inequality, we can claim:
	\begin{align}
	\frac{\norm{\mathbf{t}_2}_{\infty}}{\lambda} \leq \frac{1}{\lambda}\norm{\mathbf{t}_{2} - \E[\mathbf{t}_{2}]}_{\infty} + \frac{\norm{\E[\mathbf{t}_{2}]}_{\infty}}{\lambda} \leq \frac{\gamma}{16} + \frac{\gamma}{16} = \frac{\gamma}{8}
	\end{align}
	with high probability if the regularization parameter satisfies: 
	\begin{align}
	\lambda \geq \frac{16\norm{\E[\mathbf{z}_{1}]}_{\infty}}{\gamma} = \frac{16}{\gamma} \nrbr{\norm{\mathbf{\Sigma}_{\mathbf{x}^{\star}_{\mathcal{S}^c} \mathbf{e}_{\mathcal{S}}} \mathbf{w}^{\star}_{\mathcal{S}} + \mathbf{\Sigma}_{\mathbf{e}_{\mathcal{S}^c} \mathbf{e}_{\mathcal{S}}} \mathbf{w}^{\star}_{\mathcal{S}} }_{\infty}} \label{eq:lambda_2}
	\end{align}
	Combining Eq. \eqref{eq:lambda_1} and Eq. \eqref{eq:lambda_2} for the regularization parameter: 
	\begin{align}
	\lambda \geq  \frac{16}{\gamma} \max \crbr{\norm{\mathbf{\Sigma}_{\mathbf{x}^{\star}_{\mathcal{S}^c} \mathbf{e}_{\mathcal{S}}} \mathbf{w}^{\star}_{\mathcal{S}} + \mathbf{\Sigma}_{\mathbf{e}_{\mathcal{S}^c} \mathbf{e}_{\mathcal{S}}} \mathbf{w}^{\star}_{\mathcal{S}} }_{\infty}, q_2 \sqrt{\frac{4 \log(p)}{n}}}
	\end{align}
\end{proof}

\subsection{Proof of Lemma \ref{lem:sampleHess_psd}}
\textbf{Lemma \ref{lem:sampleHess_psd}} 
\textit{If assumption \ref{assum:Sigma_Xpsd} holds and $n = \Omega\left( k \log(p)\right)$, then we have $\Lambda_{\text{min}} \left( \frac{1}{n}\mathbf{X}_{\mathcal{S}}^{\intercal} \mathbf{X}_{\mathcal{S}}  \right) \geq \frac{C_{\text{min}} +2F_{\text{min}}  + D_{\text{min}}}{2} > 0$ with probability at least $1 - \mathcal{O}\left(\frac{1}{p}\right)$ }.
\begin{proof}
	The minimum eigenvalue of $\frac{1}{n}\mathbf{X}_{\mathcal{S}}^{\intercal} \mathbf{X}_{\mathcal{S}} $ can be expressed as: 
	\begin{align}
	\Lambda_{\text{min}} \left( \frac{1}{n}\mathbf{X}_{\mathcal{S}}^{\intercal} \mathbf{X}_{\mathcal{S}}  \right) & = \Lambda_{\text{min}} \left( \frac{1}{n}\mathbf{X}_{\mathcal{S}}^{\star \intercal} \mathbf{X}_{\mathcal{S}}^{\star} + \frac{1}{n} \mathbf{E}^{\intercal}_{{\mathcal{S}}} \mathbf{X}^{\star}_{\mathcal{S}} + \frac{1}{n} \mathbf{X}^{\star \intercal}_{\mathcal{S}} \mathbf{E}_{{\mathcal{S}}} + \frac{1}{n} \mathbf{E}^{\intercal}_{{\mathcal{S}}}  \mathbf{E}_{{\mathcal{S}}} \right) \nonumber \\
	&\geq \Lambda_{\text{min}} \left( \frac{1}{n}\mathbf{X}_{\mathcal{S}}^{\star \intercal} \mathbf{X}_{\mathcal{S}}^{\star}\right)   + 2 \Lambda_{\text{min}} \nrbr{ \frac{1}{n} \mathbf{E}^{\intercal}_{{\mathcal{S}}} \mathbf{X}^{\star}_{\mathcal{S}} }+ \lambda_{\text{min}}\left( \frac{1}{n} \mathbf{E}^{\intercal}_{{\mathcal{S}}} \mathbf{E}_{{\mathcal{S}}}\right) \label{eq:eigmin_decomp}
	\end{align}
	We need to further derive lower bounds for 	$\Lambda_{\text{min}} \left( \frac{1}{n}\mathbf{X}_{\mathcal{S}}^{\star \intercal} \mathbf{X}_{\mathcal{S}}^{\star}\right)  $ and $\lambda_{\text{min}}\left( \frac{1}{n} \mathbf{E}^{\intercal}_{{\mathcal{S}}} \mathbf{E}_{{\mathcal{S}}}\right)$. Substituting $\delta = \frac{1}{2}$ in Eq. \eqref{eq:mineig_Xstar} and Eq. \eqref{eq:mineig_Ex} of Lemma \ref{lem:sampleHess_psd_Xstar}, we can claim $\Lambda_{\text{min}} \left( \frac{1}{n}\mathbf{X}_{\mathcal{S}}^{\star \intercal} \mathbf{X}_{\mathcal{S}}^{\star} \right) \geq \frac{C_{\text{min}}}{2}$ and  $\lambda_{\text{min}}\left( \frac{1}{n} \mathbf{E}^{\intercal}_{{\mathcal{S}}} \mathbf{E}_{{\mathcal{S}}}\right) \geq \frac{D_{\text{min}}}{2}$ with probability $\left(1 - 2 \exp \left\{ \frac{-c_1 C^2_{\text{min}} n }{4} + k\right\}\right)$ and $\left(1 - 2 \exp \left\{ \frac{-c_2 D^2_{\text{min}} n }{4} + k\right\}\right)$  respectively. Using this information, we claim: 
	\begin{align}
	\Lambda_{\text{min}} \left( \frac{1}{n}\mathbf{X}_{\mathcal{S}}^{\intercal} \mathbf{X}_{\mathcal{S}}  \right) & \geq \frac{C_{\text{min}}}{2} + 2 \Lambda_{\text{min}} \nrbr{ \frac{1}{n} \mathbf{E}^{\intercal}_{{\mathcal{S}}} \mathbf{X}^{\star}_{\mathcal{S}} }+ \frac{D_{\text{min}}}{2}  \nonumber
	\end{align}
	To derive a lower bound for $\Lambda_{\text{min}} \nrbr{ \frac{1}{n} \mathbf{E}^{\intercal}_{{\mathcal{S}}} \mathbf{X}^{\star}_{\mathcal{S}} }$, we express it as follows: 
	\begin{align*}
	\Lambda_{\text{min}} \nrbr{ \frac{1}{n} \mathbf{E}^{\intercal}_{{\mathcal{S}}} \mathbf{X}^{\star}_{\mathcal{S}} } &\geq \lambda_{\text{min}}\nrbr{\mathbf{\Sigma}_{\mathbf{e}_{\mathcal{S}} \mathbf{x}^{\star}_{\mathcal{S}} }}  + \Lambda_{\text{min}} \nrbr{ \frac{1}{n} \mathbf{E}^{\intercal}_{{\mathcal{S}}} \mathbf{X}^{\star}_{\mathcal{S}} - \mathbf{\Sigma}_{\mathbf{e}_{\mathcal{S}} \mathbf{x}^{\star}_{\mathcal{S}} }}\\ 
	& \geq \lambda_{\text{min}}\nrbr{\mathbf{\Sigma}_{\mathbf{e}_{\mathcal{S}} \mathbf{x}^{\star}_{\mathcal{S}} }} - \norm{ \frac{1}{n} \mathbf{E}^{\intercal}_{{\mathcal{S}}} \mathbf{X}^{\star}_{\mathcal{S}} - \mathbf{\Sigma}_{\mathbf{e}_{\mathcal{S}} \mathbf{x}^{\star}_{\mathcal{S}} }}_2
	\end{align*}
	The next step is to bound $\norm{ \frac{1}{n} \mathbf{E}^{\intercal}_{{\mathcal{S}}} \mathbf{X}^{\star}_{\mathcal{S}} - \mathbf{\Sigma}_{\mathbf{e}_{\mathcal{S}} \mathbf{x}^{\star}_{\mathcal{S}} }}_2$ which is done in Theorem \ref{thm:B_2}. Substituting $\delta = \frac{\lambda_{\text{min}}\nrbr{\mathbf{\Sigma}_{\mathbf{e}_{\mathcal{S}} \mathbf{x}^{\star}_{\mathcal{S}} }} }{2}$ in Eq. \eqref{eq:lemB2}, we can claim the following with high probability 
	\begin{align}
	\Lambda_{\text{min}} \nrbr{ \frac{1}{n} \mathbf{E}^{\intercal}_{{\mathcal{S}}} \mathbf{X}^{\star}_{\mathcal{S}} } &\geq \lambda_{\text{min}}\nrbr{\mathbf{\Sigma}_{\mathbf{e}_{\mathcal{S}} \mathbf{x}^{\star}_{\mathcal{S}} }} - \frac{\lambda_{\text{min}}\nrbr{\mathbf{\Sigma}_{\mathbf{e}_{\mathcal{S}} \mathbf{x}^{\star}_{\mathcal{S}} }} }{2} = \frac{\lambda_{\text{min}}\nrbr{\mathbf{\Sigma}_{\mathbf{e}_{\mathcal{S}} \mathbf{x}^{\star}_{\mathcal{S}} }} }{2} \label{eq:mineig_EsXs}
	\end{align}
	if $n = \Omega(k \log(p))$. Hence we claim 
	\begin{align*}
	\Lambda_{\text{min}} \left( \frac{1}{n}\mathbf{X}_{\mathcal{S}}^{\intercal} \mathbf{X}_{\mathcal{S}}  \right)
	&\geq \frac{C_{\text{min}}  + 2\lambda_{\text{min}}\nrbr{\mathbf{\Sigma}_{\mathbf{e}_{\mathcal{S}} \mathbf{x}^{\star}_{\mathcal{S}} }} + D_{\text{min}}}{2} > 0 
	\end{align*}
	with probability $1 - \mathcal{O}\left(\frac{1}{p}\right)$ if $n = \Omega\left( k \log(p)\right)$. 
	
\end{proof}

\begin{lemma}
	\label{lem:eig_V}
	The two non-zero eigenvalues of the matrix $\mathbf{V}$ defined in Eq \eqref{eq:V_def} are equal to $\frac{c_2}{n}$, where $c_2 = 128 r^2 \sigma^2  \max\limits_{j \in \mathcal{S}}\left\{ \mathbf{ \Sigma}_{jj} \right\} \sqrt{k \sum\limits_{i \in \mathcal{S}} \mathbf{ \Sigma}^2_{\mathbf{e}_{ii}}} $. The rest of the $2k-2$ eigenvalues are zero.  
\end{lemma}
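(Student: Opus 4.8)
The plan is to exploit two structural facts: that $\mathbf{V}_1$ from Eq.~\eqref{eq:V_def_c2} is rank one, and that $\mathbf{V}$ from Eq.~\eqref{eq:V_def} is block anti-diagonal, exactly the form handled in Lemma~\ref{lem:QandQ2}. First I would observe that every column of $\mathbf{V}_1$ is the \emph{same} vector: writing $\mathbf{u}$ for the vector whose $i$-th entry is $\tfrac{128 r^2 \sigma^2 \max_{j\in\mathcal{S}}\{\mathbf{\Sigma}_{jj}\}}{n}\,\mathbf{\Sigma_e}_{ii}$, we have $\mathbf{V}_1 = \mathbf{u}\,\mathbf{1}_k^\intercal$. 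Hence $\mathbf{V}_1$ has rank one and a single non-zero singular value $s = \norm{\mathbf{V}_1}_2 = \norm{\mathbf{u}}_2\,\norm{\mathbf{1}_k}_2$, with all other singular values equal to zero.

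Next I would relate the spectrum of $\mathbf{V}$ to the singular values of $\mathbf{V}_1$. Squaring the block anti-diagonal form of Eq.~\eqref{eq:V_def} gives the block-diagonal matrix $\mathbf{V}^2 = \text{diag}\!\left(\mathbf{V}_1\mathbf{V}_1^\intercal,\ \mathbf{V}_1^\intercal\mathbf{V}_1\right)$, exactly as in the proof of Lemma~\ref{lem:QandQ2}. The eigenvalues of $\mathbf{V}^2$ are therefore precisely the squared singular values of $\mathbf{V}_1$, each occurring in both diagonal blocks. Equivalently, for each right/left singular pair $(\mathbf{v},\mathbf{w})$ of $\mathbf{V}_1$ with singular value $s$, the vectors $(\mathbf{w},\pm\mathbf{v})^\intercal$ are eigenvectors of $\mathbf{V}$ with eigenvalues $\pm s$. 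Since $\mathbf{V}_1$ has rank one, there is exactly one non-zero squared singular value $s^2$, so $\mathbf{V}^2$ has exactly two non-zero eigenvalues, both equal to $s^2$, and $2k-2$ zero eigenvalues. Taking square roots, $\mathbf{V}$ has exactly two non-zero eigenvalues of magnitude $s$ and $2k-2$ zero eigenvalues.

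The remaining step is a short computation of $s$. Using $\mathbf{V}_1 = \mathbf{u}\,\mathbf{1}_k^\intercal$ we get $\norm{\mathbf{u}}_2 = \tfrac{128 r^2\sigma^2\max_{j\in\mathcal{S}}\{\mathbf{\Sigma}_{jj}\}}{n}\sqrt{\sum_{i\in\mathcal{S}}\mathbf{\Sigma}^2_{\mathbf{e}_{ii}}}$ and $\norm{\mathbf{1}_k}_2 = \sqrt{k}$, so $s = \tfrac{1}{n}\,128 r^2\sigma^2\max_{j\in\mathcal{S}}\{\mathbf{\Sigma}_{jj}\}\sqrt{k\sum_{i\in\mathcal{S}}\mathbf{\Sigma}^2_{\mathbf{e}_{ii}}} = \tfrac{c_2}{n}$, which is exactly the claimed magnitude. (One can equivalently read this off from $\mathbf{V}_1\mathbf{V}_1^\intercal = k\,\mathbf{u}\mathbf{u}^\intercal$, whose unique non-zero eigenvalue is $k\norm{\mathbf{u}}_2^2 = (c_2/n)^2$.)

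This argument is essentially routine linear algebra, so there is no serious obstacle; the only point where I would be careful is the \emph{sign} of the two non-zero eigenvalues. Because $\text{tr}(\mathbf{V})=0$ and only two eigenvalues are non-zero, they cannot both equal $+c_2/n$; they are in fact $+c_2/n$ and $-c_2/n$. This distinction is harmless for the way the lemma is invoked in the proof of Theorem~\ref{thm:B_2}, where only the magnitude enters: the two non-zero eigenvalues contribute $e^{\lambda^2 (c_2/n)/2}+e^{-\lambda^2 (c_2/n)/2}\le 2\,e^{\lambda^2 (c_2/n)/2}$ to the trace of the matrix MGF, while each of the $2k-2$ zero eigenvalues contributes $1$, which is precisely the ``first two columns'' reduction used there. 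I would therefore phrase the conclusion as ``the two non-zero eigenvalues have absolute value $c_2/n$,'' so that the statement is fully precise while still matching the bound as applied.
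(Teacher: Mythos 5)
Your proof is correct and follows essentially the same route as the paper's: both arguments square the block anti-diagonal matrix to obtain the block-diagonal $\mathrm{diag}\left(\mathbf{V}_1\mathbf{V}_1^{\intercal}, \mathbf{V}_1^{\intercal}\mathbf{V}_1\right)$ (the device of Lemma \ref{lem:QandQ2}), exploit the rank-one structure $\mathbf{V}_1 = \mathbf{u}\,\mathbf{1}_k^{\intercal}$ to identify the single non-zero eigenvalue $k\norm{\mathbf{u}}_2^2$ of each block, and then scale back to get $c_2/n$. Your added observation that the two non-zero eigenvalues of $\mathbf{V}$ are in fact $+c_2/n$ and $-c_2/n$ (forced by $\mathrm{tr}(\mathbf{V})=0$) is a valid sharpening of the lemma as stated---the paper only ever computes magnitudes through $\mathbf{V}^2$---and, as you correctly note, it leaves the trace-of-MGF bound in the proof of Theorem \ref{thm:B_2} unchanged.
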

\begin{proof}
	We leave the multiplicative factor $\frac{128r^2 \sigma^2 \max_{j \in [p]}\left\{ \mathbf{ \Sigma}_{jj} \right\}}{n}$ aside and focus on the matrix structure now. Let $a_i = \mathbf{ \Sigma_e}_{ii}$ for the ease of notation. Hence the transformed matrix $\mathbf{V}'_1$ has the following form: 
	\begin{align*}
	\mathbf{V}^{'}_1 = \begin{bmatrix}
	a_1  & a_1 & \ldots & a_1 \\
	a_2  & a_2 & \ldots & a_2 \\
	\vdots &\vdots & \ddots & \vdots \\
	a_k  & a_k & \ldots & a_k \\
	\end{bmatrix} 
	\end{align*}
	We use the idea used in Lemma \ref{lem:QandQ2} and compute the eigenvalues of $\mathbf{V}'^{2}$ instead of $\mathbf{V}'$ directly: 
	\begin{align*}
	\mathbf{V}^{'2} &= \begin{bmatrix}
	\mathbf{V}^{'}_1 \mathbf{V}_1^{' \intercal} & \mathbf{0}_{k \times k} \\
	\mathbf{0}_{k \times k} & \mathbf{V}_1^{' \intercal}  \mathbf{V}^{'}_1 \\ 
	\end{bmatrix} \\
	\mathbf{V}^{'}_1 \mathbf{V}_1^{'\intercal} &= k\begin{bmatrix}
	a^2_1 & a_1a_2& \ldots & a_1a_k\\ 
	a_1a_2 & a^2_2& \ldots & a_2a_k \\ 
	\ldots & \ldots &  \ddots& \vdots \\ 
	a_k a_1& a_k a_2 & \ldots & a^2_k\\ 
	\end{bmatrix}
	\end{align*} 
	To compute the eigenvalues of $\mathbf{V}'^2$, we focus on $\mathbf{V}'_1 \mathbf{V}_1^{'\intercal} $ and 
	$ \mathbf{V}_1^{'\intercal} \mathbf{V}'_1$ separately. To compute the eigenvalues of $\mathbf{V}'_1 \mathbf{V}_1^{' \intercal} $, we first determine its rank by using some elementary row operations: $R_i \rightarrow R_i - R_1 \frac{a_{i}}{a_{1}}$ for $i \in [2,3,\ldots,k]$. The resulting matrix becomes:
	\begin{align*}
	k\begin{bmatrix}
	a^2_1 & a_1a_2& \ldots & a_1a_k\\ 
	0 & 0 & \ldots & 0 \\
	\vdots & \vdots & \ddots & \vdots \\
	0 & 0 &  \ldots & 0 \\
	\end{bmatrix}
	\end{align*}
	Therefore, $\mathbf{V}^{'}_1 \mathbf{V}_1^{'\intercal} $ is a rank 1 matrix and hence the one non-zero eigenvalue can be computed using the trace of the matrix, which is $\sqrt{ k \sum\limits_{i \in \mathcal{S}} a^2_i}$. By using Lemma \ref{lem:QandQ2}, we can claim that the eigenvalues of $\mathbf{V}^{'}_1 \mathbf{V}_1^{' \intercal} $ and $ \mathbf{V}_1^{' \intercal} \mathbf{V}^{'}_1$ are the same, and hence the two non-zero eigenvalues of $\mathbf{V}^{'}$ can be derived as:
	\begin{align*}
	\lambda(\mathbf{V}^{'}) = \sqrt{ k \sum\limits_{i \in \mathcal{S}} a^2_i} = \sqrt{k \sum\limits_{i \in \mathcal{S}} \mathbf{ \Sigma}^2_{\mathbf{e}_{ii}}} 
	\end{align*}
	Accounting for the scaling factor that was kept aside in the first step: 
	\begin{align*}
	\lambda(\mathbf{V}) = \frac{128 r^2 \sigma^2  \max\limits_{j \in \mathcal{S}}\left\{ \mathbf{ \Sigma}_{jj} \right\}}{n} \sqrt{k \sum\limits_{i \in \mathcal{S}} \mathbf{ \Sigma}^2_{\mathbf{e}_{ii}}} 
	\end{align*}
\end{proof}

\begin{lemma}
	\label{lem:indinf_subexp}
	Let each entry of $\mathbf{X}  \in \mathbb{R}^{k_1 \times k_2}$ be sub-exponentially distributed, denoted by $SE(\nu, \alpha)$, then for any $0\leq\delta \leq k_2 \frac{\nu^2}{\alpha}$.
	\begin{align*}
	\P\sqbr{\indinf{X - \mathbb{E}[\mathbf{X}]} > \delta} \leq 2k_1 k_2  \exp\crbr{-\frac{\delta^2}{2k_2^2 \nu^2}}.
	\end{align*} 
\end{lemma}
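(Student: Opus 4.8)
The plan is to reduce the matrix statement to a scalar sub-exponential tail bound through two nested applications of the union bound. First I would unpack the induced $\ell_\infty$ norm using the definition given in the notation section: for the matrix $\mathbf{X} - \E[\mathbf{X}]$, we have $\indinf{\mathbf{X} - \E[\mathbf{X}]} = \max_{i \in [k_1]} \sum_{j=1}^{k_2} |X_{ij} - \E[X_{ij}]|$, i.e. the maximum over rows of the $\ell_1$ norm of that row. Consequently the event $\{\indinf{\mathbf{X} - \E[\mathbf{X}]} > \delta\}$ is exactly the union over the $k_1$ rows of the events $\{\sum_{j=1}^{k_2} |X_{ij} - \E[X_{ij}]| > \delta\}$, so a first union bound reduces the task to controlling one fixed row sum, at the cost of a factor $k_1$.

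Next I would peel the $\ell_1$ sum apart inside a fixed row. The elementary observation is that if every centered entry $|X_{ij} - \E[X_{ij}]|$ in that row were at most $\delta/k_2$, then the row sum would be at most $\delta$; contrapositively $\{\sum_{j=1}^{k_2} |X_{ij} - \E[X_{ij}]| > \delta\} \subseteq \bigcup_{j=1}^{k_2} \{|X_{ij} - \E[X_{ij}]| > \delta/k_2\}$. A second union bound over the $k_2$ columns then leaves only the problem of bounding, for a single centered sub-exponential entry $Z = X_{ij} - \E[X_{ij}] \sim SE(\nu,\alpha)$, the two-sided tail $\P[|Z| > \delta/k_2]$, contributing an additional factor $k_2$.

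Finally I would invoke the standard sub-exponential concentration bound $\P[|Z| > t] \leq 2\exp\{-\tfrac{1}{2}\min(t^2/\nu^2,\, t/\alpha)\}$. Setting $t = \delta/k_2$, the hypothesis $0 \leq \delta \leq k_2 \nu^2/\alpha$ guarantees $t \leq \nu^2/\alpha$, which forces $t^2/\nu^2 \leq t/\alpha$, so the minimum is attained by the quadratic (sub-Gaussian) branch. This yields $\P[|Z| > \delta/k_2] \leq 2\exp\{-\delta^2/(2 k_2^2 \nu^2)\}$. Multiplying by the $k_1 k_2$ terms produced by the two union bounds gives precisely the claimed bound $2 k_1 k_2 \exp\{-\delta^2/(2 k_2^2 \nu^2)\}$.

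The only genuinely delicate point is the bookkeeping of the range restriction: the threshold split $\delta/k_2$ must be chosen so that, after the column-wise union bound, the per-entry tail lands in the sub-Gaussian branch of the sub-exponential inequality, and the constraint $\delta \leq k_2 \nu^2/\alpha$ is exactly what secures this. Everything else is routine union-bound accounting, so I do not expect any substantive obstacle beyond tracking this regime condition carefully.
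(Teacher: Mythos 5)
Your proposal is correct and follows essentially the same route as the paper's proof: both reduce the event $\crbr{\indinf{\mathbf{X} - \E[\mathbf{X}]} > \delta}$ to the union over all $k_1 k_2$ entries of the per-entry events $\crbr{|\mathbf{X}_{ij} - \E[\mathbf{X}_{ij}]| > \delta/k_2}$ and then apply the scalar sub-exponential tail bound, with the condition $\delta \leq k_2 \nu^2/\alpha$ placing the tail in the sub-Gaussian branch. Your row-wise decomposition plus pigeonhole is just a slightly more explicit rendering of the paper's one-line norm inequality $\indinf{\mathbf{A}} \leq k_2 \norm{\mathbf{A}}_{\infty}$ (and, incidentally, your phrasing of the union as an existential event is cleaner than the paper's, which misprints it as a universal quantifier).
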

\begin{proof}
	We start with the use of basic norm inequalities and further use a union bound. 
	\begin{align*}
	\P\sqbr{\indinf{X - \mathbb{E}[\mathbf{X}]} > \delta} &\leq \P\sqbr{k_2\norm{\mathbf{X} - \mathbb{E}[\mathbf{X}]}_{\infty} > \delta} \\
	&\leq \P\sqbr{(\forall i \in [k_1], j \in [k_2]) \quad  |\mathbf{X}_{ij} - \mathbb{E}[\mathbf{X}_{ij}]| > \frac{\delta}{k_2}}  \\
	& \leq k_1 k_2 \P\sqbr{ |\mathbf{X}_{ij} - \mathbb{E}[\mathbf{X}_{ij}]| > \frac{\delta}{k_2}}  \\
	& \leq 2k_1 k_2  \exp\crbr{-\frac{\delta^2}{2k_2^2 \nu^2}}
	\end{align*}
	for $0\leq\delta \leq k_2 \frac{\nu^2}{\alpha}$, where we have used sub-exponential tail bounds in the last step. 
\end{proof}

\begin{lemma}
	\label{lem:XsEsw}
	If $\lambda = \lambda_3 \geq  \frac{16}{\gamma} \nrbr{1-\frac{3\gamma}{4}}\max \crbr{\norm{\mathbf{\Sigma}_{\mathbf{x}^{\star}_{\mathcal{S}} \mathbf{e}_{\mathcal{S}}} \mathbf{w}^{\star}_{\mathcal{S}} + \mathbf{\Sigma}_{\mathbf{e}_{\mathcal{S}} \mathbf{e}_{\mathcal{S}}} \mathbf{w}^{\star}_{\mathcal{S}} }_{\infty}, q_3 \sqrt{\frac{4 \log(p)}{n}}}$, then 
	\begin{align}
	\P\sqbr{\norm{\frac{\mathbf{X}_{\mathcal{S}}^{\intercal}  \mathbf{E_{x_{\mathcal{S}}}} \mathbf{w}^{\star}_{\mathcal{S}}}{n\lambda}}_{\infty} \leq \frac{\gamma}{8}  \frac{1}{\nrbr{1-\frac{3\gamma}{4}}}  } \geq 1 -\mathcal{O}\nrbr{\frac{1}{p}}
	\end{align}
	where $q_3  = r\sqrt{ \mathbf{w}^{\star\intercal}_{\mathcal{S}} \mathbf{\Sigma}_{\mathbf{e}_{\mathcal{S} \mathcal{S}}} \mathbf{w}^{\star}_{\mathcal{S}}}   \max\limits_{i \in \mathcal{S}} \nrbr{\sigma \sqrt{\mathbf{ \Sigma}}_{ii} + r \sqrt{\mathbf{ \Sigma_e}}_{ii}} $.
\end{lemma}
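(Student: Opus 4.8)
The plan is to reuse the argument of Lemma \ref{lem:XscEsw} almost verbatim, since the only structural change is that the outer factor is now $\mathbf{X}_{\mathcal{S}}$ rather than $\mathbf{X}_{\mathcal{S}^c}$, while the target accuracy carries an extra factor $\nrbr{1-\tfrac{3\gamma}{4}}^{-1}$ to absorb the weight $\nrbr{1-\tfrac{3\gamma}{4}}$ that multiplies this term in Eq. \eqref{eq:Zsct2_decom}. First I would define the $k$-dimensional random vector $\mathbf{t}_3 = \tfrac{1}{n}\mathbf{X}_{\mathcal{S}}^{\intercal}\mathbf{E_{x_{\mathcal{S}}}}\mathbf{w}^{\star}_{\mathcal{S}}$ and write each coordinate $\mathbf{t}_{3i}$ for $i\in\mathcal{S}$ as the empirical average over $n$ samples of the product $\mathbf{x}^{(j)}_i\big(\sum_{l\in\mathcal{S}}\mathbf{e}^{(j)}_{\mathbf{x}_l}\mathbf{w}^{\star}_l\big)$. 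Splitting $\mathbf{x}=\mathbf{x}^{\star}+\mathbf{e}_{\mathbf{x}}$ and taking expectations gives
\begin{align*}
\E\sqbr{\mathbf{t}_{3i}} = \mathbf{\Sigma}_{\mathbf{x}^{\star}_i \mathbf{e}_{\mathcal{S}}}\mathbf{w}^{\star}_{\mathcal{S}} + \mathbf{\Sigma}_{\mathbf{e}_i \mathbf{e}_{\mathcal{S}}}\mathbf{w}^{\star}_{\mathcal{S}},
\end{align*}
which is precisely the mean term appearing inside the $\max$ in the hypothesis on $\lambda_3$.

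For the fluctuation, I would note that $\tfrac{\mathbf{x}^{(j)}_i}{\sigma\sqrt{\mathbf{\Sigma}}_{ii}+r\sqrt{\mathbf{\Sigma_e}}_{ii}}$ and $\tfrac{\sum_{l\in\mathcal{S}}\mathbf{e}^{(j)}_{\mathbf{x}_l}\mathbf{w}^{\star}_l}{\sigma_t}$, with $\sigma_t=r\sqrt{\mathbf{w}^{\star\intercal}_{\mathcal{S}}\mathbf{\Sigma}_{\mathbf{e}_{\mathcal{S}\mathcal{S}}}\mathbf{w}^{\star}_{\mathcal{S}}}$, are zero-mean sub-Gaussian with unit variance proxy, so by Lemma \ref{lem:prod_subG} their product is $SE(8\sqrt{2},4)$ and the rescaled sample mean is $SE\big(\tfrac{8\sqrt{2}c_i}{\lambda\sqrt{n}},\tfrac{4c_i}{\lambda n}\big)$ with $c_i=\nrbr{\sigma\sqrt{\mathbf{\Sigma}}_{ii}+r\sqrt{\mathbf{\Sigma_e}}_{ii}}\sigma_t$. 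A sub-exponential tail bound together with a union bound over the $k$ coordinates of $\mathcal{S}$ (for which the $\log p$ factor is more than enough, and could be tightened to $\log k$) yields $\tfrac{1}{\lambda}\norm{\mathbf{t}_3-\E[\mathbf{t}_3]}_{\infty}\le\tfrac{\gamma}{16}\nrbr{1-\tfrac{3\gamma}{4}}^{-1}$ with probability $1-\mathcal{O}(1/p)$, provided $\lambda\ge\tfrac{16}{\gamma}\nrbr{1-\tfrac{3\gamma}{4}}q_3\sqrt{4\log(p)/n}$, where $q_3=\max_{i\in\mathcal{S}}c_i$. The admissibility range $0<\delta\lambda\le 32q_3$ required by the tail bound must be verified, exactly as in Lemma \ref{lem:XscEsw}.

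Finally I would combine the two pieces by the triangle inequality $\tfrac{\norm{\mathbf{t}_3}_{\infty}}{\lambda}\le\tfrac{1}{\lambda}\norm{\mathbf{t}_3-\E[\mathbf{t}_3]}_{\infty}+\tfrac{\norm{\E[\mathbf{t}_3]}_{\infty}}{\lambda}$, bounding the second summand by $\tfrac{\gamma}{16}\nrbr{1-\tfrac{3\gamma}{4}}^{-1}$ via the requirement $\lambda\ge\tfrac{16}{\gamma}\nrbr{1-\tfrac{3\gamma}{4}}\norm{\mathbf{\Sigma}_{\mathbf{x}^{\star}_{\mathcal{S}}\mathbf{e}_{\mathcal{S}}}\mathbf{w}^{\star}_{\mathcal{S}}+\mathbf{\Sigma}_{\mathbf{e}_{\mathcal{S}}\mathbf{e}_{\mathcal{S}}}\mathbf{w}^{\star}_{\mathcal{S}}}_{\infty}$. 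The two contributions then sum to $\tfrac{\gamma}{8}\nrbr{1-\tfrac{3\gamma}{4}}^{-1}$, which is the claimed bound, and taking the maximum of the two lower bounds on $\lambda$ recovers exactly the hypothesis of the lemma. I do not anticipate a genuine obstacle: the proof is a routine adaptation of Lemma \ref{lem:XscEsw}, and the only care-points are (i) taking the sub-exponential scale $q_3$ as a maximum over $\mathcal{S}$ rather than $\mathcal{S}^c$, and (ii) propagating the weight $\nrbr{1-\tfrac{3\gamma}{4}}$ consistently so that the \emph{weighted} third term of Eq. \eqref{eq:Zsct2_decom} again contributes exactly $\tfrac{\gamma}{8}$.
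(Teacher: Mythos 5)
Your proposal is correct and takes essentially the same route as the paper: the paper's own proof simply states that the argument of Lemma \ref{lem:XscEsw} carries over, with the union bound taken over the $k$ coordinates of $\mathcal{S}$ (so $q_3$ is the maximum of $c_i$ over $\mathcal{S}$ rather than $\mathcal{S}^c$) and with the substitution $\delta = \frac{\gamma}{16}\nrbr{1-\frac{3\gamma}{4}}^{-1}$ in the tail bound, which is precisely where your factor $\nrbr{1-\frac{3\gamma}{4}}$ enters both the hypothesis on $\lambda_3$ and the final tolerance. The two modifications you flag as care-points are exactly the only changes the paper makes.
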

\begin{proof}
	The proof of this lemma is analogous to proof of Lemma \ref{lem:XsEsw}. We need to take union bound over $k$ terms only as we are working with $\mathcal{S}$. Also, we substitute $\delta = \frac{\gamma}{16} \frac{1}{\nrbr{1-\frac{3\gamma}{4}}}$ in Eq. \eqref{eq:lambda_b1subG}, which is the reason we see the scaling factor of $\nrbr{1-\frac{3\gamma}{4}}$.  
\end{proof}

\begin{lemma}
	\label{lem:sampleHess_psd_Xstar}
	If assumption \ref{assum:Sigma_Xpsd} holds, then for some $0\leq \delta <1$
	\begin{align}
	\P \left[ \Lambda_{\text{min}} \left( \frac{1}{n}\mathbf{X}_{\mathcal{S}}^{\star \intercal} \mathbf{X}_{\mathcal{S}}^{\star} \right) \leq \left( 1 - \delta \right) C_{\text{min}} \right] &\leq  2 \exp \left\{ -c_1 C^2_{\text{min}} \delta^2n  + k\right\} \label{eq:mineig_Xstar}\\
	\text{or equivalently}\quad \P \left[ \lmod \left( \frac{1}{n}\mathbf{X}_{\mathcal{S}}^{\star \intercal} \mathbf{X}_{\mathcal{S}}^{\star} \right)^{-1} \rmod_{2} \geq \frac{1}{\left( 1 - \delta \right) C_{\text{min}} }\right] &\leq  2 \exp \left\{ -c_1 C^2_{\text{min}} \delta^2n  + k\right\} \label{eq:Ainv_2}\\
	\text{and independently,}\quad \P \left[ \lambda_{\text{min}}\left( \frac{1}{n} \mathbf{E}^{\intercal}_{{\mathcal{S}}} \mathbf{E}_{{\mathcal{S}}}\right) \leq \left( 1 - \delta \right) D_{\text{min}} \right] &\leq  2 \exp \left\{ -c_2 D^2_{\text{min}} \delta^2n  + k\right\} \label{eq:mineig_Ex}
	\end{align}
	where $c_1$, $c_2$ are some positive constants. If $n = \Omega\left( k + \log(p)\right)$, the probability bound $1-2 \exp \left\{ -c_1 C^2_{\text{min}} \delta^2n  + k\right\} $ and $1-2 \exp \left\{ -c_1 D^2_{\text{min}} \delta^2n  + k\right\} $ simplify to $1  - \mathcal{O}\left( \frac{1}{p}\right)$.
\end{lemma}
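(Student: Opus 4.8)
The plan is to prove the lower-tail bound \eqref{eq:mineig_Xstar} by a discretization ($\epsilon$-net) argument on the unit sphere, read off \eqref{eq:Ainv_2} as an immediate consequence, and then note that \eqref{eq:mineig_Ex} follows by repeating the identical steps with the perturbation matrix in place of $\mathbf{X}^{\star}_{\mathcal{S}}$. First I would write $\widehat{\mathbf{\Sigma}} = \frac{1}{n}\mathbf{X}^{\star\intercal}_{\mathcal{S}}\mathbf{X}^{\star}_{\mathcal{S}}$, with population counterpart $\mathbf{\Sigma}_{\mathbf{x}^{\star}_{\mathcal{S}}\mathbf{x}^{\star}_{\mathcal{S}}}$, and work with the quadratic form in a fixed unit direction. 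For $\norm{\mathbf{v}}_2 = 1$, the scalar $\mathbf{v}^{\intercal}\widehat{\mathbf{\Sigma}}\mathbf{v} = \frac{1}{n}\sum_{j=1}^{n}(\mathbf{v}^{\intercal}\mathbf{x}^{\star(j)}_{\mathcal{S}})^2$ is an average of $n$ i.i.d.\ terms; since each row $\mathbf{x}^{\star(j)}_{\mathcal{S}}$ is zero-mean sub-Gaussian, $\mathbf{v}^{\intercal}\mathbf{x}^{\star(j)}_{\mathcal{S}}$ is sub-Gaussian with $\E[(\mathbf{v}^{\intercal}\mathbf{x}^{\star(j)}_{\mathcal{S}})^2] = \mathbf{v}^{\intercal}\mathbf{\Sigma}_{\mathbf{x}^{\star}_{\mathcal{S}}\mathbf{x}^{\star}_{\mathcal{S}}}\mathbf{v} \ge C_{\text{min}}$ by Assumption \ref{assum:Sigma_Xpsd}, so its square is sub-exponential. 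A two-sided Bernstein bound then gives, for $0 \le \delta < 1$ (keeping us in the sub-Gaussian regime of the tail),
\begin{align*}
\P\sqbr{\,\lmod \mathbf{v}^{\intercal}\nrbr{\widehat{\mathbf{\Sigma}} - \mathbf{\Sigma}_{\mathbf{x}^{\star}_{\mathcal{S}}\mathbf{x}^{\star}_{\mathcal{S}}}}\mathbf{v} \rmod \ge \tfrac{1}{2}\delta C_{\text{min}}\,} \le 2\exp\crbr{-c\,\delta^2 C^2_{\text{min}}\, n},
\end{align*}
where the variance-proxy constant $\sigma^2$ and the spectrum of $\mathbf{\Sigma}$ are absorbed into $c$.

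Next I would pass from a single direction to the whole sphere. Let $\mathcal{N}$ be a $\tfrac14$-net of $\mathbb{S}^{k-1}$; a standard volumetric bound gives $|\mathcal{N}| \le 9^{k}$, so $\log|\mathcal{N}| \le k\log 9$, which is exactly the source of the additive $k$ in the exponent. The usual net-comparison lemma controls the operator norm by the maximal quadratic-form deviation over the net, $\norm{\widehat{\mathbf{\Sigma}} - \mathbf{\Sigma}_{\mathbf{x}^{\star}_{\mathcal{S}}\mathbf{x}^{\star}_{\mathcal{S}}}}_2 \le 2\max_{\mathbf{v}\in\mathcal{N}}\lmod\mathbf{v}^{\intercal}(\widehat{\mathbf{\Sigma}} - \mathbf{\Sigma}_{\mathbf{x}^{\star}_{\mathcal{S}}\mathbf{x}^{\star}_{\mathcal{S}}})\mathbf{v}\rmod$, and by Weyl's inequality $\Lambda_{\text{min}}(\widehat{\mathbf{\Sigma}}) \ge C_{\text{min}} - \norm{\widehat{\mathbf{\Sigma}} - \mathbf{\Sigma}_{\mathbf{x}^{\star}_{\mathcal{S}}\mathbf{x}^{\star}_{\mathcal{S}}}}_2$. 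Taking a union bound of the per-direction estimate over the $\le 9^k$ net points thus forces $\norm{\widehat{\mathbf{\Sigma}} - \mathbf{\Sigma}_{\mathbf{x}^{\star}_{\mathcal{S}}\mathbf{x}^{\star}_{\mathcal{S}}}}_2 \le \delta C_{\text{min}}$, hence $\Lambda_{\text{min}}(\widehat{\mathbf{\Sigma}}) \ge (1-\delta)C_{\text{min}}$, with failure probability at most $2\exp\{-c_1\delta^2 C^2_{\text{min}} n + k\}$; the $e^{+k}$ is precisely the union-bound penalty. The equivalent form \eqref{eq:Ainv_2} is then immediate, since on this event $\widehat{\mathbf{\Sigma}}$ is symmetric positive definite with $\norm{\widehat{\mathbf{\Sigma}}^{-1}}_2 = 1/\Lambda_{\text{min}}(\widehat{\mathbf{\Sigma}})$, so the two events coincide.

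The bound \eqref{eq:mineig_Ex} for $\frac{1}{n}\mathbf{E}^{\intercal}_{\mathcal{S}}\mathbf{E}_{\mathcal{S}}$ follows by the same argument: the rows of $\mathbf{E}_{\mathcal{S}}$ are independent across samples (the model assumes $\mathbf{e}^{(i)}_{\mathbf{x}}$ is independent of the $j$-th sample for $i\ne j$), are zero-mean sub-Gaussian with proxy $r$ by Assumption \ref{assum:ex_subG}, and have covariance $\mathbf{\Sigma}_{\mathbf{e}_{\mathcal{S}}\mathbf{e}_{\mathcal{S}}}$ with smallest eigenvalue $D_{\text{min}}$; substituting these for $\sigma$, $\mathbf{\Sigma}_{\mathbf{x}^{\star}_{\mathcal{S}}\mathbf{x}^{\star}_{\mathcal{S}}}$, and $C_{\text{min}}$ reproduces the statement with a constant $c_2$. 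The word ``independently'' in the claim refers to these being two separate self-contained applications of the same argument; no joint independence of $\mathbf{X}^{\star}_{\mathcal{S}}$ and $\mathbf{E}_{\mathcal{S}}$ is needed. Finally, under $n = \Omega(k + \log p)$ the exponent $-c_1 C^2_{\text{min}}\delta^2 n + k$ is at most $-\Omega(\log p)$ once $n$ exceeds a suitable constant multiple of $k+\log p$, collapsing both bounds to $1 - \mathcal{O}(1/p)$. The main obstacle I anticipate is the discretization step: carefully tracking how the $\tfrac14$-net error feeds into the $\Lambda_{\text{min}}$ deviation through Weyl's inequality, and verifying that the sub-exponential Bernstein constants fold cleanly into a single $c_1$ (respectively $c_2$) without disturbing the $\delta^2 C^2_{\text{min}} n$ scaling; the remaining computations are routine.
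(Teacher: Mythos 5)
Your proposal is correct and takes essentially the same route as the paper: both arguments reduce \eqref{eq:mineig_Xstar} to spectral-norm concentration of the sample covariance via Weyl's inequality ($\Lambda_{\text{min}}(\mathbf{A}) \geq C_{\text{min}} - \norm{\mathbf{A} - \mathbf{\Sigma}_{\mathcal{S}\mathcal{S}}}_2$), read off \eqref{eq:Ainv_2} by inverting on that event, and obtain \eqref{eq:mineig_Ex} by rerunning the identical argument on $\mathbf{E}_{\mathcal{S}}$. The only difference is that the paper imports the concentration bound $\P\left[\norm{\mathbf{A} - \mathbf{\Sigma}_{\mathcal{S}\mathcal{S}}}_2 \geq \epsilon\right] \leq 2\exp\left\{-c\epsilon^2 n + k\right\}$ wholesale from Proposition 2.1 of Vershynin (2012), whereas you re-derive precisely that statement by its standard proof (per-direction sub-exponential Bernstein bound, $\tfrac14$-net of $\mathbb{S}^{k-1}$, union bound giving the additive $k$ in the exponent), so your write-up is a self-contained version of the same argument.
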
 
\begin{proof}
	Let $\mathbf{ A} = \frac{1}{n}\mathbf{X}_{\mathcal{S}}^{\star \intercal} \mathbf{X}_{\mathcal{S}}^{\star}$. To derive an upper bound on the maximum eigenvalue of $\mathbf{A}^{-1}$, we derive a lower bound on the minimum eigenvalue of $\mathbf{ A}$: 
	\begin{align*}
	\Lambda_{\text{min}} \left( \mathbf{ A}\right) &= \Lambda_{\text{min}} \left( \mathbf{ A} - \mathbf{ \Sigma}_{\mathcal{S} \mathcal{S}} + \mathbf{ \Sigma}_{\mathcal{S} \mathcal{S}} \right)  \\ 
	&\geq  \Lambda_{\text{min}}  \left(  \mathbf{ \Sigma}_{\mathcal{S} \mathcal{S}}  \right) -  \max \left( \Lambda_{\text{max}}  \left( \mathbf{A} - \mathbf{ \Sigma}_{\mathcal{S} \mathcal{S}} \right), - \Lambda_{\text{min}}  \left( \mathbf{A} - \mathbf{ \Sigma}_{\mathcal{S} \mathcal{S}} \right)\right) \\
	& = C_{\text{min}} - \lmod \mathbf{A} - \mathbf{ \Sigma}_{\mathcal{S} \mathcal{S}}  \rmod_{2}
	\end{align*} 
	Using Proposition 2.1 of \cite{vershynin2012close}, we can bound $\lmod \mathbf{A} - \mathbf{ \Sigma}_{\mathcal{S} \mathcal{S}}  \rmod_{2}$ as follows: 
	\begin{align}
	\mathbb{P} \left[ \lmod \mathbf{A} - \mathbf{ \Sigma}_{\mathcal{S} \mathcal{S}}  \rmod_{2} \geq \epsilon \right] \leq 2 \exp \left\{ -c\epsilon^2 n + k\right\} \label{eq:eig_SS}
	\end{align}
	where $c$ is a constant. Substituting $\epsilon = \delta C_{\text{min}} $ in the above equation, we get
	\begin{align*}
	\mathbb{P} \left[ \lmod \mathbf{A} - \mathbf{ \Sigma}_{\mathcal{S} \mathcal{S}}  \rmod_{2} \geq  \delta C_{\text{min}}  \right] \leq 2 \exp \left\{ -c  \delta^2 C^2_{\text{min}} n + k\right\} 
	\end{align*}
	Hence,  we can claim $\Lambda_{\text{min}} \left( \mathbf{ A}\right) \geq  \left( 1 - \delta \right) C_{\text{min}} $ with probability $1  -  2 \exp \left\{ -c C^2_{\text{min}} \delta^2n  + k\right\} $. If $n > C(k + \log(p))$, then we claim  $\Lambda_{\text{min}} \left( \mathbf{ A}\right) \geq \frac{C_{\text{min}} }{2}$ with probability $1  - \mathcal{O}\left( \frac{1}{p}\right)$. Therefore $\lmod \mathbf{ A}^{-1}\rmod_{2} \leq \frac{2}{C_{\text{min}}} $. 
	
	The bound on $\lambda_{\text{min}}\left( \frac{1}{n} \mathbf{E}^{\intercal}_{{\mathcal{S}}} \mathbf{E}_{{\mathcal{S}}}\right)$ can be proved using the same approach.
\end{proof}

\begin{lemma}
	\label{lem:sign_p}
	For any $a, b \in \mathbb{R}$, fix $\epsilon > 0$. If we have $|a-b| \leq \epsilon \wedge |b| > 2 \epsilon$, then $\text{sign}(a) = \text{sign}(b)$
\end{lemma}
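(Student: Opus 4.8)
The plan is to argue by a simple case analysis on the sign of $b$, using the hypothesis $|b| > 2\epsilon$ to guarantee $b \neq 0$ (so that $\text{sign}(b)$ is unambiguous) and to force $a$ strictly away from zero. The key observation is that the constraint $|a - b| \le \epsilon$ confines $a$ to the interval $[b - \epsilon, b + \epsilon]$, and since $|b|$ exceeds $2\epsilon$ this entire interval lies strictly on one side of $0$. Thus I would first split into the cases $b > 0$ and $b < 0$, which are exhaustive because $|b| > 2\epsilon > 0$ rules out $b = 0$.

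In the case $b > 0$, the hypothesis gives $b > 2\epsilon$. From $|a - b| \le \epsilon$ I would extract the lower bound $a \ge b - \epsilon$, and then chain inequalities: $a \ge b - \epsilon > 2\epsilon - \epsilon = \epsilon > 0$. Hence $a > 0$, so $\text{sign}(a) = 1 = \text{sign}(b)$. The case $b < 0$ is the mirror image: here $b < -2\epsilon$, the bound $a \le b + \epsilon$ yields $a \le b + \epsilon < -2\epsilon + \epsilon = -\epsilon < 0$, and therefore $\text{sign}(a) = -1 = \text{sign}(b)$. Combining the two cases completes the argument.

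There is no real obstacle in this lemma; it is an elementary real-analysis fact, and the only point requiring minor care is confirming that both $a$ and $b$ are nonzero so that the sign function is well defined on them — which is exactly what the strict inequalities $a > \epsilon$ (resp. $a < -\epsilon$) and $|b| > 2\epsilon$ deliver. I would write the proof directly in the two-case form above rather than seeking any clever unified phrasing, since the symmetry under $(a,b) \mapsto (-a,-b)$ makes the second case immediate once the first is established.
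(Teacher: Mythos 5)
Your proof is correct and follows essentially the same approach as the paper: a two-case split on the sign of $b$, using $|a-b|\leq\epsilon$ to force $a \geq b - \epsilon > \epsilon$ when $b > 2\epsilon$, and symmetrically $a \leq b+\epsilon < -\epsilon$ when $b < -2\epsilon$. Your write-up is in fact slightly more careful than the paper's, as it makes the strictness of the inequalities and the well-definedness of the sign function explicit.
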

\begin{proof}
	Consider the two cases for $|b| > 2 \epsilon$
	
	Case 1: if $b > 2 \epsilon$ and $|a-b| \leq \epsilon$, then $a \geq \epsilon$. This implies $a$ and $b$ are both positive and have the same sign. 
	
	Case 2: if $b < -2 \epsilon$ and $|a-b| \leq \epsilon$, then $a \leq -\epsilon$. This implies $a$ and $b$ are both negative and have the same sign. 
\end{proof}

\begin{lemma}
	\label{lem:sum_subG}
	Let $X \sim SG(0, \sigma_x)$ and $Y \sim SG(0, \sigma_y)$, then 
	\begin{enumerate}
		\item $X + Y \sim SG(0, \nrbr{\sigma^2_x + \sigma^2_y}^{1/2})$  if $X$ and $Y$ are mutually independent. 
		\item $X + Y \sim SG(0, (\sigma_x + \sigma_y))$  if $X$ and $Y$ are dependent.
	\end{enumerate}
	where $SG(\mu, \sigma_z)$ denotes a sub-Gaussian distribution with mean $\mu$ and parameter $\sigma_z$.
\end{lemma}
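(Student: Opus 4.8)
The plan is to work directly from the moment-generating-function (MGF) characterization of a sub-Gaussian variable: $Z \sim SG(0,s)$ means $\E[e^{\lambda Z}] \leq \exp(\lambda^2 s^2/2)$ for every $\lambda \in \mathbb{R}$, so that $s^2$ is the variance proxy. Both claims then reduce to bounding $\E[e^{\lambda(X+Y)}]$ and reading off the resulting proxy. For the independent case I would simply factor the MGF: since $X$ and $Y$ are independent, $\E[e^{\lambda(X+Y)}] = \E[e^{\lambda X}]\,\E[e^{\lambda Y}] \leq \exp(\lambda^2\sigma_x^2/2)\exp(\lambda^2\sigma_y^2/2) = \exp(\lambda^2(\sigma_x^2+\sigma_y^2)/2)$, valid for all $\lambda$. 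This is exactly the sub-Gaussian bound with variance proxy $\sigma_x^2 + \sigma_y^2$, hence $X+Y \sim SG(0, (\sigma_x^2+\sigma_y^2)^{1/2})$.

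For the dependent case the MGF no longer factors, so I would replace independence with H\"older's inequality. For conjugate exponents $p,q>1$ with $1/p + 1/q = 1$, H\"older gives $\E[e^{\lambda X} e^{\lambda Y}] \leq \nrbr{\E[e^{p\lambda X}]}^{1/p}\nrbr{\E[e^{q\lambda Y}]}^{1/q}$, and applying the two sub-Gaussian bounds to the right-hand side yields $\E[e^{\lambda(X+Y)}] \leq \exp\nrbr{\lambda^2 (p\sigma_x^2 + q\sigma_y^2)/2}$, again for all $\lambda$. The remaining step is to optimize the free exponent: minimize $p\sigma_x^2 + q\sigma_y^2$ subject to $1/p + 1/q = 1$. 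Writing $u = 1/p$ and $1-u = 1/q$ and minimizing $\sigma_x^2/u + \sigma_y^2/(1-u)$ over $u \in (0,1)$, the first-order condition gives $u = \sigma_x/(\sigma_x+\sigma_y)$, at which the objective equals $(\sigma_x+\sigma_y)^2$. This produces $\E[e^{\lambda(X+Y)}] \leq \exp\nrbr{\lambda^2(\sigma_x+\sigma_y)^2/2}$, i.e. $X+Y \sim SG(0, \sigma_x+\sigma_y)$.

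The only genuinely delicate point is this exponent optimization in part~2: one must verify that the optimal pair lies in the admissible range $p,q>1$ (it does, since the minimizing $u$ lies strictly in $(0,1)$ whenever $\sigma_x,\sigma_y>0$, giving $p = (\sigma_x+\sigma_y)/\sigma_x>1$ and symmetrically for $q$), and that the bound holds uniformly in $\lambda$ so that it qualifies as a sub-Gaussian certificate rather than a pointwise estimate. The degenerate cases $\sigma_x=0$ or $\sigma_y=0$ are handled separately and trivially, since a zero-proxy zero-mean variable is almost surely $0$. Part~1 is immediate once independence is invoked, so the entire proof hinges on the single convex minimization above. I would close by remarking that the two bounds are consistent with the inequality $\sigma_x^2+\sigma_y^2 \leq (\sigma_x+\sigma_y)^2$, reflecting the intuition that dropping independence can only inflate the variance proxy.
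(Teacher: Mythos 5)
Your proof is correct and follows essentially the same route as the paper's: the independent case by factoring the moment generating function, and the dependent case by H\"older's inequality with conjugate exponents followed by optimizing the exponent, yielding the same optimal choice $p = 1 + \sigma_y/\sigma_x$ and the bound $\exp\nrbr{\lambda^2(\sigma_x+\sigma_y)^2/2}$. Your reparametrization $u = 1/p$ and explicit handling of the degenerate cases $\sigma_x = 0$ or $\sigma_y = 0$ are only cosmetic differences from the paper's argument.
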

\begin{proof}
	We start with the easier case of $X$ and $Y$ being independent. We compute the moment generating function for $X + Y$: 
	\begin{align*}
	\E\sqbr{e^{\lambda (X + Y)}} = \E\sqbr{ e^{\lambda X} e^{\lambda Y} } 
	&= \E\sqbr{ e^{\lambda X}  }\E\sqbr{ e^{\lambda Y} }\\
	&\leq \exp\nrbr{\frac{\lambda^2 \sigma^2_x}{2}} \exp\nrbr{\frac{\lambda^2\sigma^2_y}{2}} = \exp\nrbr{ \frac{\lambda^2(\sigma^2_x + \sigma^2_y)}{2}} 
	\end{align*}
	which completes the proof for mutually independent random variables $X$ and $Y$. 
	
	Further proceeding to the general case and writing the moment generating function: 
	\begin{align}
	\E\sqbr{e^{\lambda (X + Y)}} = \E\sqbr{ e^{\lambda X} e^{\lambda Y} } 
	& \stackrel{\text{\tiny (i)}}{\leq} \nrbr{\E\sqbr{e^{\lambda p X}}}^{1/p} \nrbr{\E\sqbr{e^{\lambda q X}}}^{1/q} \nonumber\\
	& \leq \exp\nrbr{\frac{\lambda^2\sigma^2_xp^2}{2} \frac{1}{p}} \exp\nrbr{\frac{\lambda^2\sigma^2_yq^2}{2} \frac{1}{q}} = \exp\nrbr{\frac{\lambda^2 \nrbr{\sigma^2_xp + \sigma^2_yq} }{2} } \label{eq:exp_pq}
	\end{align}
	where (i) uses H$\ddot{\text{o}}$lder's inequality where $\frac{1}{p} + \frac{1}{q} = 1$. To upper bound the above, we optimize with respect to variable $p$ and solve:
	\begin{align*}
	\max  f(p) = \max \nrbr{\sigma^2_xp + \sigma^2_yq} = \max \nrbr{\sigma^2_xp + \sigma^2_y \frac{p}{p-1}} 
	\end{align*}
	Taking the first order derivative: 
	\begin{align*}
	\frac{df(p)}{dp} = \sigma^2_x - \sigma^2_y \frac{1}{(p-1)^2} = 0 
	\end{align*}
	which gives $p = 1 + \frac{\sigma_y}{\sigma_x}$, and therefore $q = 1 + \frac{\sigma_x}{\sigma_y}$.  Substituting this in Eq. \eqref{eq:exp_pq}, we arrive at: 
	\begin{align*}
	\E\sqbr{e^{\lambda (X + Y)}} \leq \exp\nrbr{\frac{\lambda^2 \nrbr{\sigma^2_x +  \sigma_x \sigma_y + \sigma^2_y + \sigma_x \sigma_y } }{2} } = \exp\nrbr{\frac{\lambda^2 \nrbr{\sigma_x + \sigma_y }^2 }{2} } 
	\end{align*}
	which completes the proof for the general case. 
\end{proof}

\begin{lemma}
	\label{lem:prod_subG}
	Let $X \sim SG(0,1)$ and $Y \sim SG(0,1)$, then the product
	\begin{enumerate}
		\item $X  Y \sim SE(4\sqrt{2}, 4)$ if $X$ and $Y$ are independent 
		\item $X  Y \sim  SE(8\sqrt{2}, 4)$ if $X$ and $Y$ are dependent.  
	\end{enumerate}
	where $SG(\mu,\sigma_z)$ denotes a sub-Gaussian distribution with mean $\mu$ and parameter $\sigma_z$, and $SE(\nu,\alpha)$ denotes a sub-exponential distribution with parameters $\nu,\alpha$.
\end{lemma}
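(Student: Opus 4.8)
The plan is to reduce the claim to a Bernstein-type bound on the central absolute moments of the product. Recall that a mean-$\mu$ variable $Z$ satisfying $\mathbb{E}|Z-\mu|^k \le \tfrac12 k!\,\nu^2\alpha^{k-2}$ for every integer $k\ge 2$ is sub-exponential in the sense $\mathbb{E}[e^{\lambda(Z-\mu)}]\le e^{\nu^2\lambda^2/2}$ for $|\lambda|\le 1/\alpha$, which is exactly the $SE(\nu,\alpha)$ guarantee used elsewhere in the paper. Hence it suffices to control $\mathbb{E}|XY-\mathbb{E}[XY]|^k$ and read off the pair $(\nu,\alpha)$ from the resulting constants. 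This moment route is preferable to the conditioning argument $\mathbb{E}[e^{\lambda XY}]=\mathbb{E}_Y\mathbb{E}_X[e^{\lambda XY}\mid Y]\le\mathbb{E}_Y[e^{\lambda^2 Y^2/2}]$, because the latter needs a chi-square-type bound on the moment generating function of $Y^2$, which is itself an instance of this very lemma and also fails to center automatically in the dependent case.

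The first ingredient is a moment bound for a single parameter-$1$ sub-Gaussian. Starting from the tail bound $\mathbb{P}(|X|>t)\le 2e^{-t^2/2}$ and integrating $\mathbb{E}|X|^k=\int_0^\infty k t^{k-1}\mathbb{P}(|X|>t)\,dt$, the substitution $u=t^2/2$ yields $\mathbb{E}|X|^k\le k\,2^{k/2}\Gamma(k/2)$, and likewise for $Y$. I would then center the product via $\mathbb{E}|XY-\mathbb{E}[XY]|^k\le 2^k\,\mathbb{E}|XY|^k$, which follows from the $L^k$ triangle inequality together with $|\mathbb{E}[XY]|\le \norm{XY}_{1}\le \norm{XY}_{k}$.

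For the \emph{independent case} one has $\mathbb{E}|XY|^k=\mathbb{E}|X|^k\,\mathbb{E}|Y|^k\le k^2 2^k \Gamma(k/2)^2$; rewriting $\Gamma(k/2)^2$ through the Legendre duplication formula puts this in the Bernstein form $\tfrac12 k!\,\nu^2\alpha^{k-2}$, from which $(\nu,\alpha)=(4\sqrt2,4)$ is recovered after collecting constants. For the \emph{dependent case} the only change is that $X$ and $Y$ can no longer be separated inside the expectation, so I would replace the factorization by Cauchy--Schwarz, $\mathbb{E}|XY|^k\le (\mathbb{E}|X|^{2k}\,\mathbb{E}|Y|^{2k})^{1/2}$, and bound each factor with the single-variable estimate at exponent $2k$. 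This inflates the effective variance proxy by precisely the factor that sends $\nu=4\sqrt2$ to $8\sqrt2$, while leaving the geometric growth rate, and hence $\alpha=4$, unchanged. This dichotomy mirrors exactly the one already appearing in Lemma \ref{lem:sum_subG}, where dependence replaces $(\sigma_x^2+\sigma_y^2)^{1/2}$ by $\sigma_x+\sigma_y$.

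The main obstacle is the constant bookkeeping: matching the Gamma-function moment growth to the clean Bernstein form $\tfrac12 k!\,\nu^2\alpha^{k-2}$ requires the duplication identity and a uniform-in-$k$ comparison, and the radius $1/\alpha$ must be tracked so that the stated value $\alpha=4$ is valid on the whole range where the product's moment generating function is finite. The centering step in the dependent case, where $\mathbb{E}[XY]\neq 0$, must also be handled with care, since the uncentered moment generating function is not $1+O(\lambda^2)$ near the origin and so is not directly sub-exponential before subtracting the mean.
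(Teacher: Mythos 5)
Your route is genuinely different from the paper's. The paper never computes moments: it writes $XY = \frac{(X+Y)^2-(X-Y)^2}{4}$, applies Lemma \ref{lem:sum_subG} to get $X+Y \sim SG(0,\sqrt{2})$ (independent) or $SG(0,2)$ (dependent), invokes an external result (Lemma 8 of the cited work of Barik and Honorio) that the square of a unit sub-Gaussian is $SE(4\sqrt{2},4)$, and then combines the two squares with the dependent-case sum rule and rescales. Your Bernstein-moment route is self-contained and treats both cases uniformly, which is attractive; however, as written it does not deliver the parameters the lemma asserts, and those parameters are consumed literally downstream (e.g., $SE(8\sqrt{2}q_{ij},4q_{ij})$ inside Theorem \ref{thm:B_2} and Lemma \ref{lem:m1_bound}), so the constants are part of the statement being proved.

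Two steps in your plan each lose a factor in the $\alpha$ slot, and together they put the claimed value out of reach. First, your opening equivalence is wrong as stated: the Bernstein condition $\E|Z-\mu|^k \le \frac{1}{2}k!\,\nu^2\alpha^{k-2}$ yields $\E[e^{\lambda(Z-\mu)}] \le 1 + \frac{\lambda^2\nu^2}{2(1-|\lambda|\alpha)}$, which diverges as $|\lambda|\to 1/\alpha$; one only obtains a clean bound $e^{\nu'^2\lambda^2/2}$ on a strictly smaller interval, e.g.\ $|\lambda|\le 1/(2\alpha)$ with $\nu'^2 = 2\nu^2$. So Bernstein parameters $(\nu_0,b)$ convert to sub-exponential parameters roughly $(\sqrt{2}\nu_0,\,2b)$, and that factor of $2$ must be budgeted for. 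Second, the crude centering $\E|XY-\E[XY]|^k \le 2^k\,\E|XY|^k$ doubles the geometric growth rate of the moments, i.e.\ doubles $b$ again. Concretely, in the dependent case Cauchy--Schwarz gives $\E|XY|^k \le \nrbr{\E|X|^{2k}\,\E|Y|^{2k}}^{1/2} \le 2\cdot 2^k k!$; your centering inflates this to $2\cdot 4^k k!$, which forces $b\ge 4$ in the Bernstein form, and after the conversion loss you land at $SE(8\sqrt{2},8)$ --- a bound valid only for $|\lambda|\le 1/8$, strictly weaker than the claimed $SE(8\sqrt{2},4)$. In the independent case the same $2^k$ factor is not only harmful but unnecessary, since $\E[XY]=\E[X]\,\E[Y]=0$; applying it anyway degrades that case as well.

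Both losses are repairable, but the repairs are precisely the bookkeeping your proposal defers. In the independent case, drop the centering entirely and use the duplication formula with the monotone ratio bound $\Gamma(k/2)/\Gamma((k+1)/2)\le 2/\sqrt{\pi}$ to get $\E|XY|^k \le 4k\cdot k!$; this is Bernstein with $(\nu_0,b)=(4,3/2)$, which converts to $SE(4\sqrt{2},3)$ and hence implies the claimed $SE(4\sqrt{2},4)$. In the dependent case you must exploit $|\E[XY]|\le\sqrt{\E X^2\,\E Y^2}\le 1$ and center by binomial expansion, $\E\nrbr{|XY|+1}^k = \sum_{j=0}^{k}\binom{k}{j}\E|XY|^j \le 2e^{1/2}\, 2^k k!$, which keeps $b=2$ and, after conversion, gives $SE(\nu,4)$ with $\nu\le 8\sqrt{2}$. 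With those two substitutions your argument goes through and is arguably cleaner than the paper's (it avoids the external square lemma and the delicate dependent-sum step for sub-exponentials); without them, it proves sub-exponentiality of $XY$ but not the stated parameters.
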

\begin{proof}
	We first start with the case of mutually independent $X$ and $Y$. Their product can be expressed as: 
	\begin{align}
	XY = \frac{\nrbr{X + Y}^2- \nrbr{X - Y}^2 }{4}
	\end{align}
	So, we derive the distribution of $\nrbr{X + Y}^2$ and $\nrbr{X - Y}^2$. We use Lemma \ref{lem:sum_subG} to derive the distribution for the sum of a pair of independent random variables
	\begin{align*}
	X + Y \sim SG(0, \sqrt{2})
	\end{align*}
	Further, by scaling of sub-Gaussian random variables, we claim: 
	\begin{align*}
	\frac{X + Y}{\sqrt{2}} \sim SG(0, 1) 
	\end{align*}
	In the next step, we use Lemma 8 from \cite{barik2019provable} to derive the distribution of the square of a sub-Gaussian random variable: 
	\begin{align*}
	\nrbr{\frac{X + Y}{\sqrt{2}}}^2 \sim SE(4\sqrt{2}, 4)
	\end{align*}
	In a similar manner, we can claim the following for the difference of two sub-Gaussian random variables: 
	\begin{align*}
	\nrbr{\frac{X - Y}{\sqrt{2}}}^2 \sim SE(4\sqrt{2}, 4) 
	\end{align*}
	By scaling of sub-exponential random variables, we claim: 
	\begin{align*}
	\nrbr{X + Y}^2 &\sim SE(8\sqrt{2}, 8) \\
	\nrbr{X - Y}^2 &\sim SE(8\sqrt{2}, 8) 
	\end{align*}
	To derive the distribution of the sum of $\nrbr{X + Y}^2$ and $\nrbr{X - Y}^2$, we use Lemma \ref{lem:sum_subG} for dependent variables: 
	\begin{align*}
	\nrbr{X + Y}^2- \nrbr{X - Y}^2 \sim SE(16\sqrt{2}, 8) 
	\end{align*}
	Further, by scaling of sub-exponential random variables:
	\begin{align*}
	\frac{\nrbr{X + Y}^2-\nrbr{X - Y}^2}{4} &\sim SE(4\sqrt{2}, 2) 
	\end{align*}
	This completes the proof for the first claim of the lemma. Proceeding in a similar manner for the general case, we use Lemma \ref{lem:sum_subG} for dependent variables to claim the following: 
	\begin{align*}
	X + Y \sim SG(0, 2)
	\end{align*}
	Proceeding in a similar manner as done for the case of independent random variables, but now for dependent random variables, we arrive at: 
	\begin{align*}
	\frac{\nrbr{X + Y}^2-\nrbr{X - Y}^2}{4} &\sim SE(8\sqrt{2}, 4) 
	\end{align*}
\end{proof}
\subsection{Experiments}
\label{appn:exp}

\paragraph{Synthetic Data:}
Continuing the discussion in Section \ref{sec:exp} of the main manuscript, we present the experimental settings in more detail here. 

First we discuss the settings used for generating Figure \ref{fig:sigmax_0.1} shown in the main manuscript. We start with the data generation process: 
\begin{enumerate}
	\item We randomly generate the support $\mathcal{S}$ of size $k = 20$, and hence $\mathcal{S}^c = [p]\setminus\mathcal{S}$.
	\item We generate a random regression parameter vector, $\mathbf{w}^{\star}_{\mathcal{S}}$. We generate a random regression parameter vector by choosing $\mathbf{w}_i$ uniformly over $[-1, -0.1]\cup[0.1, 1]$ for $i \in \mathcal{S}$ and $\mathbf{w}_j = 0, \forall j \in \mathcal{S}^c$. 
	\item We generate the noise-free features, denoted by $\mathbf{x}^{\star} \in \mathbb{R}^p$. For the ease of analysis, we chose $\mathbf{x}_i^{\star} = \mathcal{N}(0,1), \forall i \in [p]$ and generate $n$ independent samples. The next step is to generate $y^{\star(j)}$ by using  $y^{\star(j)} = \mathbf{w}^{\star \intercal}\mathbf{x}^{\star(j)}$ for $j \in [n]$. 
	\item We corrupt the measurements using Eq. \eqref{eq:ex_assum}, where $e_y \sim \mathcal{N}(0, \sigma^2_1)$ and $\mathbf{e_x} \sim \mathcal{N}(\mathbf{0}, \sigma^2_2 \mathbf{I})$. We chose the values of $\sigma_1 = 0.05$ and $\sigma_2 = 0.1$. 
	\item Further we estimate the parameter vector, denoted by $\hat{\mathbf{w}}$ using LASSO and check if $\mathcal{S}(\hat{\mathbf{w}}) =  \mathcal{S} (\mathbf{w}^{\star})$ by setting $\lambda$ twice of the lower bound derived in Eq. \eqref{eq:lambda_f1}.
	\item We repeat the above five steps 200 times and count the number of success for $\mathcal{S}(\hat{\mathbf{w}}) = \mathcal{S} (\mathbf{w}^{\star})$ in step 5, which helps to compute the probability of success. 
	\item We repeat the above six steps  for different values of $n$ for a given value of $p$. We consider a rescaled sample size $\frac{n}{\log(p)} \in \{a, b\}$, where $a$, $b$ are some constants. 
	\item We repeat all the seven steps for different values of $p \in \crbr{128, 256, 512}$. 
\end{enumerate}
Note that for the plot in Figure \ref{fig:sigmax_0.1}, we have $a = 25$ and $b = 1250$. The plot basically shows that the probability of support recovery increases as we increase the number of samples. Note that the probability reaches $1$ when the rescaled sample size $\frac{n}{\log(p)} = 1150$. More importantly, the plot for each value of $p$ overlaps which confirms the hypothesis of sample complexity being logarithmic in the dimension of the regression vector. 

\begin{figure}
	\centerline{\includegraphics[width=0.5\columnwidth]{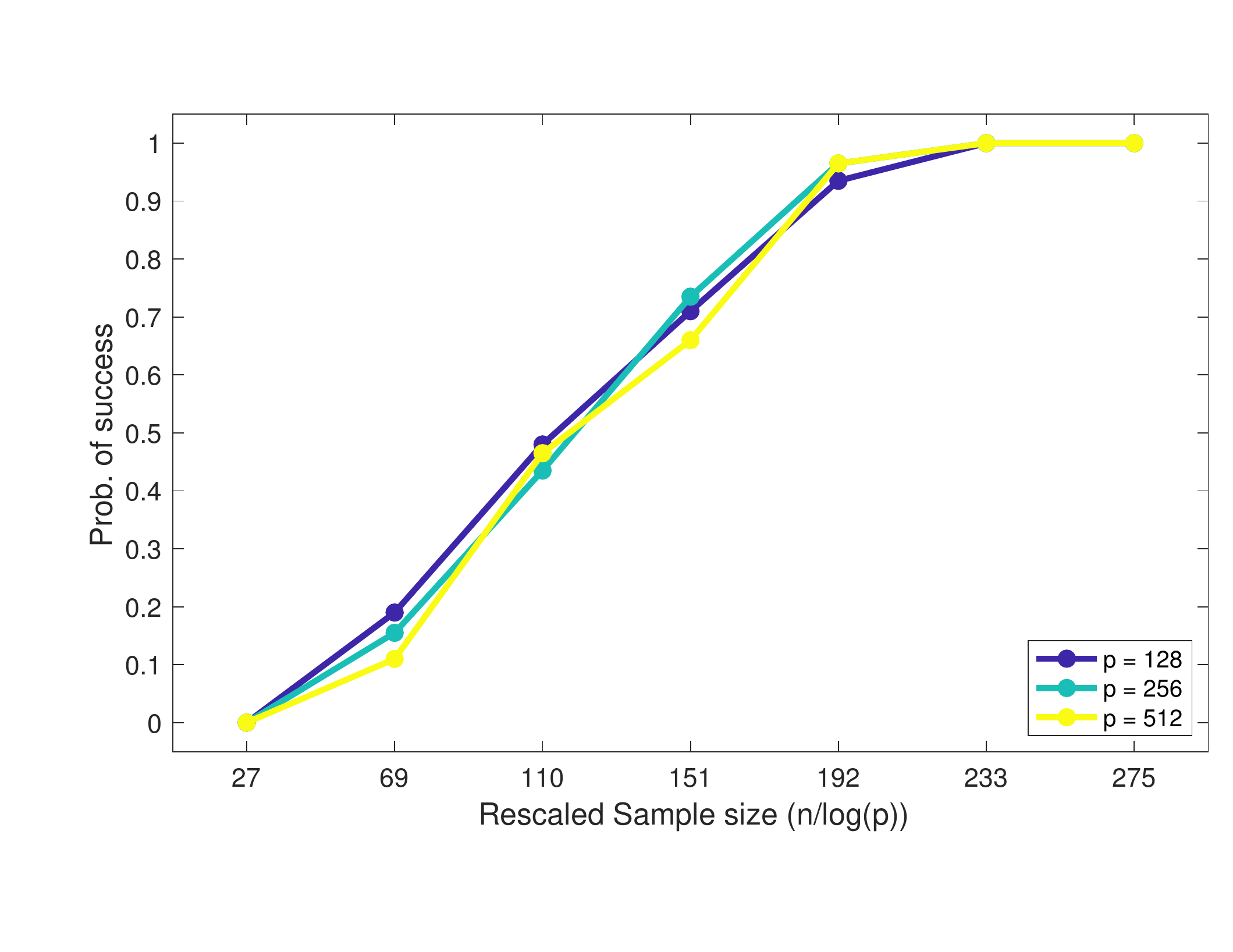}}
	\caption{Probability of support recovery vs rescaled sample size for $\mathbf{e_x} = \mathbf{0}$}
	\label{fig:sigmax_MW}
\end{figure}

We compare our results to the classical support recovery problem with no adversarial attack \cite{wainwright2009sharp} by making $\mathbf{e_x} = \mathbf{0}$ via $\sigma_2 = 0$ in our experiments. We repeat the above experiment for the same value $k$ and $p$ with different values of sample size. The results are presented in Figure \ref{fig:sigmax_MW}, which shows the similar trajectory as in Figure \ref{fig:sigmax_0.1}. The key difference is that we reach the probability of $1$ in Figure \ref{fig:sigmax_MW} when the rescaled sample size $\frac{n}{\log(p)} = 275$ which was $1150$ for Figure \ref{fig:sigmax_0.1}. Comparison of Figure \ref{fig:sigmax_0.1} and Figure \ref{fig:sigmax_MW} helps us to understand the effect of an adversary. 
\begin{figure}[H]
	\centerline{\includegraphics[width=0.5\columnwidth]{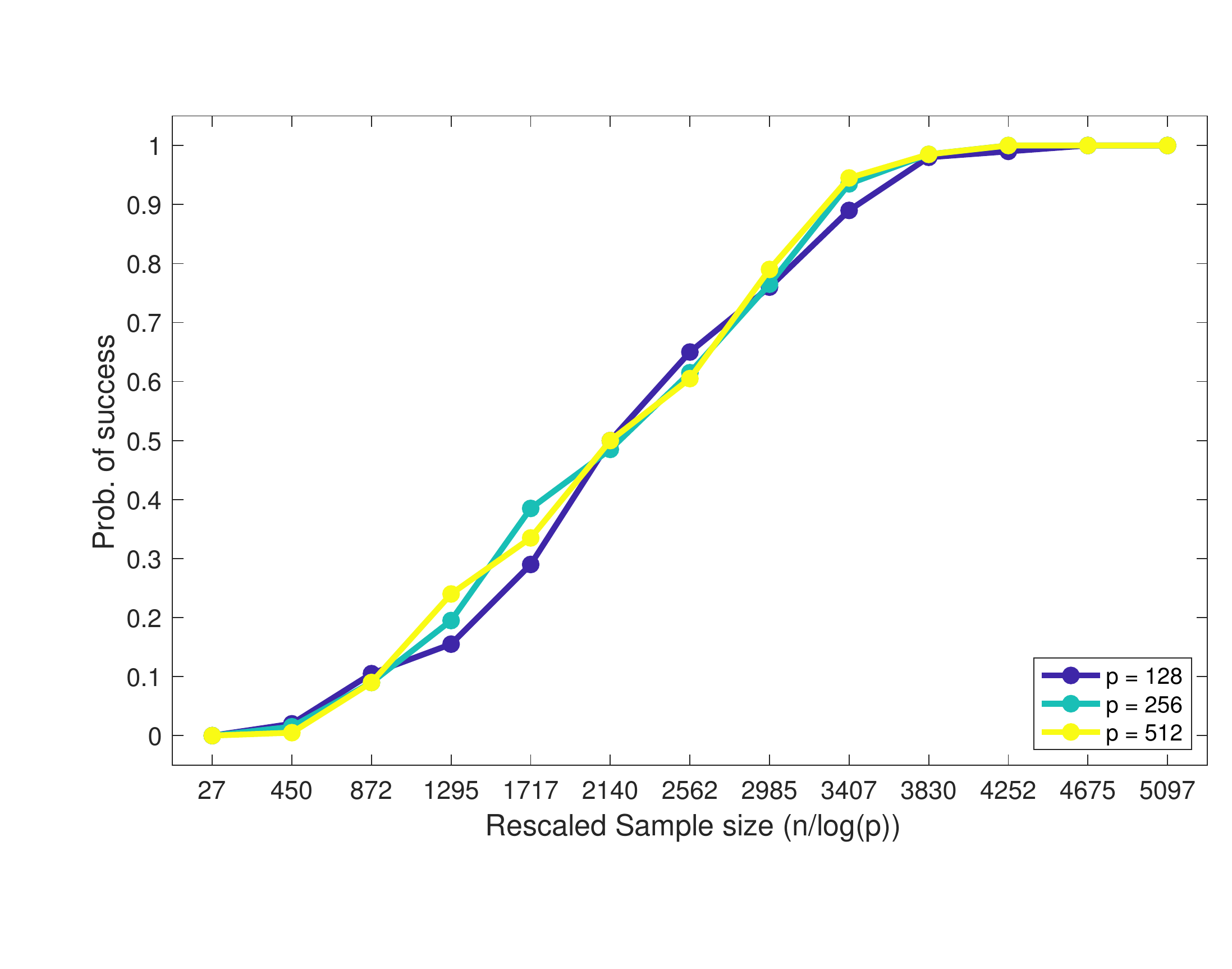}}
	\caption{Probability of support recovery vs rescaled sample size for $\sigma_2 = 0.2$}
	\label{fig:sigmax_02}
\end{figure}

In order to understand the effect of $\mathbf{e_x}$, we increase $\sigma_2 = 0.2$ in the step 4 of the procedure mentioned above. Note that we have doubled $\sigma_2$ as compared to our default analysis. From Eq. \eqref{eq:lambda_f1}, we can observe that for a constant lower bound on $\lambda$, the sample size ($n$) has to increase linearly with $\lmod \mathbf{\Sigma}_{\mathbf{e}_{\mathcal{S} \mathcal{S}}} \rmod_{2}$. We are discussing the case of constant lower bound because $\min\limits_{i \in \mathcal{S}}\left|\mathbf{w}_i^{\star}\right| \geq 2 f\left(\lambda\right) $ as per the fourth claim of Theorem \ref{thm:main}. As we double $\sigma_2 $, $\lmod \mathbf{\Sigma}_{\mathbf{e}_{\mathcal{S} \mathcal{S}}} \rmod_{2}$ will increase $4$ times and hence the minimum the number of samples required should also increase $4$ times. 

We actually observe this phenomena in our experiments. The results for $\sigma_2 = 0.2$ are presented in Figure \ref{fig:sigmax_02}. Note that the probability of success reaches 1 when the rescaled sample size $\frac{n}{\log(p)} = 4600 \approxeq 4 \times 1150$ in Figure \ref{fig:sigmax_02} which is four times the rescaled sample size needed $(1150)$ for success probability one for the case of  $\sigma_2 = 0.1$ presented in Figure \ref{fig:sigmax_0.1}. Hence our theoretical claim is justified empirically.

We further conduct experiments with more complicated forms of adversarial perturbation ($\mathbf{e}_{\mathbf{x}}$). These cases are discussed below.
\paragraph{Mixture of two distributions:} The adversarial perturbation for $j^{\text{th}}$ sample is chosen as a combination of Bernoulli distribution and Gaussian distribution as shown below: 
\begin{align}
\mathbf{e}^{(j)}_{\mathbf{x}} = \frac{r\mathbf{v}^{(j)}}{\norm{\mathbf{v}^{(j)}}_2}, \quad 
\text{where } \quad \mathbf{v}^{(j)} \sim \begin{cases} \mathbf{v}^{(j)}_i \sim 2\text{Bernoulli}(0.5) - 1 & \text{with probability } 0.5 \text{ for } i \in [p]\\
\mathcal{N}(\mathbf{0}, \mathbf{I}) & \text{with probability } 0.5 \\
\end{cases} \label{eq:advgen_comb}
\end{align}
where $r$ denotes the per sample budget for adversarial perturbation and $j \in [n]$. Compared to the previous case of all adversarial samples being drawn from Gaussian distribution, now $50\%$ of the samples will be drawn from scaled Bernoulli distribution such that each entry is $+1$ or $-1$ with equal probability. As Bernoulli distribution is bounded, we can claim it is sub-Gaussian, and the final distribution of $\mathbf{e}_\mathbf{x}$  is sub-Gaussian. Note that $\mathbf{e}_\mathbf{x}$ is designed in such a way that $\norm{\mathbf{e}^{(j)}_\mathbf{x}}_2 = r$ to respect the budget constraint. We chose $r = 0.1$ in our simulations. 

After generating the adversarial perturbation, we repeat the same exercise as described previously and present the plot for the probability of support recovery in Figure \ref{fig:Gaussbern_c1}. The plot confirms that the proposed algorithm does successful support recovery and also confirms that the sample complexity is logarithmic with respect to the size of the regression parameter vector. Further, we move to another method for adversarial perturbation generation. 
\begin{figure}[H]
	\centerline{\includegraphics[width=0.5\columnwidth]{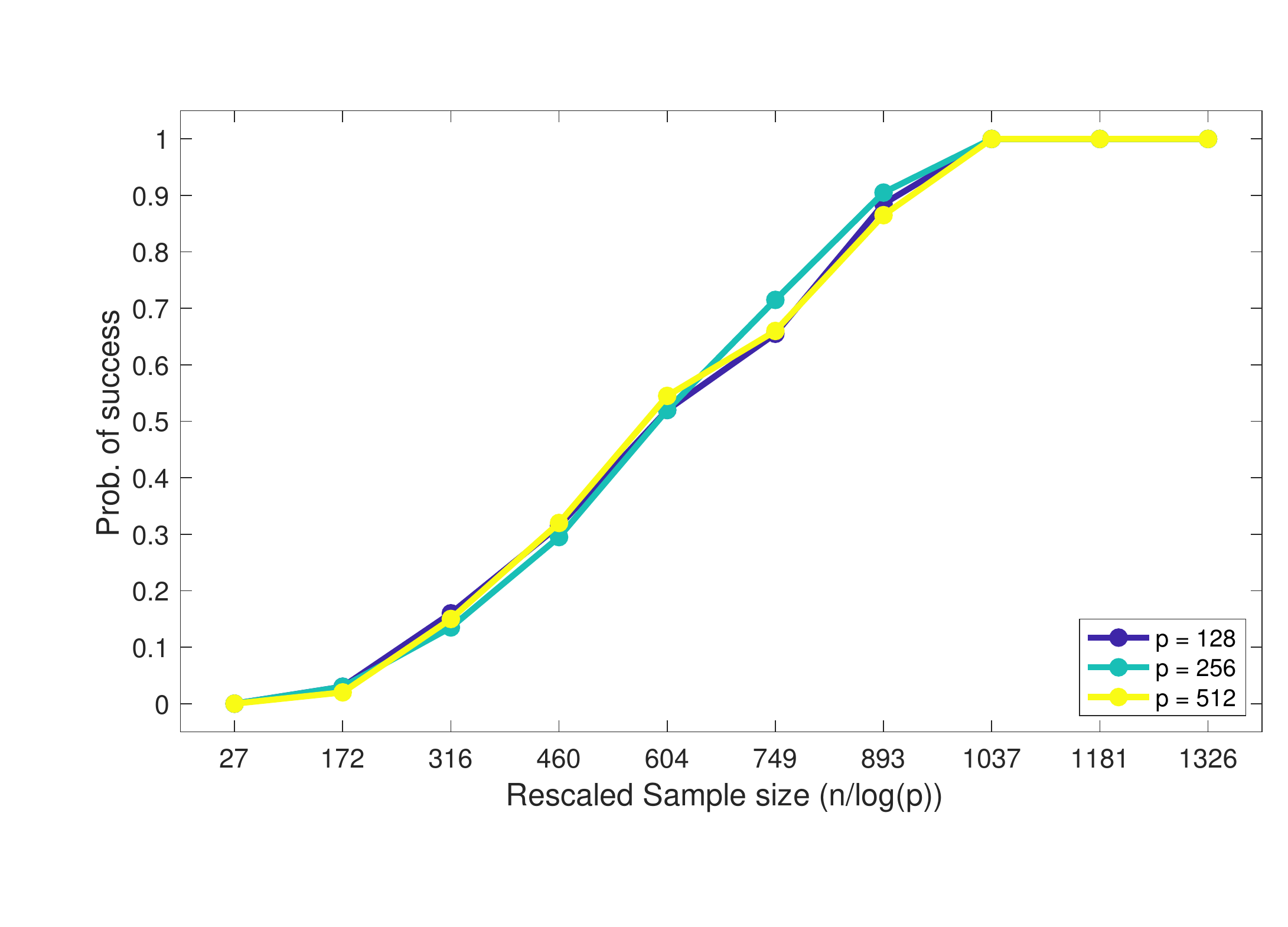}}
	\caption{Probability of support recovery vs rescaled sample size when adverarial perturbations are drawn from mixture of Bernoulli distribution and Gaussian distribution}
	\label{fig:Gaussbern_c1}
\end{figure}

\paragraph{Correlated with uncorrupted data:} In this approach, we design the adversarial perturbation in such a way that it is correlated with uncorrupted regressors ($\mathbf{x}^{\star}$) in $50\%$ of the samples as shown below: 
\begin{align}
\mathbf{e}^{(j)}_{\mathbf{x}}  = \frac{r\mathbf{v}^{(j)}}{\norm{\mathbf{v}^{(j)}}_2}, \quad
\text{where } \qquad 
\mathbf{v}^{(j)} \sim \begin{cases} \nrbr{2\text{Bernoulli}(0.5) - 1}\mathbf{x}^{\star}& \text{with probability } 0.5\\
\mathcal{N}(\mathbf{0}, \mathbf{I}) & \text{with probability } 0.5 \\
\end{cases} \label{eq:advgen_corr}
\end{align}
where $j \in [n]$. The above equation basically indicates the adversarial perturbation may be positively or negatively correlated with uncorrupted regressors with the probability of $0.5$. We further repeat the experiment as discussed at the beginning of Section \ref{sec:exp} and present the support recovery plot in Figure \ref{fig:corr_c2}. The plot verifies that the algorithm can successfully recover the support even when the adversarial perturbation is correlated with uncorrupted features. 

\begin{figure}[H]
	\centerline{\includegraphics[width=0.5\columnwidth]{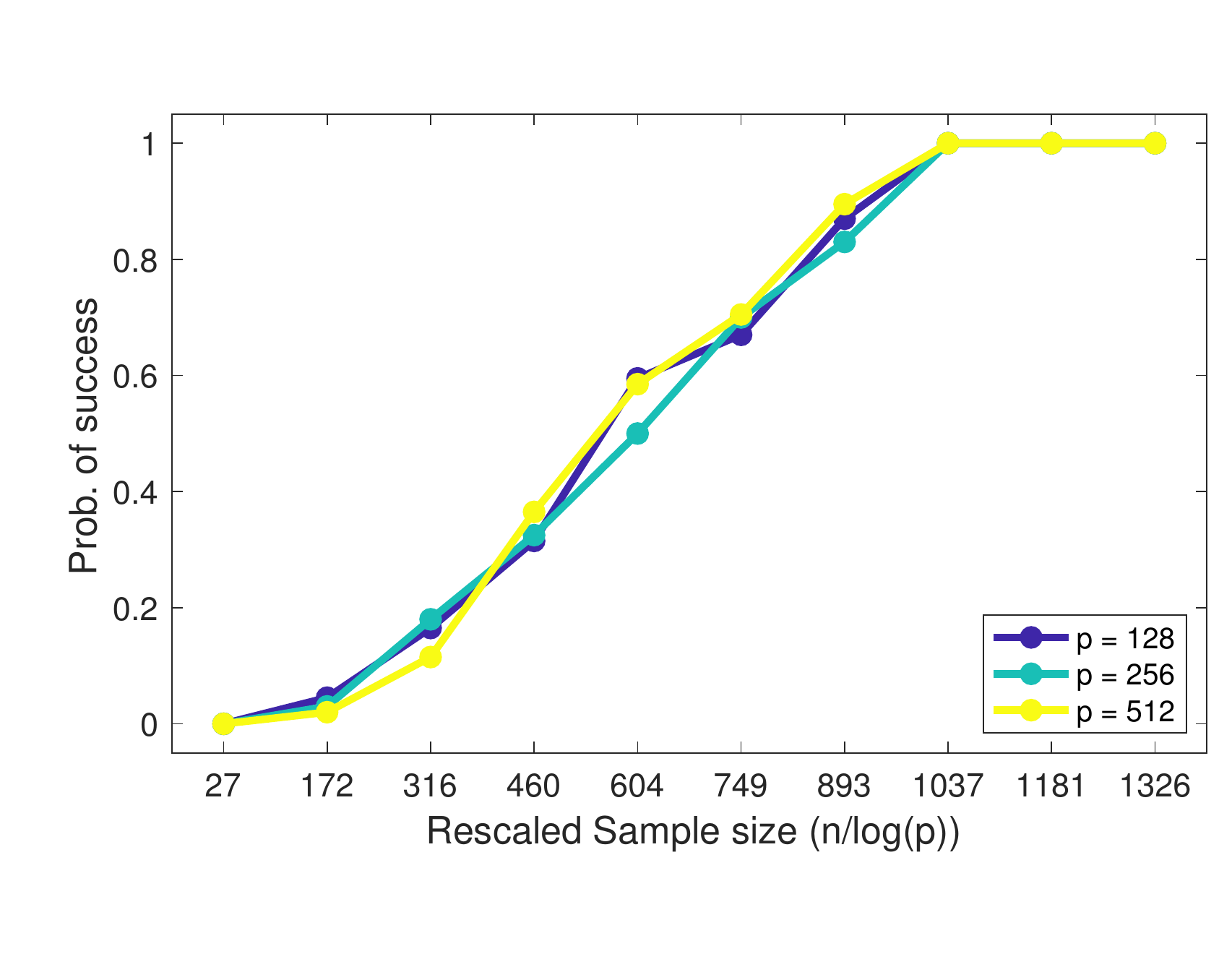}}
	\caption{Probability of support recovery vs rescaled sample size when adversarial perturbations are correlated with uncorrupted regressors}
	\label{fig:corr_c2}
\end{figure}

\paragraph{Real-World Data:}
We used the BlogFeedback dataset \cite{buza2014feedback} which contains 52397 samples and 276 features extracted from blog posts and the task is to predict how many comments a post will receive using these features. 

First, the ``true'' support is obtained by solving LASSO on the original Blogfeedback dataset \cite{buza2014feedback}. Let the ``perturbed'' support be defined as the support obtained by solving LASSO on the perturbed Blog-Feedback dataset. To construct the perturbed dataset, we add zero mean Gaussian white noise in each feature. The variance of Gaussian noise is chosen in proportion to the feature variance of the original data. After obtaining the ``true'' and ``perturbed'' support, we compute the standard F1-score defined below: 
\begin{align}
\text{Recall} & = \frac{\text{Number of elements in the ``true'' support that are in the ``perturbed'' support}}{\text{Number of elements in the ``perturbed'' support}}\\
\text{Precision} &= \frac{\text{Number of elements in the ``true'' support that are in the ``perturbed'' support}}{\text{Number of elements in the ``true'' support}}\\
\text{F1-score} &= 2 \frac{ \text{Recall} \times  \text{Precision}}{\text{Recall} + \text{Precision}} \label{eq:F1-score}
\end{align}
The F1-score of the recovered support from the perturbed data is $0.9462$, which effectively implies that the proposed approach is able to recover most of the support in real-world data as well. Further, we test the algorithm against other approaches of generating adversarial perturbation. 

We modify the approach of a mixture of two distributions in Eq. \eqref{eq:advgen_comb} by scaling with standard deviations in regressors to handle large variations in regressors as shown below:
\begin{align}
\mathbf{e}^{(j)}_{\mathbf{x}} = \frac{r\mathbf{v}^{(j)}}{\norm{\mathbf{v}^{(j)}}_2}, \quad 
\text{where } \quad \mathbf{v}^{(j)} \sim \begin{cases} \mathbf{v}^{(j)}_i \sim (2\text{Bernoulli}(0.5) - 1)\text{std}(\mathbf{x}^{\star}_i) & \text{with probability } 0.5 \text{ for } i \in [p]\\
\mathcal{N}(\mathbf{0}, \mathbf{\Sigma}) & \text{with probability } 0.5 \\
\end{cases} \label{eq:advgen_combreal}
\end{align}
where $j \in [n]$. The F1-score is reported to be $0.9393$ for $r = 1000$, proving that the algorithm can recover the support. 

We further test the algorithm against the correlated adversarial perturbation by modifying Eq. \eqref{eq:advgen_corr} to handle large variations in regressors as shown below: 
\begin{align}
\mathbf{e}^{(j)}_{\mathbf{x}}  = \frac{r\mathbf{v}^{(j)}}{\norm{\mathbf{v}^{(j)}}_2}, \quad
\text{where } \qquad 
\mathbf{v}^{(j)} \sim \begin{cases} \nrbr{2\text{Bernoulli}(0.5) - 1}\mathbf{x}^{\star}& \text{with probability } 0.5\\
\mathcal{N}(\mathbf{0}, \mathbf{\Sigma}) & \text{with probability } 0.5 \\
\end{cases} \label{eq:advgen_corr1}
\end{align}
where $j \in [n]$. We repeat the experiment in the same procedure and report the F1-score to be $0.9485$, which confirms that the proposed algorithm performs successful support recovery even when the adversarial perturbation is correlated with uncorrupted regressors. Note that F1-score is reported to be 1 in all the cases if we do not use the standard deviation scaling to normalize the adversarial perturbation. Hence, by modifying the procedure of adversarial perturbation introduction, we are solving a more challenging problem.

Note that we do not need to verify the assumptions mentioned in Section \ref{sec:assumpt} to run the algorithm. They are only needed for theoretical analysis to derive the sample complexity for support recovery. 

\subsection{Gaussian Adversarial Error}
\label{sec:proof_Gaussadv}
In this section, we prove that the sample complexity for Gaussian adversarial perturbation improves to $\Omega(k\log(p))$ as compared to the sub-Gaussian case where it is $\Omega(k^2\log(p))$ as presented in Theorem \ref{thm:main}. 
Since $\mathbf{x}^{\star} \sim \mathcal{N}(\mathbf{0}, \mathbf{\Sigma})$ and $\mathbf{e_x} \sim \mathcal{N}(\mathbf{0}, \mathbf{\Sigma}_{\mathbf{e}})$, we can claim that $\mathbf{x} \sim \mathcal{N}(\mathbf{0}, \mathbf{\Sigma}^a)$, where 
\begin{align*}
\mathbf{\Sigma}^a = \mathbf{\Sigma} + \mathbf{\Sigma}_{\mathbf{e}} 
\end{align*}
The first step is to verify the strict dual feasibility condition by bounding the infinity norm of $\hat{\mathbf{{z}}}_{\mathcal{S}^c}$ defined in Eq. \eqref{eq:Zsc_def}. For the case of the Gaussian distribution, we can 
express $\mathbf{X}_{\mathcal{S}^c}$ in Eq. \eqref{eq:Zsc_def} in terms of $\mathbf{X}_{\mathcal{S}}$ using the conditional expectation of jointly normal distribution:
\begin{align}
\mathbf{X}_{\mathcal{S}^c}^{\intercal} = \mathbf{\Sigma}^a_{\mathcal{S}^c \mathcal{S}} \left( \mathbf{\Sigma}^a_{\mathcal{S} \mathcal{S}}\right)^{-1}\mathbf{X}_{\mathcal{S}}^{\intercal} + \mathbf{E}^{\intercal}_{\mathcal{S}^c} \label{eq:Xsc_cond}
\end{align} 
where $\mathbf{E}_{\mathcal{S}^c}(i,j) \sim \mathcal{N}(0, [\mathbf{\Sigma}^a_{\mathcal{S}^c|\mathcal{S}}]_{jj})$ and  
\begin{align}
\mathbf{\Sigma}^a_{\mathcal{S}^c|\mathcal{S}} = \mathbf{\Sigma}^a_{\mathcal{S}^c \mathcal{S}^c} -  \mathbf{\Sigma}^a_{\mathcal{S}^c \mathcal{S}} \left( \mathbf{\Sigma}^a_{\mathcal{S} \mathcal{S}}\right)^{-1}\mathbf{\Sigma}^a_{\mathcal{S} \mathcal{S}^c} 
\end{align}

This simplifies the expression of $\hat{\mathbf{{z}}}_{\mathcal{S}^c}$ to: 
\begin{align}
\hat{\mathbf{{z}}}_{\mathcal{S}^c} = \mathbf{\Sigma}^a_{\mathcal{S}^c \mathcal{S}} \left( \mathbf{\Sigma}^a_{\mathcal{S} \mathcal{S}}\right)^{-1}\hat{\mathbf{{z}}}_{\mathcal{S}} + \mathbf{E}^{\intercal}_{\mathcal{S}^c} \left\{ \mathbf{X}_{\mathcal{S}} \left(\mathbf{X}_{\mathcal{S}}^{\intercal} \mathbf{X}_{\mathcal{S}}\right)^{-1} \hat{\mathbf{{z}}}_{\mathcal{S}}  + \mathbf{P} \frac{\left( \mathbf{e}_y - \mathbf{E_{x_{\mathcal{S}}}} \mathbf{w}^{\star}_{\mathcal{S}} \right)  }{\lambda n}\right\}  \label{eq:Gass_tmp1}
\end{align}
The first term can be bounded using mutual incoherence assumption. The second term is similar to Eq. 37(a) in \cite{wainwright2009sharp} and can be bounded in $\mathcal{O}\nrbr{k \log(p)}$ samples using the same approach Gaussian tail bounds and $\chi^2$ tail bounds (Appendix J in \cite{wainwright2009sharp}). This will ensure strict dual feasibility. Similarly, the uniqueness of the solution can be claimed in $\mathcal{O}\nrbr{k \log(p)}$ samples by using Lemma 9 from \cite{wainwright2009sharp}. 

For bounding  $\norm{\hat{\mathbf{w}}_{\mathcal{S}}- \mathbf{w}_{\mathcal{S}}^{\star}}_{\infty}$, we need to bound $\indinf{\mathbf{A}^{-1}}$ in $\mathcal{O}\nrbr{k \log(p)}$ samples, where $\mathbf{ A}$ is defined in Eq. \eqref{eq:w2_def}. 
This bound took $\mathcal{O}\nrbr{k^2 \log(p)}$ samples for the sub-Gaussian case. It can be bounded in $\mathcal{O}\nrbr{k \log(p)}$ samples for Gaussian case by using Lemma 5 of \cite{wainwright2009sharp}. Bounds for $\norm{\mathbf{w_1}}_{\infty}$ and $\norm{\mathbf{w_2}}_{\infty}$ can be guaranteed with high probability by choosing appropriate value of $\lambda$. Hence the sample complexity is $\mathcal{O}\nrbr{k \log(p)}$.

\end{document}